\renewcommand{\paragraph}[1]{\vspace{-0.2mm}\noindent{{\bf #1\ \ \ }}}
\definecolor{darkgreen}{rgb}{0.0, 0.5, 0.0}
\definecolor{lightgreen}{rgb}{0.7, 1.0, 0.7}
\newcommand{\epara}[1]{}
\setlist{nosep}
\g@addto@macro{\normalsize}{%
\setlength{\abovedisplayskip}{2pt plus1pt}%
\setlength{\abovedisplayshortskip}{2pt plus1pt}%
\setlength{\belowdisplayskip}{2pt plus1pt}%
\setlength{\belowdisplayshortskip}{2pt plus1pt}}
\let\c@table\c@figure
 \newcommand{\Ps}{\mathcal{P}_{< \infty}}
\newcommand{\lms}{ \{\!\!\{ }
\newcommand{\rms}{ \}\!\!\} }
\definecolor{cobalt}{rgb}{0.0, 0.28, 0.67}
\newcommand{\mas}{MAS\xspace}
\newcommand{\inexact}{inexact\xspace}
\title{Monotone and Separable Set Functions: Characterizations and Neural Models}
\author{%
  Soutrik Sarangi$^*$  \\
  IIT Bombay\\
\And
  Yonatan Sverdlov$^*$ \\
  Technion \\
 \And
  Nadav Dym \\
  Technion\\
\And  
  Abir De \\
IIT Bombay
}
\begin{document}
\def\thefootnote{*}\footnotetext{\small Soutrik and Yonatan contributed equally.
Contact emails of the authors: \texttt{soutriksarangi14@gmail.com,  yonatans@campus.technion.ac.il}, \texttt{nadavdym@technion.ac.il, abir@cse.iitb.ac.in}}
\maketitle

\begin{abstract}

Motivated by applications for set containment problems, we consider the following fundamental problem: can we design set-to-vector functions so that the natural partial order on sets is preserved, namely $S\subseteq T \text{ if and only if } F(S)\leq F(T) $. We call functions satisfying this property \emph{Monotone and Separating (MAS)} set functions.  
We establish lower and upper bounds for the vector dimension necessary to obtain MAS functions, as a function of the cardinality of the multisets and the underlying ground set. In the important case of an infinite ground set, we show that MAS functions do not exist, but provide a model called \our\ which provably enjoys a relaxed MAS property we name ``weakly MAS'' and is stable in the sense of Holder continuity. We also show that MAS functions can be used to construct universal models that are monotone by construction and can approximate all monotone set functions. 
Experimentally, we consider a variety of set containment tasks. The experiments show the benefit of using our \our\ model, in comparison with standard set models which do not incorporate set containment as an inductive bias. Our implementation is available in \url{https://github.com/structlearning/MASNET}.
\end{abstract}

\section{Introduction}
\label{sec:Intro}
A multiset $\lms x_1,\ldots,x_n \rms$ is an unordered collection of vectors, where order does not matter (like sets) and repetitions are allowed (unlike sets). In recent years, there has been increased interest in neural networks that can map multisets to vectors, with applications for physical simulations \cite{huang2021geometrybackflowtransformationansatz}, processing point clouds \cite{qi2017pointnet}, and graph neural networks \cite{xu2018how}. 
Another important application of multiset-to-vector maps is set-containment search~\cite{roy2023locality,singh2020explaining,engels2023dessert}. Here the goal is to check whether a given multiset $S$ is (approximately)  a subset of $T$, and this is often carried out by learning a multiset-to-vector mapping $F$, and then checking whether  $F(S)\leq F(T)$ (element-wise), in which case, one deduces that $S$ is a subset of $T$. 
In this paper, we look into this problem from a theoretical perspective.

\paragraph{Monotone and Separable functions}We begin our analysis with some definitions: we say that a function $F$ mapping sets to vectors is \emph{monotone} if  $S \subseteq  T \subseteq V$ implies that $F(S)\leq F(T)$, and we will say that $F$ is \emph{separable} if $F(S)\leq F(T)$ implies that $S \subseteq T$. When  $F$ is simultaneously \emph{monotone and separable}, we call it a MAS function. When $F$ is a MAS function, we can safely test whether $F(S)\leq F(T) $, and this will be fully equivalent to checking whether $S\subseteq T$. 

\paragraph{Motivation for MAS functions}It is natural to ask why we're interested in functions that are MAS instead of just being monotone or just separable. For this, we begin with a couple of real-life applications based on the set-containment task: (I) In recommendation system design, one problem is to recommend item having a particular set of features, $S$. Here, we represent each item as a set of corresponding features $T$, and the problem is to find all sets $T$ such that approximately, $S \subseteq T$. (II) Text entailment where we are given a small query sentence $q$ nd the goal is to find the set of corpus items $c$ from a large corpus $C$, where $q \implies c$ or $c$ entails $q$. Here, $q$ and $c$ are typically represented as sets of contextual embeddings ($S$ for $q$ and $T$ for $c$) and the entailment problem can be cast as the problem of checking if $S \subset T$.

In a neural network setting, such problems require us to design a function $F$ such that the "order" between $F(S)$ and $F(T)$ can serve as a boolean test for whether $S$ is a subset of $T$. For this, $F$ needs to satisfy two conditions: (A) $F(S) \leq F(T)$ implies $S \subseteq T$ (B) $F(S) \not \leq F(T)$ implies $S \not \subseteq T$. Here, we like to emphasize that, for high accuracy, $F$ needs to satisfy both conditions A and B as above. Monotone functions satisfy condition (B) without necessarily satisfying condition (A). As a result, if we predict $S \subseteq T$ based on $F(S) \leq F(T)$ when $F$ is monotone but not separable, it will give large number of false positives. Similarly, if $F$ is separable but not monotone, then $F$ satisfies condition (A) without satisfying condition (B). Thus, using such for checking $S \subseteq T$ results in a large number of false negatives. Hence, both monotonicity and separability are necessary to develop an accurate test for set-containment

\paragraph{Related works}
Monotone set functions have been widely studied in the theory of capacities~\cite{choquet1953capacities}, fuzzy measures~\cite{agahifuzzy,monotonefuzzy}, game theory and economics~\cite{grabisch_set_func_game}, combinatorial auctions~\cite{lehmann2001combinatorial,dobzinski2005approximation,feige2009maximizing}, and learning theory~\cite{balcan2012learning,you2017deep,blum1988training,odonnell2003learning,bshouty1993exact,li2025monotonic}. These works primarily focus on scalar-valued set functions or vector-valued models without structural order constraints.

To the best of our knowledge, jointly monotone and separable multiset (MAS) functions, i.e., multiset-to-vector maps that preserve the partial order induced by multiset containment—have not been previously formalized. Classical results on partially ordered sets, such as order dimension~\cite{dushnik1941partially}, provide relevant mathematical background but do not address learnable or differentiable constructions. Our goal is to characterize the existence of MAS functions and develop models that satisfy MAS constraints by construction for multiset containment tasks.

This work is also related to injective multiset representations. Foundational results on permutation-invariant functions include~\cite{zaheer2018deepsets,xu2018how}, with subsequent work analyzing the latent dimension required for injectivity~\cite{amir_finite,wagstaff2019limitations,wang2023polynomial} and proposing differentiable injective or bi-Lipschitz embeddings~\cite{amir2024fourierslicedwassersteinembeddingmultisets,balan2022permutation,sverdlov}. In this work, we study MAS functions, which impose strictly stronger constraints than injectivity (see Subsection~\ref{subsec:inj}).

 An additional goal of this work is stability: in most learning scenarios, we are looking for $S$, which is only approximately a subset of $T$. Accordingly, we would like to design functions $F$ that are not only MAS but also stable, in the sense that when $S$ is approximately a subset of $T$, then  $F(S)$ is approximately dominated by $F(T)$. 

\paragraph{Summary of our goal}
In summary, in this work, we aim to characterize (\mas) set functions that output finite-dimensional set representations and subsequently design neural networks for such functions.  At a high level, we seek to address the following questions:
\textbf{(1)} What are the conditions for existence of a finite dimensional \mas functions?
\textbf{(2)} What are the permissible relaxations to monotonicity and separability if the existential conditions identified in (1) are not satisfied?
\textbf{(3)} What are the possible neural architectures for these functions? \textbf{(4)} Are these functions stable?
We expect that the design of trainable models with built-in guarantees of monotonicity, separability, and stability will significantly enhance the inductive bias for set containment applications. 

\vspace{-4pt}
\subsection{Main Results}
\vspace{-1mm}
 
We address the goal specified in the previous subsection by providing a detailed characterization and neural architecture for monotone and separable set functions. Our main results are as follows: 

\paragraph{Existential characterization of \mas functions}
Our first objective is to determine whether it is possible to obtain a \mas\ set function with finite output dimension $m$. We begin our analysis in the case where set elements are assumed to be taken from a finite ground set $V$, and show that in this case a \mas function exists if, and only if,  $m \geq |V|$. Next, we add the assumption that the cardinality of the input multisets is bounded by $k$, and provide lower and upper bounds for $m$ in this case. When the ground set is infinite,  we show that  MAS set functions do not exist.

\paragraph{Weakly MAS functions} 
In most applications, the ground set is $V = \mathbb{R}^D$, and our analysis shows that in this case, MAS  functions do not exist in general. To address this, we introduce the notion of weakly MAS functions by extending the set functions to parametric set functions $F(S, w)$, where $w$  is a parameter. Weakly  MAS functions requires that (i) $F(:,w)$ is monotone, for all $w$, and (ii) for any $S \not\subset T$, there exists at least one $w \in \mathcal{W}$ such that $F(:, w)$ separates $S$ and $T$. We explain how, by choosing suitable activations, it is possible to construct simple deep sets \cite{zaheer2018deepsets} models which are weakly MAS, and name the resulting model \our.

\paragraph{Stability}
We define the notion of stability as discussed above, separability notions are Boolean and fail to capture graded distance. We would like to guarantee that $F(T)$ is 'almost' larger than $F(S)$ when $S$ is 'almost' a subset of $T$. To address this, we propose a novel asymmetric set distance and present a Holder separability condition~\cite{davidson2025holderstabilitymultisetgraph} which ensures stability in terms of this asymmetric distance.

\paragraph{Monotone Functions} Finally, we show that  MAS functions can be used to provide universal models for computing set-to-vector functions. Thus, MAS functions may be a useful concept also outside of the set containment setting. 

\paragraph{Experiments}  We provide a number of experiments showing that for set containment problems, using our weakly MAS model leads to stronger results in comparison with standard multiset models like DeepSets and SetTransformer, which do not incorporate MAS considerations as an inductive bias.   

\paragraph{Summary of contributions} Our {main contributions} in this paper are:
\begin{enumerate*}[label=\textbf{(\arabic*)}]
   \item We introduce the novel notion of \emph{monotone and separating} (MAS) multiset  functions.
   \item We discuss lower and upper bounds for the embedding dimension for MAS functions, and suggest a model \our, which is weakly MAS.
   \item We show the stability of \our.
   \item Experimentally, we prove the effectiveness of \our for set containment tasks.
\end{enumerate*}
\vspace{-5pt}
\section{Existential characterization of MAS functions}\label{sec:exist}
\vspace{-4pt}
We begin by stating the notation we will use for our paper.

\paragraph{Notation} We denote by $V$ to be the ground set and  $\Ps(V)$ to be  the  collections of all multisets from $V$ with finite cardinality $S=\lms x_1, \cdots, x_s\rms$  where $x_i\in V$.  We denote the space of all multisets with at most $k$ elements by $\Pvk$. 
We use $[d]$ to denote the set $\set{1, \cdots, d}$. Given two vectors $v, u \in \R^d$, we say $v\le u$  if $v[i] \le u[i]$ for all $i\in [d]$. Also, for an element $v \in V$ and a multiset $S \in \Ps(V)$, we use $c_S(v)$ to denote the number of times $v$ occurs in $S$. We say $S\subseteq T$ if $c_S(v) \le c_T(v)$ for all $v \in V$. 

\paragraph{Proofs}  All proofs for the results stated in the paper are provided in Appendix~\ref{app:proofs}.

In this section, our goal is to understand when an MAS function $F:\Pvk \to \RR^m$ or $F:\Ps(V)\to \RR^m$ exists, and if it does, what is the smallest $m$ for which such a mapping exists.  
We denote this minimal dimension as $\mm(V, k)$, and $\mm(V,\infty) $, respectively. 
\vspace{-6pt}
\subsection{Prologue: Relation to injectivity}
\vspace{-6pt}
\label{subsec:inj}
Our inquiry is related to the notion of injective multiset functions, as every MAS function $F$ is, in particular, injective. Indeed,  suppose for two sets $S, T$ we have $F(S) = F(T)$. Then, we have both $F(S) \leq F(T)$ and $F(T) \leq F(S)$. Hence, we have both $S \subseteq T$ and $T \subseteq S$ by separability, which implies $S = T$. Thus, injectivity follows from separability. 

Injective multiset functions $f:\Pvk \to \RR^m $ are attainable even for $m=1$, providing that $V$ is finite or even infinite \cite{zaheer2018deepsets,xu2018how,amir_finite,amir2024fourierslicedwassersteinembeddingmultisets}. As we will see, attaining the strong condition of MAS multiset functions requires high dimension, and in some cases (infinite $V$), it does not even exist. This can be seen even in the following very simple example:


\begin{example}\label{ex:simple} Assume the ground set consists of only two elements $V=\{0,1\}$, and we only look for multisets of cardinality $\leq k=1$. In this case, there are only three multisets in the space $\mathcal{P}_{\leq 1}(V)$: the empty set, $S=\{0\}$, and $T=\{1\}$. As discussed previously, there exists a multiset function $F:\mathcal{P}_{\leq 1}(V)\to \RR $ which is injective. However, no such function can be separated. This is because neither $S$ nor $T$ is a subset of the other. However, because real numbers are totally ordered, we have either $F(S) \le F(T)$ or $F(T)\le F(S)$, violating the separability condition.

We note that the monotonicity can be satisfied bu using functions of the form $F(S)=\sum_{x\in S} f(x)$ for non-negative $f$. Thus, achieving separation is harder than achieving monotonicity. 
\end{example}

The key reason for the non-existence of a scalar MAS set function is that multisets (or sets) are not totally ordered, but scalars are.
However, this issue does not arise for multidimensional set functions, where vectors, similar to sets, are partially ordered. So, when mapping to vectors, when and how can MAS functions be constructed? We will now discuss this.

\subsection{Existence of \mas\ functions:  finite ground set, unbounded cardinality}
\label{subsec:mas-finite}


We study \mas functions $F: \Pvinf \to \RR^m$, where input multisets may have an arbitrarily large finite cardinality, and the ground set is finite $|V|=n$. In this case, we can show that the  smallest possible dimension $\mm(V,k)$ of a \mas function is exactly $n$:
\begin{restatable}[Dimension of MAS Function]{theorem}{finiteground}
\label{thm:dim_geq_n}
For a finite ground set $V$ of size $n$, there exists a \mas function $F: \Pvinf \to \R^n$. In addition, any \mas function must have a dimension of at least $n$. In other words, $\mm(V, \infty) = n$. 
\end{restatable}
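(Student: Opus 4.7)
The plan is to prove the theorem in two parts: an explicit construction achieving dimension $n$, and a matching lower bound showing that fewer than $n$ coordinates cannot suffice.

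For the upper bound, I will exhibit the count embedding $F:\Pvinf \to \RR^n$ defined by $F(S) = (c_S(v_1), \ldots, c_S(v_n))$, where $V = \{v_1, \ldots, v_n\}$. Monotonicity is immediate from the definition $S\subseteq T \iff c_S(v) \leq c_T(v)$ for every $v\in V$, and separability is just the reverse implication of the same equivalence. So $F$ is MAS with dimension exactly $n$.

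For the lower bound, suppose $F:\Pvinf \to \RR^m$ is MAS. The idea is to exploit the complementary pair of multisets $S_i = \lms v_i \rms$ and $T_i = \lms v_j : j \neq i \rms$ for each $i \in [n]$. Since $v_j \in T_i$ for all $j \neq i$, monotonicity gives $F(S_j) \leq F(T_i)$ for every $j \neq i$. On the other hand, $S_i \not\subseteq T_i$, so separability forces $F(S_i) \not\leq F(T_i)$, meaning there is some coordinate $\sigma(i) \in [m]$ with $F(S_i)[\sigma(i)] > F(T_i)[\sigma(i)]$. Combining these two facts, for every $j\neq i$ we get $F(S_j)[\sigma(i)] \leq F(T_i)[\sigma(i)] < F(S_i)[\sigma(i)]$.

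The main step is then to show that the map $\sigma:[n]\to[m]$ must be injective. If $\sigma(i)=\sigma(j) = k$ for some $i\neq j$, then applying the inequality in both directions would yield $F(S_i)[k] > F(S_j)[k]$ and $F(S_j)[k] > F(S_i)[k]$, a contradiction. Hence $\sigma$ is injective and $m \geq n$. The only mild subtlety is the corner case $n=1$, where $T_1 = \emptyset$; the same argument works using $S_1 \not\subseteq \emptyset$, and there are no $j\neq i$ to check. I do not anticipate a real obstacle here; the only thing to be careful about is stating the argument in the multiset language (using $c_S(\cdot)$) rather than the set language, which is purely cosmetic.
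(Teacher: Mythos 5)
Your proposal is correct. The upper bound is exactly the paper's construction (the count/one-hot embedding $F(S)=\sum_{s\in S}e_s$, whose MAS property is immediate from $S\subseteq T \iff c_S(v)\le c_T(v)$ for all $v$), but your lower bound takes a genuinely different route. The paper picks, for each coordinate $i\in[m]$, a ``maximal singleton element'' $v_i^*$ with $F(\{v_i^*\})[i]\ge F(\{v\})[i]$ for all $v$, and argues that if $m<n$ the multiset $T=\{v_1^*,\dots,v_m^*\}$ misses some $u\in V$, yet monotonicity forces $F(\{u\})\le F(T)$, contradicting separability. You instead use the complementary pairs $S_i=\lms v_i\rms$, $T_i=\lms v_j: j\ne i\rms$: separability yields a witness coordinate $\sigma(i)$ with $F(S_i)[\sigma(i)]>F(T_i)[\sigma(i)]$, monotonicity gives $F(S_j)[\sigma(i)]\le F(T_i)[\sigma(i)]$ for $j\ne i$, and the resulting strict-maximizer property makes $\sigma:[n]\to[m]$ injective, hence $m\ge n$; your handling of the $n=1$ corner case is also fine. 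The two arguments buy different things: your injection argument is arguably cleaner and exposes a sharper structural fact (each ground element needs its own dedicated coordinate on which its singleton strictly dominates all other singletons), whereas the paper's maximal-singleton device uses only small ``adversarial'' sets of size at most $m+1$ and is therefore reusable in the bounded-cardinality regime --- it is essentially the same construction that later yields the $\mm(V,k)\ge 2k$ bound --- while your $T_i$ has cardinality $n-1$ and so relies crucially on unbounded multiset size, which is legitimate here but would not transfer to the $\Pvk$ lower bounds.
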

\paragraph{Proof Sketch} 
The construction of a monotone embedding with $n=|V|$ simply uses one-hot encoding. Namely, we identify $V$ with $[n]$.  For every $S\in \Pvinf$, we define
\begin{align}
    \label{eq:onehot}
     F(S) = \sum_{s\in S}e_s\in \mathbb{R}^{n}\textstyle 
\end{align}
where $e_s\in \RR^n$ is the vector with $e_s[s]=1$ and $e_s[j]=0$ for all $j\neq s$. It's clear $F$ satisfies all conditions.
For the second part, assume we have an embedding of dimension $m$, for every output dimension $i \in [m]$, 
there is a "maximal singleton element" $v^*_i \in V$  such that $F(\set{v^*_i})[i] \geq F(\set{v})[i], \forall v \in V$.  The value of the set function $F$ applied to $T=\set{v^*_1,\ldots,v_m^*}$  will dominate any singleton, due to the monotonicity of $F$. However, if $m < |V|$, then we can select a $u\in V \setminus T$ and then $F(\set{u}) \leq F(T)$, which contradicts separability. 



\subsection{Existence of MAS functions: finite ground set, finite cardinality} 
We now consider the case where the ground set $V$ is finite, as before, but now the multisets have bounded cardinality  $k < |V|$. Indeed, in many practical applications, input multiset cardinality is much smaller than $|V|$, even when $|V|$ is large. In this setting, we show that we can get a lower embedding dimension $m$ than in Theorem \ref{thm:dim_geq_n}.  In fact, we show the minimal output dimension $m$ of a \mas function $F: \Pvk \to \mathbb{R}^m$ can scale logarithmically with the size of the ground set $|V|$. However, this comes at the price of an exponential dependence on $k$: 
\begin{restatable}[Upper Bound on $\mm(V,k)$]{theorem}{finitegroundsetrestrictedcardinality}
\label{thm:upper_bound_V}
Let $k<n$ be natural numbers, and let $V$ be a ground set with $|V|=n$.   Then there exists a MAS function $F:\Pvk\to \R^m$ with embedding dimension $m= (k+2)^{k+2} \log(n)$. 
\end{restatable}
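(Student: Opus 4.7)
The plan is to construct $F$ via an $(n, 2k)$-perfect hash family, namely a collection $\mathcal{H} = \{h_1, \ldots, h_L\}$ of maps $V \to [t]$ such that for every $A \subseteq V$ with $|A| \leq 2k$, at least one $h_i \in \mathcal{H}$ is injective on $A$. The key observation motivating this choice is that for any two multisets $S, T \in \Pvk$ the combined support $A := \{v \in V : c_S(v) + c_T(v) > 0\}$ has size at most $2k$, so an $h_i$ injective on $A$ will let us read off the multiplicities $c_S(v)$ and $c_T(v)$ unambiguously from bucket counts.

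Given such a family, I would define coordinate blocks $F_i(S)[j] = \sum_{v \in V: h_i(v) = j} c_S(v)$ for $j \in [t]$, and concatenate these into $F(S) = (F_1(S), \ldots, F_L(S)) \in \mathbb{R}^{tL}$. Monotonicity would be immediate, since $S \subseteq T$ implies $c_S(v) \leq c_T(v)$ pointwise, and hence each bucket sum grows weakly. For separability, assuming $F(S) \leq F(T)$ I would pick the $h_i$ injective on $A$ and observe that for each $v \in A$ only $v$ itself among support elements contributes to bucket $h_i(v)$, so $F_i(S)[h_i(v)] = c_S(v)$ and $F_i(T)[h_i(v)] = c_T(v)$. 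The assumed inequality then forces $c_S(v) \leq c_T(v)$ for all $v \in A$, and trivially for $v \notin A$, yielding $S \subseteq T$.

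The main obstacle is the quantitative bound $m = tL \leq (k+2)^{k+2}\log n$. I would establish existence of the required family via the probabilistic method: for a uniformly random $h: V \to [t]$, the probability of being injective on a fixed $(2k)$-subset equals $\prod_{i=0}^{2k-1}(1 - i/t)$. Choosing $t = 4k^2$ makes this probability at least $e^{-1}$ by a standard estimate, so a union bound over the $\binom{n}{2k} \leq n^{2k}$ critical subsets shows that $L = O(k \log n)$ independent draws yield an $(n,2k)$-perfect family with positive probability. The resulting dimension $tL = O(k^3 \log n)$ is comfortably dominated by $(k+2)^{k+2}\log n$ for every $k \geq 1$. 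The most delicate remaining step is pinning down the hidden constants via Stirling-type estimates so that the inequality $tL \leq (k+2)^{k+2}\log n$ holds literally, not merely asymptotically; this is elementary but tedious, and could alternatively be sidestepped by directly choosing $t \approx (k+2)^{k+2}/(k\log n)$ so that a single hash succeeds on a given subset with probability approaching $1$ and only $L = O(k \log n)$ hashes are needed, keeping $tL$ within the stated budget.
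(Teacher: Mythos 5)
Your proof is correct, but it takes a genuinely different route from the paper. The paper composes the one-hot encoding of \cref{eq:onehot} with $m$ random linear functionals with i.i.d.\ $\mathrm{Unif}[0,1]$ coefficients: non-negativity gives monotonicity, and separability is proved by first reducing an arbitrary violating pair to an ``extreme pair'' (a singleton $S=\{x\}$ versus a $k$-element multiset $T$ avoiding $x$) using monotonicity, then lower-bounding the probability that one random functional separates such a pair by $(k+1)^{-(k+1)}$, and finally union-bounding over the at most $n^{k+2}$ extreme pairs; this is exactly where the $(k+2)^{k+2}\ln n$ dimension comes from. You instead build the embedding from an $(n,2k)$-perfect hash family and record bucket counts of multiplicities: monotonicity is again immediate, but separability is \emph{exact} once some hash is injective on the combined support of $S$ and $T$ (the stray elements sharing a bucket with $v$ lie outside the support and contribute zero, so the bucket value equals $c_S(v)$, resp.\ $c_T(v)$), so no reduction to extreme pairs is needed. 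Your probabilistic-method count ($t=4k^2$ buckets, $L=O(k\log n)$ hashes) is sound, and the resulting dimension $O(k^3\log n)$ is in fact \emph{stronger} than the stated bound: it replaces the exponential dependence $(k+2)^{k+2}$ by a polynomial in $k$, bringing the upper bound much closer to the lower bounds $\mm(V,k)\ge 2k$ (and the quadratic refinement in Lemma~\ref{lemma:refined_bound_k}). What the paper's construction buys in exchange is its random-non-negative-projection form, which matches the random-feature constructions used later for the weakly MAS and H\"older results. One caveat: your closing ``alternative'' of taking $t \approx (k+2)^{k+2}/(k\log n)$ does not work as stated, since for large $n$ this $t$ drops below $2k$ and no map into $[t]$ can be injective on a $2k$-element support; but your main argument never relies on it, and the constant-chasing you defer does check out (e.g.\ $t=4k^2$, $L=\lceil 5k\ln n\rceil$ gives $tL\le 20k^3\ln n\le (k+2)^{k+2}\ln n$ for all $k\ge 1$).
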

\paragraph{Proof sketch}We construct the MAS function by taking the one-hot embedding $F:\Pvk \to \RR^n$ from \eqref{eq:onehot}, and then applying $m$ random projections defined by vectors in $\RR^n$ with non-negative entries. The non-negativity ensures monotonicity, and we prove that for the stated value of $m$, the probability of achieving a MAS function using this procedure is strictly positive. 

The theorem shows  that $\mm(V,k)\leq (k+2)^{k+2}\log(n)$.
We now give two lower bounds on the embedding dimension: we show that $m$ must depend at least linearly on $k$, and at least double logarithmically on $n$:

\begin{restatable}[Lower bounds on $\mm(V,k)$]{theorem}{lowerBoundK}
\label{thm:lower_bound_k}
Let $k\geq 2$ and $n$ be natural numbers, and let $V$ be a ground set with $|V|=n$.  Then the smallest possible dimension $\mm(V,k)$ of a MAS function satisfied $\mm(V,k) \geq \log_2(\log_3 n)$. Moreover, if $k\leq \frac{n-1}{2} $ then $\mm(V,k)\geq 2k $
\end{restatable}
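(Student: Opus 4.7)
My plan is to identify $\mm(V,k)$ with the order dimension of the poset $(\Pvk,\subseteq)$: a MAS function into $\R^m$ is exactly an order embedding into $(\R^m,\leq)$ with componentwise order, so the minimal $m$ equals $\dim(\Pvk,\subseteq)$. Both lower bounds then correspond to dimensional lower bound arguments exploiting different structural features of the poset.

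For the bound $\mm(V,k)\geq 2k$ under the hypothesis $k\leq(n-1)/2$, the natural approach is to exhibit the standard example $S_{2k}$ (the crown, with $2k$ minimal and $2k$ maximal elements satisfying $a_i\leq b_j\iff i\neq j$) as a subposet of $\Pvk$. Since $\dim S_{2k}=2k$, this would directly yield the bound. The hypothesis $k\leq (n-1)/2$ gives $n\geq 2k+1$ ground-set elements. The main obstacle is the cardinality constraint: a naive singleton construction $a_i=\{v_i\}$ would force each $b_j$ to contain the $2k-1$ elements $\{v_i: i\neq j\}$, violating $|b_j|\leq k$ when $k\geq 2$. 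The plan is to resolve this with multisets whose multiplicities are tuned so that the $a_i$'s share most of their support and the multiset unions $\bigcup_{i\neq j} a_i$ fit within a $k$-multiset, while the crown incidence relation is preserved exactly.

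For the bound $\mm(V,k)\geq \log_2(\log_3 n)$, the hypothesis $k\geq 2$ is essential: it grants access to the $2$-multisets $\{v,v\}$, which provide stronger separation than singleton antichains alone. For distinct $u,v\in V$, the non-inclusion $\{u\}\not\subseteq\{v,v\}$ forces some coordinate $i$ with $F_i(\{u\})>F_i(\{v,v\})\geq F_i(\{v\})$, giving strict coordinate-wise dominance of $\{u\}$ over the inflated $\{v,v\}$. The plan is then to run an iterated Erd\H{o}s--Szekeres / Ramsey-style extraction over the $m$ coordinates, using the per-coordinate trichotomy $\{<,=,>\}$ to reduce the remaining set of singletons by a square-root-like (base-$3$) factor at each step. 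After $m$ rounds the number of singletons surviving as a pairwise-incomparable subfamily is at most $3^{2^m}$, yielding $n \leq 3^{2^m}$ and hence $m \geq \log_2\log_3 n$.

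The main obstacle is the $2k$ construction—realizing the $S_{2k}$ incidence pattern within $\Pvk$ under the strict cardinality constraint. This is a delicate combinatorial design: one must choose multiplicities so that the $a_i$'s are pairwise incomparable, the $b_j$'s are pairwise incomparable, $a_i\subseteq b_j$ for $i\neq j$, and $a_j\not\subseteq b_j$, all while keeping every multiset of cardinality at most $k$. The $\log\log n$ bound, by contrast, is a fairly standard iterated-extraction argument once the stronger multiplicity-based separation inequality is in hand.
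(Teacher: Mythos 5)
Your reduction to order dimension is legitimate for finite $V$ (a MAS function is exactly an order embedding of the finite poset $(\Pvk,\subseteq)$ into $(\R^m,\le)$), but the crown plan for the bound $\mm(V,k)\ge 2k$ cannot be carried out: the standard example $S_{2k}$ is not an induced subposet of $\Pvk$ for $k\ge 2$, and no tuning of multiplicities can fix this. Indeed, suppose you had multisets $A_1,\dots,A_r,B_1,\dots,B_r\in\Pvk$ with $A_i\subseteq B_j$ for all $i\ne j$ and $A_j\not\subseteq B_j$ (this weaker ``critical pair'' pattern is all the dimension lower bound needs, and any crown embedding yields it). Since $B_j$ contains every $A_i$ with $i\ne j$, it contains their multiset join, so $A_j\not\subseteq B_j$ forces an element $v_j$ with $c_{A_j}(v_j)>\max_{i\ne j}c_{A_i}(v_j)$. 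Two distinct indices cannot share the same such element (the two strict inequalities contradict each other), so $v_1,\dots,v_r$ are distinct; but then $B_1$ contains at least one copy of each of $v_2,\dots,v_r$, giving $k\ge |B_1|\ge r-1$, i.e.\ $r\le k+1<2k$ for $k\ge 2$. So the configuration you are trying to build simply does not exist inside $\Pvk$. The paper's argument is of a different kind: for each coordinate $i$ pick a singleton $\{v_i^*\}$ maximizing $F(\{\cdot\})[i]$; take two leftover elements $u_1,u_2$ outside this collection; assume without loss of generality that $F(\{u_1\})[i]\ge F(\{u_2\})[i]$ on at least half the coordinates; set $T=\{u_1\}\cup\{v_j^*: F(\{u_1\})[j]<F(\{u_2\})[j]\}$. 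Monotonicity gives $F(\{u_2\})\le F(T)$ while $\{u_2\}\not\subseteq T$, so separability on $\Pvk$ forces $|T|>k$, whence $k\le |T|-1\le m/2$.

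For the bound $\mm(V,k)\ge\log_2(\log_3 n)$ your extraction idea is in the right family, but the contradiction step is missing and your use of $k\ge 2$ (through the doubled multiset $\{v,v\}$) is not the one that works. Strict pairwise incomparability of singleton embeddings cannot bound $n$: already for $k=1$ the map $F(\{x\})=(-x,x)$ gives arbitrarily many pairwise incomparable singleton embeddings in $\R^2$, so your claim that a pairwise-incomparable subfamily has size at most $3^{2^m}$ is false as stated. The paper's route: if $n>3^{2^m}$, iterated Erd\H{o}s--Szekeres (one coordinate at a time, each pass costing a square root) produces three distinct elements $u,v,w$ such that in every coordinate $F(\{v\})[i]$ lies between $F(\{u\})[i]$ and $F(\{w\})[i]$; hence $F(\{v\})[i]\le\max\bigl(F(\{u\})[i],F(\{w\})[i]\bigr)\le F(\{u,w\})[i]$ by monotonicity, so $F(\{v\})\le F(\{u,w\})$ although $\{v\}\not\subseteq\{u,w\}$, contradicting separability. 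The two-element multiset $\{u,w\}$ is precisely where $k\ge 2$ enters; note also that per-coordinate monotonicity of the extracted subsequence does not make its terms comparable (the direction may differ across coordinates), which is why three terms and the set $\{u,w\}$ --- rather than two terms or $\{v,v\}$ --- are required.
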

\paragraph{Proof Sketch}To obtain the first lower bound, we consider the sequence of singleton-set embeddings $\Mcal:= (F(\set{1}),\ldots,F(\set{n})$. By the Erd\"os-Szekers theorem~\cite{erdos}, every scalar sequence with $n$ elements has a monotone subsequence of length $\sim n^{\frac{1}{2}}$. Applying this recursively to vectors of length $m$ gets us a monotone subsequence in all $m$ coordinates of the vector, of length $\sim n^{\frac{1}{2^m}} $. If this subsequence is of length $\geq 3 $ then we will have three distinct elements $v_1,v_2,v_3\in V$ such that
$F(\{v_1\})[i]\leq F(\{v_2\})[i] \leq F(\{v_3\})[i] \text{ or } F(\{v_3\})[i]\leq F(\{v_2\})[i] \leq F(\{v_1\})[i], \forall i\in [m]$
Monotonicity implies that $F(\{v_2\})[i]$ is dominated by $F(\{v_1,v_3\})[i]$, for all $i\in [m]$, thus contradicting separability. It follows that MAS existence can only happen when $n^{\frac{1}{2^m}}\leq 2$, which leads to the double logarithmic lower bound. 

For the lower bound in terms of $k$, the argument is similar to the proof of ~\cref{thm:dim_geq_n}, where we select a  "maximal singleton element" $v^*_i$ for every dimension $i \in [m]$. Since $m < |V|-1$,  we can find elements $\set{u_1, u_2}$ disjoint from the collection $\bigcup_{i=1}^m\set{v^*_i}$, and WLOG $F(\set{u_1})[i] \geq F(\set{u_2})[i]$ for at least half of the indices in $[m]$. We then construct $S = \set{u_2}$ and $T$ to be union of $u_1$ and all the $v^*_i$ for all dimensions $i$, where $F(\set{u_1})[i] \leq F(\set{u_2})[i]$. Then monotonicity implies $F(S) \leq F(T)$, but $S \not \subseteq T$. This implies $k \leq |T|-1 \leq m/2$, as otherwise separability of $F$ in $\Pvk$ is violated. 

 \vspace{-5pt}
\subsection{Non-existence of MAS functions: for infinite ground set} 
For most practical applications, we deal with infinite (often uncountable) ground sets, for example $V=\R^d$. Thus, we would like to analyze whether MAS functions exists when $|V| = \infty$. The answer to this question is negative:
\begin{corollary}
    \label{impossibility}
    Given a ground state $V$ with $|V| = \infty$, and $k\geq 2$, there does not exist a MAS function  $F: \Pvk \to \R^m$ for any finite $m\in \NN$. 
\end{corollary}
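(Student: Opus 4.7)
The plan is to derive the corollary as a direct consequence of the double-logarithmic lower bound in Theorem~\ref{thm:lower_bound_k}. Suppose toward a contradiction that some finite $m\in\NN$ admits a MAS function $F:\Pvk\to\R^m$. Since $|V|=\infty$, for any natural number $n$ we may choose a finite subset $V'\subseteq V$ of cardinality $n$. The restriction $F' := F|_{\mathcal{P}_{\leq k}(V')}$ then maps $\mathcal{P}_{\leq k}(V')$ into $\R^m$, and it remains MAS because monotonicity and separability are properties defined pointwise on pairs of multisets, all of which lie in $\Pvk$ as well.

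Next, I would invoke Theorem~\ref{thm:lower_bound_k}: since $k\geq 2$, any MAS function on $\mathcal{P}_{\leq k}(V')$ must have output dimension at least $\log_2(\log_3 n)$. Choosing $n$ to be any integer strictly greater than $3^{2^m}$ (which is possible because $V$ is infinite) yields $\log_2(\log_3 n)>m$, contradicting the assumption that $F'$ has output dimension $m$. Hence no such $F$ can exist.

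There is no real obstacle here: the only subtlety is verifying that restriction to a finite sub-ground-set preserves the MAS property, which is immediate from the definitions since both the ordering on multisets and the ordering on $\R^m$ are unchanged under restriction. The proof essentially amounts to reducing the infinite case to the finite case and then applying Theorem~\ref{thm:lower_bound_k}; no new combinatorial construction is needed.
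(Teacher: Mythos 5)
Your proof is correct and follows the same route as the paper: the corollary is stated there as a direct consequence of the $\mm(V,k)\geq \log_2(\log_3 n)$ bound of Theorem~\ref{thm:lower_bound_k}, obtained by restricting to a finite sub-ground-set of size $n>3^{2^m}$. Your explicit verification that the restriction remains MAS is a harmless elaboration of the same argument.
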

This corollary is a simple consequence of the result in   Theorem~\ref{thm:lower_bound_k} that the embedding dimension cannot be larger than $\log \log |V|$ when $V$ is finite. We note that this result makes the minimal assumption $k\geq 2$. In Appendix \ref{app:proofs} we show this assumption is necessary, and that in the degenerate case $k=1$ there is a MAS function 
$F:\mathcal{P}_{\leq 1}(V) \rightarrow \mathbb{R}^{2}$ for $V=[-1,1] $.

\paragraph{Summary} In this section, we provided lower and upper bounds for the smallest value $\mm(V,k)$ for which an MAS function $F:\Pvk \to \RR^m $ exists. Our results are summarized in Table \ref{tab:m}. We note that some of these results require weak assumptions, which are stated in the theorems but not in the table.
\begin{table}[!!h]
    \centering
\resizebox{0.98\textwidth}{!}{
    \begin{tabular}{>{\centering\arraybackslash}m{6.2cm}|c||c|c} \hline  
        \multicolumn{2}{c||}{Ground set size $: \qquad \qquad$ $|V|=n < \infty$} & \multicolumn{2}{c}{$|V|=n = \infty$} \\ \hline
         Input multiset size $: \quad $  $k < \infty $ & $k=\infty$ & $k=1$ & $k\geq 2$ \\ \hline  \hline 
        $\mm(V,k)\leq \min\{n,(k+2)^{k+2}\log(n) \}$,
        $\mm(V,k)\ge \max\{2k,\log_2 \log_3 n\}$
       & $\mm(V,k)=n$ & $\mm(V,k)=2$ & Not possible \\  \hline
    \end{tabular}}
    \vspace{2mm}
    \caption{\it This table summarizes the lower and upper bounds for the smallest value $\mm(V,k)$ for which an MAS function $F:\Pvk \to \RR^m $ exists, where $k$ is maximal set cardinality and $n$ denotes the cardinality of the ground set $V$.  }
    \vspace{-6mm}
    \label{tab:m}
\end{table}
\vspace{-4pt}
\section{Relaxations of MAS functions}
\vspace{-4pt}
\label{sec:weak-mas}
Our definitions so far did not address differential parameterized set functions $F(S,w)$, which are the natural object of interest for set learning applications. One natural way to address this could be to require that there exists $w$ such that $F(\bullet,w)$ is a MAS function; however, since our theoretical results show that MAS functions do not exist when $|V| = \infty$, this requirement is too restrictive. Instead, we retain monotonicity as a hard constraint for every parameter $w \in \Wcal$, while treating separability as an existential condition over the parameter space. 
\vspace{-4pt}
\subsection{The notion of weakly \mas functions}
\vspace{-4pt}
Formally, given a ground set $V$, and a probability space $(\Wcal, \Bcal, \mu)$, we consider parametric set functions $F:\Pvinf \times \Wcal \to \mathbb{R}^m$, where $F(S,\bullet)$ is measurable for all $S$. We then define weakly \mas functions as: 
\begin{definition}[weakly  \mas function]
\label{def:weak-mas}
 The set function $F:\Pvinf\times \Wcal\to \mathbb{R} $ is a weakly  \mas function if the following two conditions are satisfied:\\
\textbf{(1) Pointwise monotonicity}  For any $w\in W$ and $S \subseteq T$, we have that  $F(S,w) \leq F(T,w)$.\\
\textbf{(2)  Weak separability}
If $S \not \subseteq T$, then there exists $w\in \Wcal$ such that  $F(S;w)>F(T;w)$. 
\end{definition}

\paragraph{Which condition, monotonicity or separability, should be relaxed?} In our definition, we relax the separability requirement so that different multiset pairs can be separated by different parameters, while we maintain strict requirements of pointwise monotonicity. This is because, as discussed in Example \ref{ex:simple}, constructing monotone set functions is straightforward, even in one dimension. In contrast, separability is a much stronger and more difficult condition to satisfy. This is a key reason why \mas functions do not exist when $|V| = \infty$. Indeed, as shown in Appendix~\ref{app:proofs}, there can be no continuous separable set function even without the monotonicity assumption when the ground set is $V = \RR^D$.
Therefore, separability is the natural condition for relaxation. Moreover, monotonicity remains a useful and often necessary constraint in many applications.




\paragraph{Are scalar deep sets weakly  MAS?}
Equipped with the new notion of weakly \mas functions, we ask the question of how to construct such a class of functions. As a first step to achieve this goal, we look into a simple instance of DeepSets \cite{zaheer2018deepsets}. DeepSets defines a set function by applying an 'inner' MLP $\mone$ to each set element, summing over the result, and then applying an 'outer' MLP $\mtwo$. We consider the case where $\mone$ is a shallow MLP $\mone(x)=\sigma(Ax+b)$ which gives us models of the form 
\begin{align}
F(S; (A,b)) =\textstyle \mtwo\big(\sum_{x\in S} \sigma(A x +b)\big)    \label{eq:parametric_setfn}
\end{align}
To ensure monotonicity for every parameter choice $A,b$, we will require that $\mtwo$ is a monotone vector-to-vector function, and the activation $\sigma$ is non-negative. While these choices automatically ensure monotonicity, they may not lead to weak MAS functions as for many activations there will be no separation:
\begin{proposition}
\label{prop:shallow_relu_not_sep}
   If the deep set model in \eqref{eq:parametric_setfn} is implemented with a vector-to-vector monotonously increasing function $\mtwo$ and a non-negative activation $\sigma$, then $F(\bullet; (A,b))$ is monotone for every $(A,b)$. If in addition $\sigma$ is monotone (increasing or decreasing), then there exists $S \not \subseteq T$ with $F(S; (A,b))\leq F(T; (A,b)) $ for all $A,b$.   
\end{proposition}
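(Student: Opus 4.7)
For the first claim, the plan is immediate from the definition of the model. If $S \subseteq T$ as multisets, I split $T = S \sqcup (T \setminus S)$ so that for every $(A,b)$,
\[
\sum_{x \in T} \sigma(Ax+b) - \sum_{x \in S} \sigma(Ax+b) \;=\; \sum_{x \in T \setminus S} \sigma(Ax+b) \;\geq\; 0
\]
coordinatewise, because $\sigma \geq 0$. Applying the monotone vector-to-vector map $\mtwo$ then preserves the inequality and yields $F(S;(A,b)) \leq F(T;(A,b))$.

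\textbf{Part 2: exhibiting a universal non-separating pair.} My plan is to construct one pair $S \not\subseteq T$ for which the \emph{inner} sum inequality already holds uniformly in $(A,b)$, so that applying the monotone $\mtwo$ on top cannot break it. The key observation is that affine maps commute with midpoints. I would pick any two distinct points $x_2, x_3 \in V$, set $x_1 := \tfrac12(x_2+x_3)$, and take $S := \lms x_1 \rms$ and $T := \lms x_2, x_3 \rms$. Since $x_1 \notin \{x_2,x_3\}$, the multiset-containment condition fails and $S \not\subseteq T$.

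The core lemma I would then prove is the pointwise bound
\[
\sigma(Ax_1 + b) \;\leq\; \sigma(Ax_2+b) + \sigma(Ax_3+b),
\]
read coordinatewise. Working in one coordinate $i$ at a time, write $\ell(x) := \langle A_i, x\rangle + b_i$. Affinity gives $\ell(x_1) = \tfrac12(\ell(x_2) + \ell(x_3))$, so $\ell(x_1)$ lies between $\ell(x_2)$ and $\ell(x_3)$. If $\sigma$ is monotone in either direction, then on that interval $\sigma$ is order-preserving or order-reversing, so $\sigma(\ell(x_1)) \leq \max\{\sigma(\ell(x_2)), \sigma(\ell(x_3))\}$; non-negativity of $\sigma$ upgrades this to the sum, proving the lemma. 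Summing (trivially on the left, since $|S|=1$) and applying the monotone $\mtwo$ yields $F(S;(A,b)) \leq F(T;(A,b))$ for all $(A,b)$.

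\textbf{Main obstacle.} There is no real obstacle: the whole argument hinges on spotting that the midpoint construction delivers an inequality uniform in $(A,b)$ precisely because affine maps preserve convex combinations, and that monotonicity of $\sigma$ turns this into a bound by a maximum, which non-negativity then relaxes to a sum. The only mild technical point is that $V$ must contain at least two distinct elements, which is automatic in the relevant case $V = \RR^D$ and otherwise makes the weak MAS question vacuous.
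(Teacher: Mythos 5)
Your proof is correct and follows essentially the same route as the paper's: the paper also takes $S=\lms \tfrac12(x+y)\rms$, $T=\lms x,y\rms$, uses that the affine layer sends the midpoint to the midpoint so monotonicity of $\sigma$ bounds $\sigma(Az+b)$ by one of $\sigma(Ax+b),\sigma(Ay+b)$, upgrades to the sum via non-negativity, and finishes with monotonicity of $\mtwo$. Your coordinatewise phrasing is in fact slightly more careful than the paper's vector-level statement (which direction of the inequality holds can differ per coordinate); the only shared implicit assumption is that the midpoint itself lies in the ground set $V$, which holds in the intended case $V=\mathbb{R}^D$.
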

\textbf{Proof idea}
The monotonicity is rather straightforward. To prove lack of separation when $\sigma$ is monotone, choose 
$x\neq y, \ z=\frac{1}{2}(x+y), \ S=\{z\}, \ T=\{x,y\}, $
and then $Az+b$ is the average of $Ax+b$ and $Ay+b$, which can be used to show that $F\left(T;(A,b)\right)\geq F\left(S;(A,b)\right)  $ for all $A,b$.

\paragraph{Are Set Transformers weakly MAS?}Now we ask the quedtion if Set Transformers~\cite{lee2019settransformerframeworkattentionbased} are weakly MAS. We notice in the following that Set Transformers are not even monotone in the following result. For this, we consider Set Transformer with sum-pooling, namely $F(S; W_Q, W_K, W_V) = \text{SumPool}(\text{Attn}(S))$. Explicity, this is defined for a give multiset $X = \set{\bx_1, \cdots, \bx_n}$ by first defining query, key, values per-point: $\bq_i = W_Q\bx_i, \bk_i = W_K\bx_i, \bv_i = W_V\bx_i$, and then taking a weighted average, defined by weights: $\alpha_{i, j} = \frac{e^{\bq_i\cdot \bk_j}}{\sum_s e^{\bq_i \cdot \bk_s}}$ to obtain a permutation-invariant function $F(S; W_Q, W_K, W_V):= \sum_{i,j}\alpha_{i,j}\bv_i$

\begin{restatable}[]{proposition}{STnonmon}
    Let $V = \R^d$ and let $F: \Pvinf \to \R^D$ be a Set Transformer defined by $F(S; W_Q, W_K, W_V) := \text{SumPool}(\text{Attn}(S))$. Then $F$ is not a point-wise monotone function.
\end{restatable}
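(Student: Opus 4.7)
The plan is to expose a simple algebraic collapse of the attention formula that makes the counterexample almost immediate. The key preliminary observation is that for each fixed query index $i$, the coefficients $\alpha_{i,j}$ form a softmax distribution over $j$, so $\sum_j \alpha_{i,j} = 1$. Substituting this into the definition of $F$ yields
\[
F(S; W_Q, W_K, W_V) \;=\; \sum_{i} \bv_i \Big(\sum_{j} \alpha_{i,j}\Big) \;=\; \sum_i \bv_i \;=\; W_V \sum_{x \in S} x,
\]
so despite its attention-like appearance, the resulting function is simply a linear map applied to the sum of multiset elements, and the parameters $W_Q, W_K$ play no role whatsoever in the output.

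With this in hand, a monotonicity failure is obtained by choosing $W_V$ with any negative entry. Concretely, I would take $d = D = 1$ with $W_Q = W_K = 1$ and $W_V = -1$, and set $S = \lms 0 \rms$ and $T = \lms 0, 1 \rms$. Then $S \subseteq T$ as multisets, yet $F(S) = 0$ and $F(T) = -1$, so $F(S) \not\le F(T)$ coordinatewise. Since the definition of pointwise monotonicity requires the inequality to hold for \emph{every} choice of $(W_Q, W_K, W_V)$, exhibiting this single failing triple is enough. A higher-dimensional variant works identically by taking $W_V = -I_d$ and $T = S \cup \lms y \rms$ for any $y \in \R^d$ with a strictly positive coordinate.

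The only real hurdle is noticing the algebraic collapse $\sum_j \alpha_{i,j} = 1$, which reveals that the $\bv_i$ formulation of the Set Transformer decouples the output from the attention weights entirely. Once this observation is in hand, the rest of the argument is a one-line verification and a choice of sign.
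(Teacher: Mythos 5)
Your argument is correct for the definition exactly as printed: since the weights form a softmax over $j$, $\sum_j \alpha_{i,j}=1$ for every $i$, so the printed expression $\sum_{i,j}\alpha_{i,j}\bv_i$ indeed collapses to $W_V\sum_{x\in S}x$, and a negative entry of $W_V$ then breaks monotonicity (e.g.\ $W_V=-1$, $S=\lms 0\rms\subseteq T=\lms 0,1\rms$ gives $F(S)=0>-1=F(T)$); exhibiting one failing parameter triple is exactly what is needed to refute pointwise monotonicity. This is shorter than the paper's proof and proves something stronger, namely that under the printed formula the ``Set Transformer'' degenerates into a linear DeepSets map independent of $W_Q,W_K$. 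The caveat is that this key observation exploits what is almost certainly a misprint, and the paper's own proof goes a different way precisely because it does not use the collapsed formula: it takes $W_Q=W_K$ full rank, chooses $\bx_1$ with $\bv_1=W_V\bx_1$ having a negative coordinate, sets $S=\set{\bx_1}$, $T=\set{\bx_1,\mathbf{0}_d}$, and shows that the total attention mass multiplying $\bv_1$ in $F(T)$ is $e^{\norm{\bq_1}^2}/(e^{\norm{\bq_1}^2}+1)+\tfrac12>1$. That quantity is $\alpha_{1,1}+\alpha_{2,1}$, which only arises if the values are indexed by the key index, i.e.\ $F(S)=\sum_{i,j}\alpha_{i,j}\bv_j$; under the literal $\bv_i$ formula the corresponding coefficient would be $\alpha_{1,1}+\alpha_{1,2}=1$ exactly and the paper's strict inequality would fail. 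So the paper implicitly proves the statement for the standard attention convention, where your collapse does not hold and the output genuinely depends on $W_Q,W_K$. Your specific one-dimensional counterexample happens to also violate monotonicity in that convention (there $F(T)=-(\alpha_{1,2}+\alpha_{2,2})<0=F(S)$), but justifying it then requires computing the attention weights directly, as the paper does, rather than invoking $\sum_j\alpha_{i,j}=1$. In short: your proof is valid for the statement and definition as written, but a robust version should be phrased so that it survives the (intended) $\bv_j$ reading.
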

\vspace{-5pt}
\paragraph{Proof Idea} We consider full rank matrices $W_Q = W_K$ and any non-zero $W_V$. We choose $\bx_1 \in \R^d$ such that $\bv_1 := W_V \bx_1 \neq 0$, and consider the sets $S = \set{\bx_1}, T = \set{\bx_1, \mathbf{0}_d}$. Then, $F(S; W_Q, W_K, W_V) = \bv_1$ and $F(T; W_Q, W_K, W_V) = (\alpha_{1,1} + \alpha_{1,2})\bv_1$. Note that, $\alpha_{1,1} + \alpha_{1,2} > 1$ and thus, for any negative $\bv_1[j]$, we have $F(T; W_Q, W_K, W_V)[j] < F(S; W_Q, W_K, W_V)[j]$ and thus, monotonicity is violated.

Thus, unlike Deep Sets, it's not obvious how to make Set Transformers monotone, let alone weakly separable. Thus, in the following sections, we shall focus on obtaining weakly-MAS functions from DeepSet-like models only. 
\vspace{-4pt}
\subsection{The \hhat activation class}
\vspace{-4pt}
Note that most commonly used non-negative activation functions, such as $\relu $ or sigmoid, are monotonically non-decreasing and therefore, from Proposition~\ref{prop:shallow_relu_not_sep}, using them as $\sigma$ above will fail weak separability. 
Here, we consider a novel class of activation functions we call  'Hat Activations', which will all make $F$  in Eq.~\eqref{eq:parametric_setfn} weakly MAS. 

\begin{definition}[The \hhat\  activation function]
\label{def:hhat}
We call a function $\act: \R \to \R$  a hat activation if it is (a) non-negative  (b) compactly supported (c) not identically zero and (d) continuous. 
\end{definition}\vspace{-6pt}
Examples of hat functions are the third and fourth functions in Figure \ref{fig:activations}. 
\begin{figure}
    \centering
   \includegraphics[width=\linewidth]{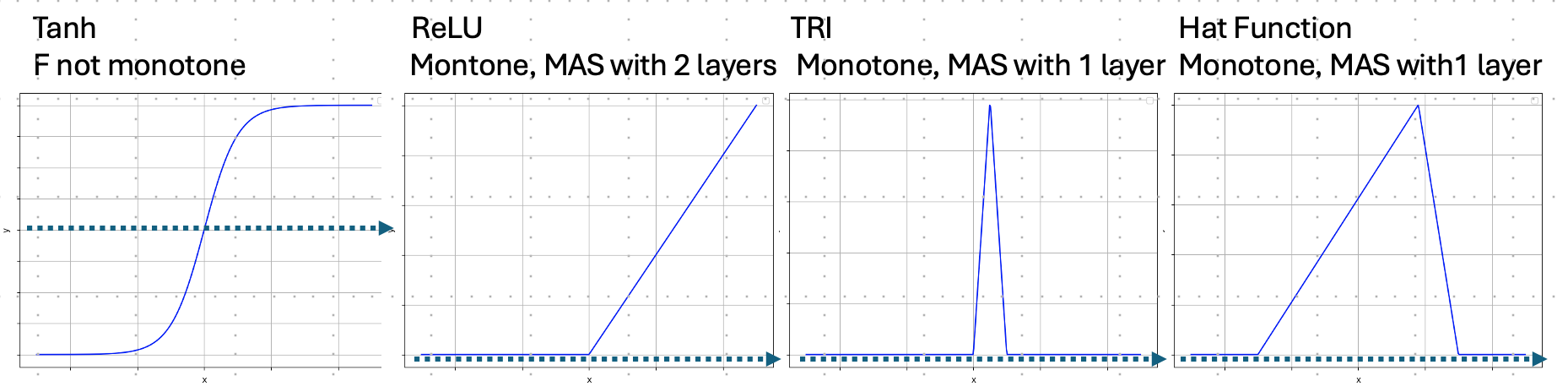}
    \caption{\it Using  a multiset model as in \eqref{eq:parametric_setfn} with activations which are not always non-negative, like $\sigma=\mathrm{Tanh}$ will not be monotone. ReLU will be monotone, but to be weakly MAS, two layers are required. TRI and more general hat functions are weakly MAS even with a single layer.}
    \label{fig:activations}
\end{figure}
When $\sigma$ is a hat function, \cref{eq:parametric_setfn} is weakly MAS, even when the output of $F$ is taken to be scalar. For this result, we will need to require that $\mtwo$ is strictly monotone. This can be handled by simply setting $\mtwo$ to be the identity. 
\begin{proposition}
    \label{prop:weakmas_hat}
    If $\mtwo$ is strictly monotone increasing, and $\sigma$ is a hat activation function, then $\mtwo(\sum_{x\in S}\sigma(a\tran x+b)) $ is weakly \mas function. 
\end{proposition}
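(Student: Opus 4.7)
\textbf{Monotonicity} is immediate from the hypotheses: since $\sigma \geq 0$ everywhere, enlarging the multiset $S$ can only increase each summand of $\sum_{x \in S} \sigma(a^\top x + b)$, and $\mtwo$ being monotone preserves the resulting inequality. So essentially all content of the proof lies in establishing weak separability.

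\textbf{Strategy for weak separability.} Given multisets $S \not\subseteq T$, there exists $z \in V$ with $c_S(z) > c_T(z)$. Let $U = (\mathrm{supp}(S) \cup \mathrm{supp}(T)) \setminus \{z\}$, a finite subset of $V$. My plan is to pick $(a,b)$ that ``isolates'' $z$ in the sense that
\begin{align*}
\sigma(a^\top z + b) > 0 \quad\text{and}\quad \sigma(a^\top v + b) = 0 \text{ for every } v \in U.
\end{align*}
If I can achieve this, then $\sum_{x \in S}\sigma(a^\top x + b) = c_S(z)\,\sigma(a^\top z + b) > c_T(z)\,\sigma(a^\top z + b) = \sum_{x \in T}\sigma(a^\top x + b)$, and strict monotonicity of $\mtwo$ converts this into $F(S;(a,b)) > F(T;(a,b))$, as required.

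\textbf{Construction of $(a,b)$.} Since $\sigma$ is continuous, non-negative, not identically zero, and compactly supported, fix $M > 0$ with $\mathrm{supp}(\sigma)\subseteq [-M,M]$ and a point $t^*$ with $\sigma(t^*) > 0$. First I choose a direction $a_0 \in \mathbb{R}^D$ satisfying $a_0^\top(v - z) \neq 0$ for every $v \in U$; such $a_0$ exists because the ``bad'' set $\bigcup_{v \in U}\{a : a^\top(v - z) = 0\}$ is a finite union of hyperplanes in $\mathbb{R}^D$, so any generic $a_0$ works (in particular $a_0 = 1$ suffices if $D = 1$). Then, for a scale $\lambda > 0$ to be chosen, set $a = \lambda a_0$ and $b = t^* - \lambda\, a_0^\top z$. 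By construction $a^\top z + b = t^*$, so $\sigma(a^\top z + b) = \sigma(t^*) > 0$ for every $\lambda$. For $v \in U$ one has $a^\top v + b = t^* + \lambda\, a_0^\top(v - z)$, whose absolute value tends to infinity with $\lambda$ because $a_0^\top(v-z) \neq 0$. Since $U$ is finite, a single sufficiently large $\lambda$ forces $|a^\top v + b| > M$ simultaneously for all $v \in U$, and hence $\sigma(a^\top v + b) = 0$ there.

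\textbf{Main obstacle.} The argument is quite clean, and I do not expect a serious obstacle. The only subtle step is the generic choice of the direction $a_0$ separating $z$ from every element of $U$, but finiteness of $U$ makes this routine. Conceptually, the proof exploits precisely the two features that distinguish a hat activation from the monotone activations ruled out by~\cref{prop:shallow_relu_not_sep}: compact support, which lets one annihilate the contributions of all elements other than $z$ by pushing them outside $[-M,M]$, and the existence of a point where $\sigma$ is strictly positive, which preserves the $z$-contribution. Without compact support one could not simultaneously zero out the $U$-terms while keeping $\sigma$ strictly positive at $a^\top z + b$.
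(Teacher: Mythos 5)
Your proposal is correct and follows essentially the same route as the paper: both isolate the witness element $z$ (with $c_S(z)>c_T(z)$) by scaling and shifting $(a,b)$ so that $z$ lands in the support of the hat function while every other element of $S\cup T$ is pushed outside the compact support, then invoke strict monotonicity of $\mtwo$. If anything, your explicit choice of a generic direction $a_0$ avoiding the finitely many hyperplanes $\{a: a^\top(v-z)=0\}$ makes the multidimensional case more careful than the paper's write-up, which argues with a scalar distance $\min_{t\in T}|z-t|$.
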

\paragraph{Proof idea}
If $S$ is not a subset of $T$, there is an element $s$ whose multiplicity in $S$ is larger than its multiplicity in $T$. We can then choose $a,b$ so that $s$ is in the support of  $\sigma(ax+b)$ but the elements of $T$ are not.  
\vspace{-4pt}
\subsection{Slightly larger ReLU networks}
So far, we have considered simple models as in \eqref{eq:parametric_setfn} which apply a single linear layer before applying activation and summation. In this case, we saw that $\relu$ and other activations do not create weakly \mas functions, and defined the notion of hat activations, which are weakly \mas functions. But what happens if we use $\relu$ activations but allow deeper networks? 

We can use our previous analysis to show that very small $\relu$ networks with two layers can already be \mas functions. To see this, we consider a candidate from the hat function class, namely the $\tri$ function from Figure \ref{fig:activations},  and observe that: $\tri(x)=  \relu\left(\relu(2x)+\relu(2x-2)-\relu(4x-2) \right)$
We use this to deduce the following result:

\begin{restatable}[]{proposition}{proprelumas}\label{prop:relu_mas}
Given $V \subseteq \R^d$, we consider affine transformations $\mathcal{A}_2:\RR^d \to \RR^3, \mathcal{A}_1:\RR^3 \to \RR$. Thus, here $\mathcal{A}_2(t) := A_2 t + b_2$ and $\mathcal{A}_1(z) = a_1\tran z + b_1$, where $A_2 \in \R^{3 \times d}, b_2 \in \R^3,a_1 \in \R^3, b_1 \in \R$ are the respective parameters. Then, the set functions of the form 
\begin{equation}
    \label{eq:relu_masnet}
    F: \Pvinf \to \R, \textstyle F(S;\mathcal{A}_1,\mathcal{A}_2)=\mtwo\left(\sum_{x\in S} \relu \circ \mathcal{A}_1 \circ \relu \circ \mathcal{A}_2(x) \right)
\end{equation}
 are weakly \mas functions.
\end{restatable}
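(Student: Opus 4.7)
The strategy is to reduce the two-layer ReLU model to the single-hat-activation DeepSet model already handled by Proposition \ref{prop:weakmas_hat}. The key observation, foreshadowed by the identity $\tri(x) = \relu(\relu(2x) + \relu(2x-2) - \relu(4x-2))$ given in the text, is that every scalar affine-then-TRI map $x \mapsto \tri(a\tran x + b)$ is realisable as a two-layer ReLU composition of the form $\relu\circ\mathcal{A}_1\circ\relu\circ\mathcal{A}_2$. Hence, the parametric family in \eqref{eq:relu_masnet} \emph{contains}, as a subfamily, the TRI-based DeepSet family that Proposition \ref{prop:weakmas_hat} already shows to be weakly MAS, so weak separability is inherited, and the only additional thing I need to verify is that pointwise monotonicity still holds for \emph{every} parameter choice in the enlarged family, including those that do not correspond to a hat activation.

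Concretely, given $a\in\R^d$ and $b\in\R$, I would set $A_2 \in \R^{3\times d}$ and $b_2\in \R^3$ so that $\mathcal{A}_2(x) = (2(a\tran x + b),\; 2(a\tran x + b) - 2,\; 4(a\tran x + b) - 2)\tran$, and then choose $a_1 = (1, 1, -1)\tran$ and $b_1 = 0$. A direct case analysis, splitting on whether $y := a\tran x + b$ lies in $(-\infty,0]$, $(0,\tfrac12]$, $(\tfrac12,1]$, or $(1,\infty)$, shows that $\mathcal{A}_1(\relu(\mathcal{A}_2(x)))$ is always non-negative and equals $\tri(a\tran x + b)$, so the outer ReLU acts as the identity and the full per-element expression coincides with $\tri(a\tran x + b)$. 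For the monotonicity half of weak MAS, note that for \emph{any} choice of $(\mathcal{A}_1,\mathcal{A}_2)$ the per-element output $\relu(\mathcal{A}_1(\relu(\mathcal{A}_2(x))))$ is non-negative thanks to the outer ReLU; hence $S\subseteq T$ implies the desired inequality on the pooled sum, and monotonicity of $\mtwo$ propagates this to $F$.

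For weak separability, I would take an arbitrary pair $S\not\subseteq T$, apply Proposition \ref{prop:weakmas_hat} with $\sigma = \tri$ to obtain some $a$ and $b$ with $\mtwo(\sum_{x\in S}\tri(a\tran x + b)) > \mtwo(\sum_{x\in T}\tri(a\tran x + b))$, and then feed this $(a,b)$ into the factorisation above to obtain explicit $(\mathcal{A}_1,\mathcal{A}_2)$ witnessing $F(S;\mathcal{A}_1,\mathcal{A}_2) > F(T;\mathcal{A}_1,\mathcal{A}_2)$. I do not expect any real obstacle: the identity the paper supplies for $\tri$ does the heavy lifting, so the remaining work is essentially bookkeeping on the affine parameters. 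The only mildly delicate point is that Proposition \ref{prop:weakmas_hat} requires $\mtwo$ to be strictly monotone increasing, so the same hypothesis must be read into the statement of Proposition \ref{prop:relu_mas}; without it the strict inequality that drives the separability argument could degenerate into a non-strict one.
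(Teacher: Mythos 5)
Your proof is correct and follows essentially the same route as the paper's: realise a TRI-type hat function as $\relu\circ\mathcal{A}_1\circ\relu\circ\mathcal{A}_2$, invoke Proposition~\ref{prop:weakmas_hat} for weak separability, and get monotonicity from the non-negativity of the outer $\relu$ together with monotone $\mtwo$. The only difference is cosmetic---the paper uses the tent $\relu(y+1)+\relu(y-1)-2\relu(y)$ with $a_1=(1,1,-2)\tran$ rather than the main-text identity you chose---and your remark that $\mtwo$ must be (strictly) monotone increasing matches the assumption the paper implicitly carries over from Proposition~\ref{prop:weakmas_hat}.
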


Our results so far are summarized in Figure \ref{fig:activations}. If $\sigma$ is not a non-negative function, then $F$ will not be monotone. We can attain weakly MAS functions with a two-layer ReLU network, or with a one-layer network with hat activations. 
\subsection{Asymmetric distance induced Holder separability}
\paragraph{Why weak separability is not sufficient} 
While weak separability is a key relaxation for weakly \mas functions~(Definition~\ref{def:weak-mas}), it has two key limitations:
\begin{enumerate*}[label=\textbf{(\arabic*)}]
\item Separation should ideally hold over a non-negligible subset of $\Wcal$, not just a single point $w$;  \item it treats separability as a boolean condition, ignoring approximate containment. We would like to guarantee that if $S$ is almost a subset of $T$, then $F(S,w)$ will be close to dominated by $F(T,w)$.
This motivates us to create an \emph{asymmetric (pseudo) metric} to quantify the size of set difference and create stronger notions of separability based on that.

\end{enumerate*}

\paragraph{Asymmetric set distance}
For a ground set $V \subseteq \R^d$ and two sets $S, T \in \Pvinf$ such that $|S| \leq |T|$.
We define the asymmetric distance from $S$ to $T$ as the Earth Mover Distance (EMD) between $S$ and the subset of $T$ that is \emph{closest} to $S$ with the same cardinality as $S$.
\begin{equation}
    \label{eq:asymmetric_dist}
    \das(S, T) =  \min_{T'\subseteq T: |S|=|T'|}\emd(S,T')
\end{equation}
$\das(\bullet,\bullet)$  captures graded set containment:  if $S \not \subseteq T$, but is "very close" to a subset of $T$ in EMD metric, then $\das(S,T)$ is small. Moreover, $\das(S, T) = 0 \iff S \subseteq T$. Based on $\das$, we propose a newer separability notion for parametric functions.

\paragraph{Lower H\"older separability of set functions}
\citet{davidson2025holderstabilitymultisetgraph} introduced H\"older continuity through expectation over parameters. Unlike their symmetric EMD, we define H\"older separability using the asymmetric distance $\das$\eqref{eq:asymmetric_dist}.
\begin{definition}[Lower H\"older separability]
    \label{def:lower-holder}
    Let $V \subseteq \R^d$ be the ground set and $(\Wcal,\Bcal,\mu)$ be a probability space, and a constant $\lambda>0$.  A parametric set function $F(\bullet,w) : \Pvk \to \R^m$ with $w\sim \mu(\cdot)$ is $\lambda$ lower H\"older separable if there exists $c > 0$ such that:
    \begin{equation}
        \label{eq:lower_holder}
         \Exp_{w \sim \mu(\cdot)}\norm{\sbrac{F(S; w) - F(T; w)}_+}_1 \geq c \cdot \das(S, T)^\lambda, \\ \ \text{for all } S, T \in \Pvk
    \end{equation}
\end{definition}

\subsection{H\"older separable Set functions}
\label{sec:holder_separable_discussion}
\paragraph{H\"older separability using \hhat activation}Proposition \ref{prop:weakmas_hat} shows how to construct  weakly \mas\ functions using \hhat activations.
We now show that these constructions are also lower H\"older, under additional weak assumptions on the hat functions and $M_2$: 

 \begin{restatable}[$F$ is lower H\"older]{theorem}{flowerholder}
   \label{thm:F_lower_holder}
     Let $V \subset \R^d$ be a compact set, and $\sigma$ a \hhat\ activation function which is piecewise continuously differentiable supported in some interval $[\leftend, \rightend]$, and satisfying the condition: $\lim_{t \to \leftend^+}\diff{\sigma}{t} > 0 $. Let  $\mtwo: \R \to \R$ be a lower Lipschitz function. Consider the function
     $F(S; \tuple{a, b, c})= \mtwo\big(\sum_{x \in S}\sigma\big(\frac{a\tran x + b}c\big)\big),$ 
     where the multisets $S$ come from $\Pvk$, and $a \sim \Unif(\S^{d-1}), b \sim \Unif([-1,1]), c \sim \Unif((0,2])$. Then, $F(\bullet,\tuple{a, b, c})$ is Monotone H\"older separable with exponent $\lambda=2$.
\end{restatable}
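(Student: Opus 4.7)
The plan is to handle monotonicity and lower H\"older separability separately: monotonicity is structural, while separability is reduced through the lower Lipschitz property of $M_2$ to a bound on the inner hat-activation sum, which I verify by exhibiting a large-measure parameter region that separates $S$ from $T$.

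Monotonicity is immediate: $\sigma\ge 0$ makes $S\mapsto \sum_{x\in S}\sigma((a^\top x+b)/c)$ non-decreasing under multiset inclusion, and composing with the monotone $M_2$ (taking lower Lipschitz to mean monotone lower Lipschitz, as in the paper's framework) preserves monotonicity. Writing $g(a,b,c):=\sum_{x\in S}\sigma((a^\top x+b)/c)-\sum_{y\in T}\sigma((a^\top y+b)/c)$, the inequality $[M_2(u)-M_2(v)]_+\ge L[u-v]_+$ for monotone $L$-lower-Lipschitz $M_2$ reduces the H\"older-separability target to proving
$$\mathbb{E}_{(a,b,c)}[g(a,b,c)]_+\;\ge\;C\cdot d_{\mathrm{as}}(S,T)^2$$
for a constant $C>0$ depending on $V$, $\sigma$, $M_2$, and $k$.

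To produce this bound, let $T^*\subseteq T$ with $|T^*|=|S|$ realize $d_{\mathrm{as}}(S,T)=\mathrm{EMD}(S,T^*)$, let $\pi:S\to T^*$ be the optimal matching, and pick $x_*\in S$ maximizing $\|x_*-\pi(x_*)\|$, so $r:=\|x_*-\pi(x_*)\|\ge d_{\mathrm{as}}(S,T)/k$. Optimality of $T^*$ forces $\|x_*-y\|\ge r$ for every $y\in T\setminus T^*$ (otherwise a single swap of $\pi(x_*)$ with $y$ in $T^*$ would strictly reduce the EMD). A random-projection/anti-concentration estimate on $a\sim\mathrm{Unif}(S^{d-1})$ then yields a positive-measure set $A\subseteq S^{d-1}$ on which $\min_{y\in T}|a^\top(x_*-y)|\ge\kappa r$ for a constant $\kappa=\kappa(d,k)>0$. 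For each $a\in A$ I shrink the activation window by choosing $c\in(0,\kappa r/w_\sigma)$, where $w_\sigma$ is the length of $\mathrm{supp}(\sigma)$, so the window $\{t:(t+b)/c\in\mathrm{supp}(\sigma)\}$ has length less than $\kappa r$. The positive left derivative $m_0:=\lim_{t\to\ell^+}\sigma'(t)>0$ lets me fix a bin $(\ell+\tau,\ell+2\tau)\subset\mathrm{supp}(\sigma)$ on which $\sigma\ge m_0\tau/2$, and I slide $b$ over an interval of length $\tau c$ so that $(a^\top x_*+b)/c$ lands in this bin.

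On the resulting event the window excludes every $a^\top y$ for $y\in T$, so the $T$-sum vanishes, while the $S$-sum is at least $\sigma((a^\top x_*+b)/c)\ge m_0\tau/2$. The measure of the good $(b,c)$-slice for fixed $a\in A$ is $\int_0^{\kappa r/w_\sigma}(\tau c)\,dc\sim r^2$; integrating over $A$ contributes only a further constant factor. Multiplying value by measure gives $\mathbb{E}[g]_+\gtrsim r^2\gtrsim d_{\mathrm{as}}(S,T)^2/k^2$, yielding the exponent $\lambda=2$. The main obstacle is the uniform isolation claim $\min_{y\in T}|a^\top(x_*-y)|\ge\kappa r$ on a positive-measure set of $a$: elements $y\in T^*\setminus\{\pi(x_*)\}$ matched to other $x'\in S$ near $x_*$ can have $\|x_*-y\|\ll r$, so the bare random-projection bound fails for them. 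I would resolve this by a cancellation/sub-pair argument: whenever a $y=\pi(x')$ slips into the window, so does $x'$, and by piecewise $C^1$-ness of $\sigma$ their contributions differ by a first-order term $m_0\,a^\top(x'-y)/c$ that is controlled by $\|x'-y\|\le r$, keeping the net $\ge 0$; alternatively, one iterates the construction with successively smaller $c$ to carve out a still-positive-measure sub-event on which the leading pair $(x_*,\pi(x_*))$ is fully isolated, at the price of a $k$-dependent loss in $\kappa$.
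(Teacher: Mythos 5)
Your overall architecture matches the paper's (monotonicity is trivial; reduce through the lower Lipschitz $\mtwo$ to the inner sums; find a region of $(a,b,c)$ of measure $\sim r^2$ on which the $S$-sum strictly beats the $T$-sum, giving exponent $\lambda=2$), and you correctly flag the real difficulty yourself. But the fixes you propose for that difficulty do not work, and this is exactly where the proof lives. Your choice of $x_*$ as the pair of \emph{maximal} displacement $r$ is the problem: in an optimal matching, a point $y'\in T^*$ can lie arbitrarily close to (even coincide with) $x_*$ while its partner $x'=\pi^{-1}(y')$ sits at distance comparable to $r$ on the opposite side from $\pi(x_*)$ --- a one-dimensional example with $x_*=0$, $\pi(x_*)=-r$, $y'=\epsilon$, $x'=\epsilon+r'$ is already optimal for any small $\epsilon$. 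Then $a^\top y'$ lands in your activation window (width $<\kappa r$) while $a^\top x'$ does not, so the $T$-sum picks up a contribution comparable to your claimed gain and the $S$-sum gets nothing back; the asserted implication ``whenever $y=\pi(x')$ slips into the window, so does $x'$'' is false. The quantitative version of your cancellation fallback also fails: the paired error is $\kappa_2|a^\top(x'-y)|/c$ with $\|x'-y\|$ possibly of order $r$ and $c\lesssim r$, so it is of constant order, not first-order small, and can swamp the gain $\sim m_0\tau$. The ``shrink $c$ further'' variant cannot help either, since no choice of $c$ isolates $x_*$ from $T$-points that are genuinely closer to $x_*$ than the window width.

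The paper closes this gap with a combinatorial selection lemma (its Claim-1) that you are missing: instead of the maximal pair, it chooses an index $i_0$ such that the \emph{total} matched displacement of all pairs whose $T$-endpoints fall in the ball $\B(x_{i_0},\Delta_{i_0})$ is at most a small prescribed fraction of $\Delta_{i_0}$ (while still guaranteeing $\Delta_{i_0}\geq \frac{r^k}{k^{k+1}}\das(S,T)$, a worse but still $k$-only constant); the existence of such an $i_0$ is proved by a follow-the-largest-neighbor graph argument whose cycles would contradict optimality of the coupling. With that choice, every $T$-point entering the window is matched to a \emph{nearby} $S$-point, the summed Lipschitz errors are dominated by the gain coming from $\lim_{t\to\leftend^+}\sigma'(t)>0$, and the $(b,c)$-integral produces the $(\Delta^a_{i_0})^2$ bound. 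The paper also runs this per projection $a$ (couplings $\tau^*_a$ on the projected multisets) and then recovers $\das(S,T)$ from $\Exp_a\das^a(S,T)$ via Jensen's inequality plus an anti-concentration observation over the finitely many difference vectors; your sketch conflates the ambient and projected couplings, which would need the same repair. Without an analogue of Claim-1 (or some other mechanism controlling the aggregate contribution of near-coincident matched pairs), your argument does not go through.
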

\vspace{-2pt}

\paragraph{Probability of successful separation} Given $S, T \subseteq V$, we can consider $[F(S; \tuple{a, b, c}) - F(T; \tuple{a, b, c})]_+$ to be a real-valued non-negative random variable, where the randomness is over the parameter space $\tuple{a, b, c}\in \Wcal =\RR^d \times \RR\times \RR$ equipped with a probability measure. The above result on lower H\"older stability 
gives us a lower bound on the probability of a parameter tuple $\tuple{a, b, c}$ separating two non-subsets $S, T$ proportional to the set distance $\das(S, T)$. Moreover, the functions we have considered so far are scalar-valued set functions. By considering independent copies of the parameters across multiple dimensions, we can increase the probability of separation, since $F(S), F(T)$ are not separated iff $F(S)[i] \leq F(T)[i]$ across each embedding dimension $i$.  We formalize the above observations as follows:

\begin{restatable}[Probability bounds on separation]{theorem}{probabilityseparation}
\label{thm:probabilistic_implications_of_lower_holder}
 Let $V \subseteq \R^d$ and $\sigma$ be as in Theorem \ref{thm:F_lower_holder}, and let $A\in \RR^{m\times d},b\in \RR^m, c\in \RR^m$ whose $m$ columns (respectively entries) are drawn independently from the distribution on $a_j,b_j,c_j$ described in Theorem \ref{thm:F_lower_holder}, and consider the function 
 \begin{equation} \label{eq:hadamard}
 \textstyle F(S;A,b,c)=\sum_{x\in S} \sigma\left(  c^{-1} \odot (Ax+b)\right) .\end{equation}
Then there exists $C>0$, so that for all $S \not \subseteq T$, 
$\Prob\left( F(S)\leq F(T) \right) \leq \round{1 - C\das^2(S,T)}^m.$
 \end{restatable}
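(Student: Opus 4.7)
The plan is to reduce the vector-valued result to a one-dimensional probability statement per coordinate, exploit the independence of the parameters across coordinates, and obtain the per-coordinate bound from the lower H\"older estimate of Theorem~\ref{thm:F_lower_holder} via a reverse Markov inequality.

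First I would note that the construction in \eqref{eq:hadamard} makes $F(S;A,b,c)[j]$ depend only on $(a_j,b_j,c_j)$, the $j$-th row of $A$ and the $j$-th entries of $b$ and $c$. Since these triples are drawn i.i.d., the $m$ scalar random variables $Y_j := \mathbf{1}\{F(S)[j] \le F(T)[j]\}$ are mutually independent. Consequently
\[
\Pr\bigl(F(S) \le F(T)\bigr) \;=\; \prod_{j=1}^{m} \Pr\bigl(F(S)[j] \le F(T)[j]\bigr),
\]
so it suffices to establish a single-coordinate bound
\[
\Pr\bigl(F(S)[j] > F(T)[j]\bigr) \;\ge\; C\,\das(S,T)^{2}
\]
with a constant $C>0$ independent of $S,T,j$, and then multiply.

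Second, I would apply Theorem~\ref{thm:F_lower_holder} coordinate-wise with $M_2 = \mathrm{id}$ (which is trivially lower Lipschitz). Letting $G(S;a,b,c) := \sum_{x\in S}\sigma(c^{-1}(a^\top x + b))$, the theorem gives a constant $c_0>0$ such that
\[
\Exp\bigl[\,[G(S;a,b,c) - G(T;a,b,c)]_{+}\,\bigr] \;\ge\; c_0 \, \das(S,T)^{2}.
\]
This lower bound on the expectation must now be converted into a lower bound on the probability that the integrand is strictly positive.

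Third, I would use a reverse Markov inequality, which requires an almost-sure upper bound on the nonnegative random variable $X := [G(S;a,b,c) - G(T;a,b,c)]_{+}$. Because $\sigma$ is continuous with compact support in $[\ell,r]$, it is bounded: $\|\sigma\|_\infty < \infty$. Since multisets have cardinality at most $k$, we get $X \le 2k\|\sigma\|_\infty =: M$ deterministically. For any nonnegative random variable $X \le M$,
\[
\Exp[X] \;\le\; M \cdot \Pr(X > 0),
\]
hence $\Pr(G(S;a,b,c) > G(T;a,b,c)) \ge (c_0/M)\,\das(S,T)^{2}$. Setting $C := c_0/M$, which depends only on $V$, $k$, and $\sigma$ (not on $S,T,j$), yields the per-coordinate bound. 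Substituting into the product formula above gives the claimed inequality.

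The main subtlety I anticipate is not any single step but rather making sure Theorem~\ref{thm:F_lower_holder} can be applied with $M_2 = \mathrm{id}$ to each coordinate under the exact parameter distribution used in \eqref{eq:hadamard}; once that is pinned down, the reverse Markov step and the independence-across-coordinates tensorization are routine. Aside from this, one must verify that the constant $C$ one extracts is uniform in $S,T$, which is automatic because both $c_0$ (from Theorem~\ref{thm:F_lower_holder}) and $M = 2k\|\sigma\|_\infty$ depend only on the compact ground set $V$, the cardinality bound $k$, and the fixed activation $\sigma$.
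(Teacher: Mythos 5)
Your proposal is correct and follows essentially the same route as the paper's proof: apply the lower H\"older bound of Theorem~\ref{thm:F_lower_holder} to a single scalar coordinate, use boundedness of the hat activation (the paper bounds the coordinate function by its supremum $M$; you use $2k\|\sigma\|_\infty$) to convert the expectation lower bound into a per-coordinate separation probability of at least $C\das^2(S,T)$, and then tensorize over the $m$ independent coordinates. If anything, your handling of the strict/non-strict inequality in the complement event is slightly cleaner than the paper's, but the argument is the same.
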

In Equation \eqref{eq:hadamard} $c^{-1}$ stands for the elementwise inverse and $\odot$ stands for elementwise multiplication. 

Note that the probability of failed separation goes to zero exponentially as $m$ increases, and also becomes smaller as $\das(S,T)$ increases. This supports the idea that a larger measure of parameters separates sets with larger asymmetric distance and that separability becomes easier with larger embedding dimensions (which we saw previously in the existential results)

\paragraph{H\"older separable set functions using ReLU} As seen in Proposition~\ref{prop:relu_mas}, one can construct weakly \mas functions with a two layer neural network with $\relu$ activation. Our proof used that two layer $\relu$ networks can basically "simulate" a \hhat function. Under the light of the above results on H\"older separability, one can argue that two-layer $\relu$ networks would also have such guarantees. Also, we show in Appendix~\ref{app:proofs} that under certain structure on the ground set $V$ (for example, $V$ being a hypersphere), even with \emph{one layer} $\relu$ networks, functions of the form in ~\cref{eq:parametric_setfn}  will also have Lower H\"older separability guarantees. 

\paragraph{Upper Lipschitz bounds} In Appendix.~\ref{app:proofs} we complement our results on \emph{lower} Holder stability by showing that the construction in Theorem \ref{thm:probabilistic_implications_of_lower_holder} is also \emph{upper} Lipschitz in an appropriate sense. 
\vspace{-4pt}
\subsection{Monotone Universality and the role of $M_2$}
\vspace{-4pt}
Besides set containment-based applications, there are other interesting scenarios where one would like to construct monotone multiset-to-vector functions. The following theorem shows that on finite ground sets,  the combination of a \mas function and a universal monotone vector-to-vector (such as \cite{sill1997monotonic})  can approximate \emph{all} monotone multiset-to-vector functions. 

\begin{restatable}[Universality]{theorem}{universal}\label{thm:universal_approx}
Let $V$ be a finite ground state, and let $F:\Pvk \to \RR^m$ be a \mas\ function. Then for every 
multiset-to-vector monotone function $f:\Pvk \to \RR^s $, there exists a vector-to-vector monotone function $M:\RR^m \to \RR^s$ such that $F(S)=M\circ f(S) $.
\end{restatable}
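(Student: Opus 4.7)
The plan is to invert $F$ in a way that preserves monotonicity and then apply this inversion to $f$ to obtain the desired $M$. (I read the stated identity as $f(S) = M(F(S))$, since only this matches the surrounding claim that a MAS embedding composed with a universal monotone map can represent any monotone set function.) The key hypothesis, which I will use repeatedly, is the biconditional $F(S) \leq F(T) \Leftrightarrow S \subseteq T$ that MAS gives for free: the forward direction is monotonicity of $F$, the reverse is separability.

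I would define $M : \R^m \to \R^s$ coordinatewise by
\begin{equation*}
    M_j(y) \;:=\; \max\bigl\{\, f(S)[j] \,:\, S \in \Pvk,\ F(S) \leq y \,\bigr\}, \qquad j \in [s],
\end{equation*}
with the convention that if the indexing set is empty then $M_j(y)$ is set to any constant strictly less than $\min_{S \in \Pvk} f(S)[j]$. Finiteness of $V$ (and implicitly of $k$) makes $\Pvk$ finite, so the max is attained whenever the set is nonempty and the default is well defined.

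Correctness is then a two-line check. Evaluating at $y = F(T)$, the separability part of MAS says $F(S) \leq F(T) \Leftrightarrow S \subseteq T$, so $M_j(F(T)) = \max\{\, f(S)[j] : S \subseteq T \,\}$. Monotonicity of $f$ forces $f(S)[j] \leq f(T)[j]$ for every such $S$, so this max equals $f(T)[j]$ and is attained at $S = T$; hence $M(F(T)) = f(T)$ for every $T \in \Pvk$. For monotonicity of $M$, if $y \leq y'$ in $\R^m$ then $\{S : F(S) \leq y\} \subseteq \{S : F(S) \leq y'\}$, so a max over the larger set dominates the max over the smaller one; the default case is handled by the strict gap we left between the default and any attainable nonempty-max value.

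The main obstacle I anticipate is purely bookkeeping: handling those $y \in \R^m$ for which no $S$ satisfies $F(S) \leq y$ (note that even $S = \emptyset$ can fail to lie in the set, since $F(\emptyset)$ need not be coordinatewise $\leq y$). This is precisely what forces the default-value clause above. A secondary concern arises only if one wants the theorem for $k = \infty$: then $\Pvk$ is infinite and one must additionally check that $\{S : F(S) \leq y\}$ is still finite (or at least that the sup is attained), which follows from injectivity of $F$ together with the fact that any infinite chain $S_1 \subsetneq S_2 \subsetneq \cdots$ forces $F(S_n)$ to grow strictly in at least one coordinate, contradicting boundedness by $y$. Beyond this, the argument is driven entirely by the MAS equivalence and the monotonicity of $f$, with no further obstruction.
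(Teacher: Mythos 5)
Your proof is correct and is essentially the paper's argument: the paper defines $M=f\circ F^{-1}$ on the image of $F$ (monotone there by separability) and then extends it by $M(v)=\max\{M(u): u\leq v,\ u\in\mathrm{Image}(F)\}$, which is exactly your coordinatewise max $\max\{f(S)[j]: F(S)\leq y\}$ folded into a single definition. Your version is in fact slightly more careful than the paper's, since you handle the case where no $S$ satisfies $F(S)\leq y$, and you correctly read the intended identity as $f(S)=M\circ F(S)$.
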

\vspace{-10pt}
\section{\our: Neural Modeling of \mas functions}
\vspace{-5pt}
\label{sec:model}
Now, our goal is to leverage our theoretical analysis to design neural network-based multiset-to-vector models that preserve monotonicity and weak separability. Motivated by the formulation in DeepSets and our analysis so far, we wish to design neural set functions of the form:  
\begin{equation}
    \label{eq:masnet}
    \our(S) = \textstyle\mtwoneural\round{\sum_{x \in S}\sigma\round{\moneneural(x)}}
\end{equation}
In our analysis, we required $\mtwoneural$ to be a monotone vector-to-vector function. In most of our experiments, we enforce this simply by choosing $M_2(x)=x$, but this can also be enforced by using monotone activations and non-negative parameters. 
\vspace{-0.5pt} 

\paragraph{\hatmas} To enforce weakly \mas in~\cref{eq:masnet},our results from Prop.~\ref{prop:weakmas_hat} and Theorem~\ref{thm:F_lower_holder} suggests using a \hhat activation as $\sigma$. In our experiment, we do it by not choosing $\sigma$ to be a specific hat activation, but rather the parametric form:    
\begin{equation}\label{eq:simpleHat}
\textstyle \sigma_{\alpha,\beta,\gamma}(x)=\relu \round{ \frac{x-\alpha}{\gamma \cdot \beta}}
+\relu \round{ \frac{x-(\alpha+\beta)}{(1-\gamma) \cdot \beta}}
-\relu \round{ \frac{x-(\alpha+\gamma \cdot \beta)}{\gamma \cdot (1-\gamma) \cdot \beta}}
\end{equation}
For all $\alpha$, $\beta>0$, and $\gamma\in (0,1)$, $\sigma_{\alpha,\beta,\gamma}$ is a hat function (Definition \ref{def:hhat}) with support $[\alpha,\alpha+\beta]$ and peak at $\alpha+m\beta$. Examples include the third and fourth functions in Figure \ref{fig:activations}. Applying this to an $m$-dimensional output with independent $\alpha, \beta, \gamma$ per dimension yields $3m$ parameters total.
\vspace{-0.5pt}

\paragraph{\our with $\relu$}
Motivated by our theoretical results from ~\cref{prop:relu_mas} of 2-Layer $\relu$ networks being weakly \mas and from our discussion of $\relu$ networks being H\"older separable (under appropriate assumptions) in Section~\ref{sec:holder_separable_discussion}, we propose use $\relu$ in \our, which case $M_{\theta_1}$ will have to be a network with at least 2 layers to ensure weakly \mas as in \cref{prop:relu_mas}. This gives one more way to design \our and we refer to this option as \relumas.
\vspace{-0.5pt}

\paragraph{Other variants of\ \our} Definition \ref{def:hhat} establishes a general hat function class extending beyond the piecewise linear functions as in \eqref{eq:simpleHat}. We present \integralmas in Appendix \ref{app:model}, which achieves universal approximation of the \hhat class by modeling its derivative via neural networks and approximating the integral. Additionally, using the specific hat activation $\tri$ (Figure \ref{fig:activations}) as $\sigma$ yields an alternative formulation called \trimas.
\vspace{-0.5pt}

\paragraph{Which \our to choose and when?} We have given multiple recipes of \our in the above discussion, and the question arises: which one to choose? We show via our experiments: For parameter-constrained scenarios requiring single-layer $M_{\theta_1}$ networks, \hatmas provides optimal separability (both theoretically and empirically). For deeper $M_{\theta_1}$ ($\geq 2$ layers), both \relumas and \hhat-based variants perform similarly, with \relumas often providing more stable training and better performance overall.

\vspace{-5.8pt}
\section{Experiments}
\vspace{-3.8pt}
 \label{sec:expts}
We evaluate the \our\ variants (Section~\ref{sec:model}) on synthetic, text and point-cloud datasets to characterize monotonicity and separability. Specifically, we focus on (exact and approximate) set containment. Given sets $S$ and $T$, with binary label $y(S, T) \in \{0, 1\}$ indicating (exact or approximate) set containment, we evaluate how accurately \our\ predicts $y(S, T)$.

\vspace{-4pt}
\subsection{Set Containment}
\vspace{-3pt}
We perform experiments on a synthetically generated dataset, four text datasets, and one image dataset. In each case, we split the dataset into 5:2:2 train, test, and dev folds. We minimize the following fixed-margin hinge loss that enforces vector dominance, to train the parameters of  \our.
\begin{align}
     \sum_{S,T} (1-y(S,T)) \min_{i\in[m]}\sbrac{\model(S)[i] - \model(T)[i] +\delta}_+ 
    + y(S,T)\max_{i\in[m]}\sbrac{\model(S)[i] - \model(T)[i] +\delta}_+ 
\vspace{-2mm}
\end{align}
 %

\begin{wraptable}[9]{r}[-10pt]{0.42\linewidth}
    \vspace{-3mm}
    \centering
    \captionsetup{font=small, skip=3pt}

    \begin{adjustbox}{max width=\linewidth}
        \begin{tabular}{@{}cc@{\hspace{9pt}}cccc@{}}
            \toprule
            \multicolumn{2}{c}{\textbf{Set sizes}}
            & \multicolumn{4}{c}{\textbf{Model}} \\
            \cmidrule(lr){1-2}
            \cmidrule(l){3-6}
            \(\lvert S\rvert\)
            & \(\lvert T\rvert\)
            & DS
            & ST
            & M-ReLU
            & M-\(\hhat\) \\
            \midrule
             1 &   2
               & \underline{0.98}
               & \underline{0.98}
               & \textbf{0.99}
               & \textbf{0.99} \\

             1 &  10
               & 0.59
               & 0.59
               & \textbf{0.99}
               & \textbf{0.99} \\

            10 &  30
               & \underline{0.89}
               & 0.80
               & \textbf{1.00}
               & \textbf{1.00} \\

            10 & 100
               & 0.54
               & 0.57
               & \underline{0.98}
               & \textbf{0.99} \\
            \bottomrule
        \end{tabular}
    \end{adjustbox}

    \caption{Accuracy on the synthetic dataset.}
    \label{tab:synth}
    \vspace{-2mm}
\end{wraptable}

\paragraph{Analysis on synthetic datasets} To generate our synthetic dataset, we first sample a target set $T \subset \R^d$, where each element  $x\in T$ is drawn from $\Normal(0, \bm{I})$, i.i.d. Given $T$, we compute $S$ with a fixed size $|S|=s$ as follows. We first obtain $S$ with positive labels $y(S,T)=1$ by drawing a subset from $T$, uniformly at random, without replacement. To generate $S$ with $y(S,T)=0$, we sample $s$ from $\Normal(0,\bm{I})$ independently. We chose $d=4$ and the set embeddings have $m=256$ dimensions, the class-ratio of subsets to non-subsets was taken to be 1:1. 

We compare two variants of \our--- \our-ReLU (M-ReLU) and \our-Hat (M-Hat)--- against Deep Sets (DS) \cite{zaheer2018deepsets} and Set Transformer (ST) \cite{lee2019settransformerframeworkattentionbased}. Table~\ref{tab:synth} summarizes the results
\begin{wrapfigure}[11]{r}{0.37\textwidth} 
\vspace{-1mm}
    \centering
     \includegraphics[width=0.35\textwidth]{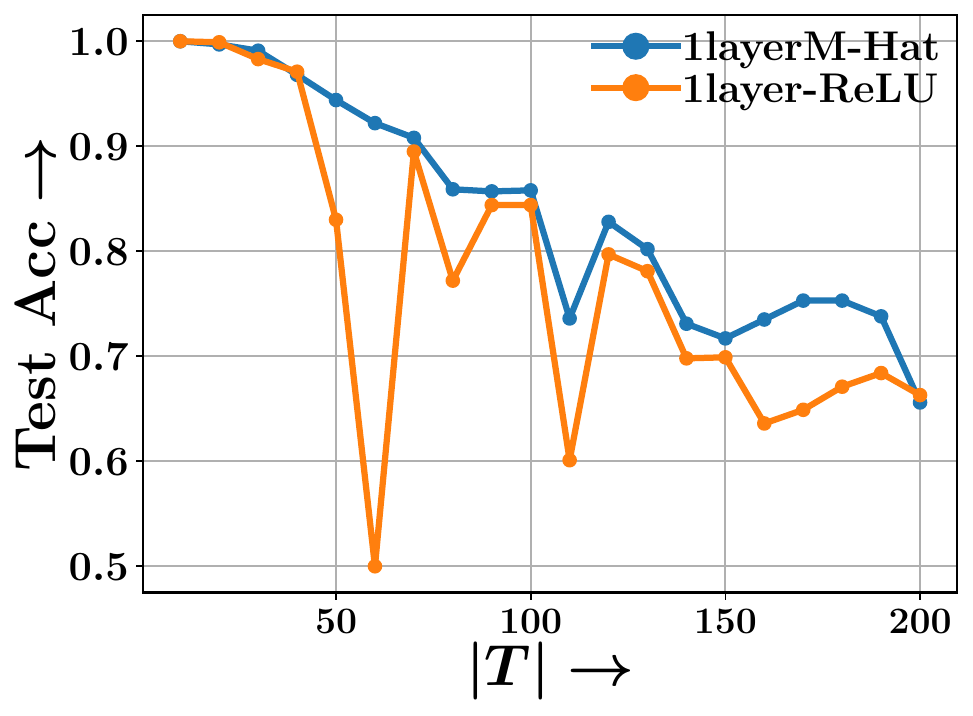}
    \caption{Acc vs $|T|$ for 1-layer MLP}
    \label{fig:shallow}
    \vspace{-15pt}  
\end{wrapfigure}
for varying values of $|S|$ and $|T|$. We observe that:
(1) Both variants of our method outperform the baselines;
(2) the baselines degrade significantly as $|T|$ increases (with fixed $|S|$), as predicting separability becomes harder as the gap between $|T|$ and $|S|$ increases;
(3) The performance of M-ReLU and M-Hat is comparable. We now compare 1-layer pointwise models ending 
 in $\relu$ and \hhat activation respectively, followed by aggregation: we call them 1layer-$\relu$ and 1layerM-\hhat. We see in Fig.~\ref{fig:shallow} that: (1) 1layerM-\hhat performs significantly than 1layer-$\relu$, validating the results from Propn.~\ref{prop:shallow_relu_not_sep}. (2) acc. declines for both as $|T|$ increases, consistent with the intuition that larger $|T\setminus S|$ gaps make the task harder.

\paragraph{Set containment on Text datasets} We evaluate on MSWEB, MSNBC, and Amazon Registry datasets, which exhibit natural set containment from user behavior. Each datapoint is a wordbag; we compute BERT embeddings for all unique words to form the ground set. $(S, T)$ pairs are sampled from these  (Appendix~\ref{app:addl-details}), and labeled via set containment. For \inexact containment,  small 
\begin{table}[t]
    \centering
    \small
    \captionsetup{font=small, skip=3pt}
    \setlength{\tabcolsep}{4pt}
    \renewcommand{\arraystretch}{1.05}

    \begin{tabular}{@{}lcccc@{}}
        \toprule
        \textbf{Model}
        & \textbf{Bedding}
        & \textbf{Feeding}
        & \textbf{MSWEB}
        & \textbf{MSNBC} \\
        \midrule
        DeepSets
        & 0.51 & 0.50 & 0.88 & 0.67 \\

        Set Transformer
        & 0.77 & 0.79 & 0.90 & 0.94 \\

        FlexSubNet
        & 0.88 & 0.85 & 0.92 & 0.91 \\

        Neural SFE
        & 0.52 & 0.53 & 0.88 & 0.66 \\

        \midrule
        \relumas
        & \textbf{0.98}
        & \textbf{0.98}
        & \textbf{0.99}
        & \textbf{0.97} \\

        \hatmas
        & \underline{0.94}
        & \underline{0.93}
        & \underline{0.97}
        & \underline{0.95} \\
        \bottomrule
    \end{tabular}

    \caption{Accuracy on text datasets with a 1:1 class ratio.}
    \label{tab:noisy_accuracy}
\end{table}

Gaussian noise is added to $S$ without changing $\labelst$.
We now compare the variants of \our: \relumas and \hatmas against several baselines, DeepSets~\cite{zaheer2018deepsets}, Set Transformers~\cite{lee2019settransformerframeworkattentionbased}, FlexSubNet~\cite{de2022neuralestimationsubmodularfunctions}, Neural SFE~\cite{sfekaralias2022neural}, for \inexact set containment on the datasets: MSWEB, MSNBC, Amazon-Feeding, Amazon-Bedding. Gaussian noise with $0.01$ std was used to generate noisy pairs. The class ratio of subsets to non-subsets was taken to be 1:1.  Here, set embeddings have dim $m=50$, and set elements have dim $d=768$. The results in table~\ref{tab:noisy_accuracy} show: (1) \relumas and \hatmas have similar acc., with \relumas being marginally better (2) Both are generally better than baselines.

\paragraph{Set containment on point clouds} We use ModelNet40~\cite{modelnetwu20153dshapenetsdeeprepresentation}, a dataset of 3D CAD models across 

\begin{wraptable}{r}{0.40\textwidth}
    \vspace{-4pt}
    \centering
    \scriptsize
    \captionsetup{font=small, skip=2pt}

    \begin{adjustbox}{max width=\linewidth}
        \begin{tabular}{@{}lccc@{}}
            \toprule
            \textbf{Model}
            & \multicolumn{3}{c}{\(\lvert S\rvert\)} \\
            \cmidrule(l){2-4}
            & \(128\) & \(256\) & \(512\) \\
            \midrule

            DeepSets
            & 0.52 & 0.51 & 0.50 \\

            Set Transformer
            & 0.62 & 0.51 & 0.50 \\

            FlexSubNet
            & 0.84 & 0.75 & 0.60 \\

            Neural SFE
            & 0.52 & 0.51 & 0.50 \\

            \midrule
            \relumas
            & \textbf{0.98}
            & \textbf{0.94}
            & \textbf{0.72} \\

            \hatmas
            & \underline{0.87}
            & \underline{0.81}
            & \underline{0.65} \\
            \bottomrule
        \end{tabular}
    \end{adjustbox}

    \caption{Point-cloud accuracy for different set sizes \(\lvert S\rvert\), using a 1:1 class ratio.}
    \label{tab:pointcloud}
    \vspace{-4pt}
\end{wraptable}

40 categories. For each object in category $C_1$, we sample 1024 points to form $T$. For true subsets, we sub-sample $S \subseteq T$ and for non-subsets, $S$ is sampled from a different category $C_2$.  Small white noise(std=0.005) is added to $S$ for the task of \inexact containment. We use a pointcloud encoder(such as pointnet) followed by a set-to-vector embedding model to produce $m=50$ dimensional embeddings, and each point in the dataset is a 3D coordinate with $d=3$ dimensions. The class ratio of subsets to non-subsets was taken to be 1:1.  Results are in table~\ref{tab:pointcloud}. We observe that: (1) \relumas and \hatmas are generally better than baselines, with \relumas being slightly better. (2) The accuracy of \our decreases as $|S|$ decreases, with the separability becoming hard for monotone models.
 %

\vspace{-3pt}
\subsection{Monotone Set function approximation}
\vspace{-5pt}
\begin{wraptable}[5]{r}{0.40\textwidth}
    \vspace{-5mm}
    \centering
    \scriptsize
    \captionsetup{font=small, skip=2pt}
    \setlength{\tabcolsep}{6pt}
    \renewcommand{\arraystretch}{1.05}

    \begin{tabular}{@{}lc@{}}
        \toprule
        \textbf{Model} & \textbf{MAE} \\
        \midrule
        DeepSets
        & \(0.01024 \pm 0.00030\) \\

        Set Transformer
        & \(0.01026 \pm 0.00033\) \\

        \relumas
        & \(\underline{0.01023 \pm 0.00107}\) \\

        \hatmas
        & \(\mathbf{0.00959 \pm 0.00016}\) \\
        \bottomrule
    \end{tabular}

    \caption{Monotone function approximation.}
    \label{tab:mae_results}
    \vspace{-5pt}
\end{wraptable}

In this experiment, we examine if a scalar variant of \our\ can approximate a monotone function $F^*$, by getting trained from set, value pairs in the form of $\set{(S,F^*(S))}$. Once trained, we measure $|\mas(S)-F^*(S)|$ for each $S$ and average over them to compute MAE. 
We compare \our against baselines: DeepSets, SetTransformer. We trained using MSE loss, and report test MAE results in Table \ref{tab:mae_results} where we see: (1) \hatmas is best performing, followed by \relumas (2) Both are better than baselines.
\vspace{-5pt}
\section{Conclusion and Future Work}
\vspace{-5pt}
In this work, we study the design of set functions that are useful in set containment through Monotone and Separating (MAS) properties. We derive bounds on the embedding dimensions required for MAS functions and show their nonexistence in infinite domains. To address this, we introduce a relaxed model, \our, which satisfies a weak MAS property and is provably stable. Experiments demonstrate that \our\ outperforms standard models on set containment tasks by leveraging monotonicity as an inductive bias. It would be interesting to check if our method can be used for applications in graphs, \eg, subgraph matching. Another potential direction is to consider other possible relaxations of separability.
\vspace{-5pt}
\section{Limitations}
\vspace{-5pt}
\label{app:impact}
Our work provides a theoretical analysis of monotonicity and separability for set functions in the context of set containment, and introduces neural models that outperform standard set-based architectures such as DeepSets and Set Transformers on set containment tasks. Our work has the following limitations, addressing which are interesting directions for future work:

\begin{itemize}
    \item After proving the impossibility of constructing \mas functions over infinite ground sets, we proposed parametric set functions defined with respect to a probability measure over the parameter space. We provided probabilistic guarantees: for non-subsets $S, T$, if their asymmetric distance $\das(S, T)$ is large, then a randomly sampled function from the parameter space will separate them with high probability. However, these guarantees hold only in the randomized setting---analogous to guarantees for neural networks at random initialization---and do not extend to models after training via gradient-based optimization.

    \item Our universality result, which shows that \mas functions composed with monotone vector-to-vector mappings can approximate all monotone set-to-vector functions, was only established for finite ground sets. Extending this result to infinite ground sets remains an open problem.

    \item We currently do not generalize our analysis to the subgraph isomorphism problem, which is a natural and strictly harder extension of the set-containment task.
\end{itemize}
\vspace{-5pt}
\section*{Acknowledgment}
\vspace{-5pt}
Nadav and Yonatan were supported by ISF grant 272/23,
Soutrik and Abir were supported by Amazon and Google Research Grant, as well as Bhide Family Chair Endowment Fund.

\newpage

\bibliography{refs}
\bibliographystyle{unsrtnat}
\newpage
\appendix

\begin{center}
\textbf{\large Appendix}
\end{center}
\section{More discussions on related work}
\label{app:related_works}
This work lies at the intersection of representation learning for sets and multisets, monotone and submodular set functions, and theoretical limits of learning under constraints on some partial order. Below we review related literature across several communities, such as machine learning, optimization, theoretical computer science, fuzzy measures, and physics. We highlight how these threads connect to our problem setting.

Multiset to vector functions, and permutation invariant functions (including symmetric and anti-symmetric polynomials) have been used in a bunch of real life modelling tasks. Anti symmetric polynomials in the context of quantum many-body physics was used in~\cite{huang2021geometrybackflowtransformationansatz}. For point cloud datasets, such set-to-vector embeddings are widely used such as in~\cite{qi2017pointnet}. In the context of search and retrieval, set-containment was directly used in~\cite{roy2023locality}. On a somewhat related aspect, \cite{singh2020explaining} deals with monotone ranking functions on a vector of This is specifically a monotone function from multisets to reals if the vector P denotes the weight/frequency of each object in the set. Moreover, since this is a vector-to-real monotone function, composing a set-to-vector monotone function with a monotonic ranking function preserves monotonicity. These are relevant to the recent usage of set containment in the context of information retrieval. Another related usage is in vector search~\cite{engels2023dessert}, where relevance based retrieval is done after converting objects into high-dimensional embeddings. 

More specifically monotone (multi)set to vector functions have been of interest in the form of theory of capacities~\cite{choquet1953capacities} which is a probability measure based on monotonicity. In fuzzy set theory, where partial membership is allowed, any set-to-vector monotone function is equivalent to a vector-to-vector monotone function. Measures and integrals on such structures have been explored in the form of Choquet integrals~\cite{agahifuzzy, monotonefuzzy} etc. In the context of game theory and economics, monotone and submodular functions are used to model diminishing marginal utilities and in combinatorial auctions, such as in~\cite{lehmann2001combinatorial,dobzinski2005approximation,feige2009maximizing}. In learning theory, understanding PAC learning and hardness of vector-to-vector monotone functions have always received a lot of interest. And using the membership vector of multisets, this id bassically equivalent to learning multiset-to-vector monotone functions. Some works such as~\cite{blum1988training} uses set contaionment and monotone set functions as a tool in the proof. Specifically, they perform a reduction of neural network training to the set splitting problem to prove the hardness result.This is the task of partitioning a set $S = S_1 \cup S_2$ such that for a collection of subsets $\mathcal{C} = \{C_i: C_i \subseteq S, \forall i\}$ we must have $C_i \not\subseteq S_1$ and $C_i \not\subseteq S_2$, $\forall i$. This is basically preventing set containment for a subset family. More direct applications include~\cite{you2017deep} which discusses the challenges of learning vector to vector monotonic functions or~\cite{balcan2012learning} that discusses learning monotone valuation functions. Works like~\cite{bshouty1993exact, odonnell2003learning} deal with learning monotone boolean functions to any constant accuracy, under the uniform distribution in polynomial time. Since any boolean vector is basically one-hot encoding of a set on a fixed size vocabulary, this paper essentially describes learning monotone set functions. Works on monotone learning such as~\cite{li2025monotonic} describe learning processes in which expected performance consistently improves
as the amount of training data increases. Thus, a trained model is a monotone set function from the training dataset to reals, where the output is a valuation metric.

So far, we have discussed works that have used multiset to vector embeddings and analyzed them. We have taken special care to discuss works that have involved the structure of monotonicity and applied it to various domains. However, to the best of our knowledge, there has been no previous work that specifically deals with partial order preserving set-to-vector maps. One related work is~\cite{dushnik1941partially} which have studied order dimension of a partially ordered structure. Now, note that partial order preserving sets are injective as discussed before, and injectivity is in fact a weaker condition that the MAS property. Here, we discuss a number of works on injective functions on multisets.  The paper~\cite{amir_finite} shows that moments of neural networks
do define injective multiset functions, provided that an analytic non-polynomial
activation is used. They also state and
prove a finite witness theorem, which is of independent interest. There are a series of works such as~\cite{amir2024fourierslicedwassersteinembeddingmultisets, balan2022permutation, sverdlov} that design bi-Lipschitz(hence, injective) embeddings on sets and graphs. On the other hand, works like~\cite{wagstaff2019limitations, wang2023polynomial} investigate the impact of latent dimension on the expressive power of set functions such as DeepSets.

\newpage
\section{Proofs of the technical results}
\label{app:proofs}
\subsection{Proofs of existential results on \mas functions}
\finiteground*
\begin{proof}
\label{proof-of-1}
The construction of a monotone embedding with $n=|V|$ simply uses one-hot encoding. Namely, we identify $V$ with $[n]$.  For every $S\in \Pvinf$, we define
\begin{align}
    F(S) = \sum_{s\in S}e_s\in \mathbb{R}^{n}
\end{align}
where $e_s\in \RR^n$ is the vector with $e_s[s]=1$ and $e_s[j]=0$ for all $j\neq s$. We now show it satisfies all conditions:
Be $S\subseteq T$, then by definition, $\forall v\in V, c_S(v)\leq c_T(v)$, then $F(S) =\sum_{s\in S}e_s\leq \sum_{t \in T} e_t=F(T)$ thus $F(S)\leq F(T)$. On the other hand, if $F(S)\leq F(T)$ then $\forall v\in V, c_S(v)\leq c_T(v)$, and thus $S\subseteq T$.

For the second direction, assume there exists \mas $F:V \to \R^m$ for $m\leq n -1$.
For every output dimension $i \in [m]$, 
there is a "maximal singleton element" $v^*_i \in V$  such that $F(\set{v^*_i})[i] \geq F(\set{v})[i], \forall v \in V$.  The value of the set function over the union of such $v^*_i$-s across $i\in [m]$ will dominate any singleton that is disjoint from this collection, thanks to the monotonicity of $F$. As $m < |V|$, then $T:=\bigcup_{i=1}^m \set{v^*_i}$ does not cover $V$. Thus, we can select an element disjoint from this collection named $s\in V$, which gives disjoint sets $S=\{s\}, T$ with $F(S) \leq F(T)$, which contradicts separability. 
\end{proof}
\finitegroundsetrestrictedcardinality*
\begin{proof}
\label{proof_of_probablistic_construction}
For convenience of notation, in this proof we will assume without loss of generality that $V=[n]$. 
As discussed in the main text, the function $F_{a_1,\ldots,a_m}$ will be weakly monotone for any choice of non-negative vectors. Our goal is to show that there exists a set of parameters $a_1,\ldots,a_m\in [0,1]^n$, such that the obtained $F_{a_1,\ldots,a_m}$ is a monotone embedding, which means that, if $S \not\subseteq T$, then there exists some $j$ such that $h_j(S)>h_j(T)$, or in other words $\langle f(S),a_j \rangle>\langle f(T),a_j \rangle$. We note that to prove this, it is sufficient to consider pairs $(S, T) $ which we will call \emph{extreme pairs}. This means that $S$ consists of a single element  $x\in V$, and $T$ consists of $k$ elements in $V$ but does not contain $x$ (but repetitions are allowed). Indeed, if the claim is true for such pairs, and $S,T\in \Pvk $ such that $S \not\subset T$, then there exists an element $x$ which is in $S$ more times than it's in $T$. First, remove all copies of $x$ from $T$, and we know that at least one copy of $x$ will be in $S$, note that the difference between the function values remains the same. Then, we will 'replace' $S$ with a smaller set $\tilde S=\{x\}$, and $\tilde T$ from adding more elements to $T$ from $V \setminus \{x\}$ to obtain a new set $\tilde{T}$ of maximal cardinality $k$. We then will have that $(\tilde S,\tilde T)$ is an extreme pair, and so for some $j$ we will have  that $h_j(\tilde{S})> h_j(\tilde{T})$. Therefore, using the weak monotonicity of the function $h_j$, we will have 
$$h_j(S)\geq h_j(\tilde{S})> h_j(\tilde{T})\geq h_j(T).$$

Given an extreme pair $(S,T)$ we consider the set of all 'bad' $a$, namely
\begin{align*}
    B_{S,T}:=\{a\in [0,1]^n, \langle f(S),a \rangle > \langle f(T),a \rangle \}
\end{align*}

Our first goal will be to bound the Lebesgue measure of this set. Recall that $S$ is a singleton $S=\{s\}$, and $T$ does not contain this singleton. Accordingly,
\begin{align*}
B_{S,T}=\{a\in [0,1]^n| \quad a_{s}\leq \sum_{t \in T} a_t\}
\end{align*}
We will compute the probability of the complement of this set, namely vectors $a$ such that $a_s > \sum_{t \in T} a_t$. 
First denote by
\begin{align*}
    E_{S,T}=\{a\in [0,1]^n|\forall t\in T, a_t < \frac{a_s}{k}\}
\end{align*}
Note, that 
\begin{align*}
  E_{S,T} \subseteq \{a\in [0,1]^n| \quad a_{s} > \sum_{t \in T} a_t\} 
\end{align*}
So let's compute the measure of $E_{S,T}$:

\begin{align*}
     \mathrm{Leb}(E_{S,T}) = \mathrm{Leb}(\{a\in [0,1]^n|\forall t\in T, a_t\leq \frac{a_s}{k}\}) = \int_0^1 (\frac{a_s}{k})^k da_s = \frac{1}{k^k\cdot (k+1)} \geq \frac{1}{(k+1)^{k+1}} 
\end{align*}
So, 
\begin{align*}
    \mathrm{Leb}(B_{S,T})\leq 1-\mathrm{Leb}(E_{S,T})  \leq 1 - \frac{1}{(k+1)^{k+1}}
\end{align*}

Now, consider the set 
$$B^m_{S,T}=\{a_1,\ldots,a_m \in B_{S,T} \},$$
Then we have 
$$ \mathrm{Leb}(B^m_{S,T})\leq\left(1 - \frac{1}{(k+1)^{k+1}}\right)^m$$
The vectors $a_1,\ldots,a_m $ will not define a monotone embedding, if there exists an extreme pair $S,T$  which is in $B^m_{S,T} $. The probability of this happening can be bounded by a union bound. There are $n \cdot {n-1 \choose k} $ extreme pairs, and for simplicity we will replace this number with a larger but simpler expression $n^{k+2}>n \cdot {n-1 \choose k} $. The union bound will then give us 
$$\mathrm{Leb}\{a_1,\ldots,a_m \text{ do not define a monotone embedding } \}\leq n^{k+2} \left(1-\frac{1}{(k+1)^{k+1}}\right)^m  $$
It is sufficient to show that this expression is smaller than $1$, for our $m$, which, by taking a logarithm, is equivalent to requiring that 
$$(k+2)\ln \left( n \right)+ m\ln \left( 1-\frac{1}{(k+1)^{k+1}} \right)<0 $$
Using Taylor's expansion it can be shown that $\ln(1-x)<-x$ for $x\in (0,1)$, and as a result it is sufficient to choose $m$ so that
$$(k+2)\ln \left( n \right)-  \frac{m}{(k+1)^{k+1}} <0 $$
or equivalently 
$$m> (k+1)^{k+1}\cdot (k+2) \cdot  \ln(n). $$
For convenience we slightly enlarge this lower bound to obtain $m\geq (k+2)^{k+2} \ln(n) $
\end{proof}
\lowerBoundK*
\label{lower_bound1}
\begin{proof}
The proof is partially inspired by the proof in \cite{dushnik1941partially}.

\paragraph{Lower bound on $k$:}
We want to show the lower bound of $\mm(V,k) \geq \min\round{2k, n-2}$. As in the proof of \cref{thm:dim_geq_n}, we select a  "maximal singleton element" $v^*_i \in V$ for every dimension $i \in \set{1, 2, \cdots, \mm(V, k)}$. Thus we must have: \begin{align*}
    \text{for each }i \in [\mm(V, k)]: \ F(\set{v^*_i})[i] \geq F(\set{v})[i], \forall v \in V
\end{align*} 
Now, if $\mm(V, k) \leq |V|-2$,  we can find two elements $\set{u_1, u_2}$ disjoint from the collection $\mathcal{V} := \bigcup_{i=1}^{\mm(V,k)}\set{v^*_i}$.  Without loss of generality, we can assume that $F(\set{u_1})[i] \geq F(\set{u_2})[i]$ for most indices in $[\mm(V,k)]$. We can then construct a set $T$ containing $u_1$ and all the $v^*_i$ for all indices  $i$ for which $F(\set{u_1})[i] < F(\set{u_2})[i]$, thus we define:
\begin{align*}
    T := \set{u_1}\cup\set{v_j^*: F(\set{u_1})[j] < F(\set{u_2})[j]}
\end{align*}
We then have that:
\begin{align*}
    &F(\set{u_2})[i]\leq \max\{  F(\set{u_1})[i], F(\set{v^*_i})[i]  \} \leq F(T)[i], \forall i \in [\mm(V,k)]
\end{align*}
where the last inequality uses the monotonicity of $F$. This implies that $F(S) \leq F(T)$ However,  $S $ is not a subset of $T$. It follows that  $k \leq |T|-1 \leq \mm(V,k)/2$, as otherwise separability of $F$ in $\Pvk$ is violated. Hence, we get that: $\mm(V,k) \geq 2k$

\paragraph{Lower bound on $n$: }
For this proof, we will use the Erd\"os-Szekeres theorem, which in particular states, for a natural $N\geq 2$, that a sequence of real numbers with $\geq N^2$ elements has a subsequence of cardinality $N$ which is monotone (either monotonously increasing or monotonously decreasing). 
Assume we have $F:\Pvk\rightarrow \mathbb{R}^{m}$ a monotone embedding, where $V=[n]$. Assume by way of contradiction that: 
\begin{align*}
    \log_2(\log_3 n)>m  \text{ or equivalently }  n > 3^{2^m} 
\end{align*}

Consider the first coordinate and look at the sequence
\begin{align*}
     F(\{1\})[1],\ldots,F(\{n\})[1]
\end{align*}
This is a real-valued sequence, and so there is  a monotone subsequence $a_1<a_2<\ldots,<a_\ell $ of the original sequence $1,\ldots,n$, with  $\ell=\sqrt{3^{2^m}}=3^{2^{m-1}}$, such that:
\begin{align*}
    F(a_1)[1]\leq F(a_2)[1]\leq \ldots \leq F(a_\ell)[1] \text{ or } F(a_1)[1]\geq F(a_2)[1]\geq \ldots \geq F(a_\ell)[1]
\end{align*}
 Next, we consider the second coordinate of this subsequence, namely $F(a_1)[2],\ldots,F(a_\ell)[2]$ and obtain a new subsequence $\{a_j\}$ such that both the sequences: $\{F(a_j)[1]\}$ and  $\{F(a_j)[2]\}$ are ordered monotonically, and the size of the subsequence is the square root of the previous one, namely $3^{2^{m-2}}$.  After doing this  $m$ times, we have a subsequence of size $3$. Namely, we have three distinct elements $u,v,w \in V$ such that for all $i=1,\ldots,m$,
 \begin{align*}
 \text{either } F(\{u\})[i]\leq F(\{v\})[i] \leq F(\{w\})[i] \text{ or }  F(\{u\})[i]\geq F(\{v\})[i] \geq F(\{w\})[i]
 \end{align*}

It follows that for all $i$, we have:
\begin{align*}
    F(\{v\})[i]\leq \max\left(F(\{u\})[i],F(\{w\})[i] \right) \leq F(\{u,w\})[i]
\end{align*}
Thus, $F(\{v\}) \leq F(\{u,w\})$, which is a contradiction since $\{v\}$ is not contained in $\{u,w\}$. 
\end{proof}

\paragraph{A refined lower bound:}Now, we may extend the above proof idea to generalize the lower bound on $\mm(V,k)$, which is useful specially when $|V| >> k$. In such cases, the following gives a tighter lower bound on $\mm(V, k)$ which is quadratic in $k$.
\begin{lemma}
    \label{lemma:refined_bound_k}
    For all $\ell \in [k]$, we have $\mm(V, k) \geq \min\round{|V| - \ell, \ell k +2\ell - \ell^2}$. More specifically, if $\ell = \frac{k+2}2$, then $\mm(V, k) \geq \min\curly{|V| - \frac{k+2}2, \round{\frac{k+2}2}^2}$
\end{lemma}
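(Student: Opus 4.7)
The plan is to extend the proof of Theorem~\ref{thm:lower_bound_k} (which established the $\ell=2$ instance of this lemma and gave $\mm(V,k)\geq 2k$). In that proof, two extra elements $u_1, u_2$ were chosen outside the collection $\set{v^*_i}_{i\in[m]}$ of coordinate-wise maximal singletons, and a WLOG swap yielded the factor $\tfrac{1}{2}$ in the bound. For general $\ell$, the natural generalization is to pick $\ell$ extra elements and replace the ``halving'' with a pigeonhole over $\ell$ candidates, yielding the factor $\tfrac{1}{\ell}$.

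Concretely, suppose for contradiction that a MAS function $F:\Pvk\to\RR^m$ exists with $m<\min(|V|-\ell,\ \ell(k+2-\ell))$. For each $i\in[m]$ pick $v^*_i\in\arg\max_{v\in V} F(\set{v})[i]$. Since $|\set{v^*_i}_i|\leq m<|V|-\ell$, there are at least $\ell+1$ elements in $V\setminus\set{v^*_i}_i$, and we select distinct elements $u_1,\ldots,u_\ell$ from this complement. For each $j\in[\ell]$, let
$$I_j := \set{i\in [m] : F(\set{u_j})[i] > F(\set{u_{j'}})[i]\ \text{for all } j'\neq j},$$
so each coordinate $i$ lies in at most one $I_j$, whence $\sum_{j=1}^\ell |I_j|\leq m$. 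By pigeonhole, some $j^*$, WLOG $j^*=1$, satisfies $|I_1|\leq\lfloor m/\ell\rfloor$. Now define
$$S := \set{u_1}, \qquad T := \set{u_2,\ldots,u_\ell}\cup\set{v^*_i : i\in I_1}.$$
For each $i\in[m]$: if $i\in I_1$, then $v^*_i\in T$ and by maximality $F(T)[i]\geq F(\set{v^*_i})[i]\geq F(\set{u_1})[i]$; otherwise, some $u_j\in T$ with $j\neq 1$ satisfies $F(\set{u_j})[i]\geq F(\set{u_1})[i]$, so by monotonicity $F(T)[i]\geq F(\set{u_1})[i]$. Thus $F(S)\leq F(T)$, while $u_1\notin T$ gives $S\not\subseteq T$. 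For $F$ to remain MAS on $\Pvk$, we must have $|T|>k$; combining with $|T|\leq(\ell-1)+\lfloor m/\ell\rfloor$ yields $\lfloor m/\ell\rfloor\geq k-\ell+2$, and hence $m\geq\ell(k+2-\ell)$, contradicting the assumption. The ``specifically'' statement then follows by choosing $\ell=(k+2)/2$ (with the obvious integer adjustment when $k$ is odd), for which $\ell(k+2-\ell)=((k+2)/2)^2$.

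No substantive obstacle is anticipated: the argument is a direct bookkeeping generalization of Theorem~\ref{thm:lower_bound_k}, trading a halving argument for a pigeonhole over $\ell$ candidates. The only subtle point is using \emph{strict} inequality in the definition of $I_j$, which ensures $\sum_j|I_j|\leq m$; coordinates on which several of the $u_j$'s tie for the maximum belong to no $I_j$, but this is harmless since such coordinates also automatically satisfy $F(T)[i]\geq F(\set{u_1})[i]$ without needing to adjoin $v^*_i$ to $T$.
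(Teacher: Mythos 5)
Your proposal is correct and follows essentially the same route as the paper's proof: the same maximal singletons $v^*_i$, the same $\ell$ extra elements, the same pigeonhole over the coordinates where each $u_j$ is the strict unique maximum (your $I_j$ is exactly the paper's $m_q$ count), and the same construction of $T$ from the remaining $u_j$'s plus the relevant $v^*_i$'s, forcing $|T|>k$ and hence $m\geq \ell(k+2-\ell)$. Your explicit remark about ties (strictness in the definition of $I_j$) just makes precise a point the paper handles implicitly.
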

\begin{proof}
    Consider any $\ell \in [k]$. Like in the proof just before, we consider a "maximal singleton element" $v^*_i \in V$ for every dimension $i \in \set{1, 2, \cdots, \mm(V, k)}$ and obtain the collection $\mathcal{V} := \bigcup_{i=1}^{\mm(V,k)}\set{v^*_i}$. Let us assume $\mm(V, k) \leq |V| - \ell$. Thus, we can produce the collection $\mathcal{U} := \set{u_1, u_2, \cdots, u_\ell}$ disjoint from $\mathcal{V}$, i.e $\mathcal{V} \cap \mathcal{U} = \emptyset$. For each $u_q \in \mathcal{U}$, let $m_q$ be the number of co-ordinates $j \in [\mm(V,k)]$ such that $F(\set{u_q})[j] > F(\set{u_r})[j], \forall r \in [\ell]\setminus\set{q}$. Clearly, $\sum_{q \in [\ell]}m_q \leq \mm(V, k)$, thus, $\exists \ell_0 \in [\ell]\setminus$ such that $m_{\ell_0} \leq \frac{\mm(V,k)}\ell$. Like before, we consider the sets $S = \set{u_{\ell_0}}$. We construct $T$ as follows: we select $\mathcal{U}\setminus\set{u_{\ell_0}}$ and take union with all $\set{v^*_j}$ for each dimension $j$ in which $F(\set{u_{\ell_0}})[j] > F(\set{u_r}), \forall r \in [\ell]\setminus\set{\ell_0}$. Since $\set{v^*_j}$ is the maximal singleton for the dimension $j$, it follows that $F(\set{u_{\ell_0}})[j] \leq F(\set{v^*_j})[j]$. Hence, if we define:
    \begin{align*}
        T = \round{\mathcal{U}\setminus\set{u_{\ell_0}}} \bigcup \set{v^*_j: F(\set{u_{\ell_0}})[j] > F(\set{u_r}), \forall r \in [\ell]\setminus\set{\ell_0}}
    \end{align*}
    By the choice of $\ell_0$, we have that $\abs{\set{v^*_j: F(\set{u_{\ell_0}})[j] > F(\set{u_r}), \forall r \in [\ell]\setminus\set{\ell_0}}} \leq \frac{\mm(V, k)}\ell$ Now, we must have that, $\forall j \in [\mm(V, k)], \exists t \in T\text{ such that }F(\set{t})[j] \geq F(\set{u_{\ell_0}})[j]$. By monotonicity of $F$, we thus have that $F(\set{u_{\ell_0}})[j] \leq F(T)[j], \forall j \in [\mm(V,k)]$ which implies $F(S) \leq F(T)$. But by design, $S \cap T = \emptyset$, which contradicts separability of $F$. Thus, we must have that, $|T| > k$, i.e $|T| - 1 \geq k$. But as stated before, $|T| \leq \frac{\mm(V,k)}\ell + \ell - 1$. Combining these two, we get that: $\frac{\mm(V,k)}\ell \geq k + 2 - \ell$, which implies $\mm(V, k) \geq k\ell + 2\ell - \ell^2$, thus proving our result.

    Now, we already have a sufficient upper bound on $\mm(V,k)$ from~\cref{thm:upper_bound_V} that's $O\round{k^{k+2}\log(n)}$. Now, when $n >> k$ we have that $O\round{k^{k+2}\log(n)} < n - O(k)$, in which case we may apply $\ell = \frac{k+2}2$ in the above lemma. And since we already know that $O\round{k^{k+2}\log(n)} < n - O(k)$, we thus get that $\mm(V,k) < n - O(k)$. Thus, in this case the above lemma gives us $\mm(V,k) \geq \round{\frac{k+2}2}^2$, which is a tigher quadratic lower bound in $k$.
\end{proof}
\paragraph{When $k=1$}\label{app:k=1}
In  the degenerate case where only a single set element is allowed, $k=1$, it is possible to construct a MAS function even for the uncountable ground set $V=[-1,1]$ by defining
$
F:\mathcal{P}_{\leq 1}([-1,1]) \rightarrow \mathbb{R}^{2}
$
via       
\begin{align}
	\label{degenerate_case}
	F(S) =
	(-1, -1) \ \text{if } S = \emptyset\quad  \text{ and }\quad  F(S) =   
	(-x, x)  \  \text{if } S = \{x\}, 
\end{align}
\paragraph{Separable embeddings (even non monotone) don't exist:}Now, as discussed in the main text, we give a justification here on why we choose to relax separability and not monotonicity in the definition of wekaly \mas embeddings. Here, we show that, with the added assumptions of injectivity and continuity, even non-monotone set-t-vector embeddings do not exist in any dimension for uncountable ground sets.
\begin{theorem}
\label{thm:separable_general}
    There does not exist a continuous injective separable set function taking values in $\R^m$ for any $m$ when the ground set $V$ is an open subset of $\R^d$
\end{theorem}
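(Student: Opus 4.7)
The plan is to derive a contradiction by combining separability with Brouwer's theorem on invariance of domain. The essential observation is that if $F$ is separable, its restriction to $k$-element subsets of $V$ produces an antichain of $\R^m$ under the componentwise order; if $k$ is chosen so that the space of such $k$-subsets has dimension at least $m$, this antichain must contain a nonempty open portion of an $m$-dimensional region, which is impossible.

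First, I would pick $k := \lceil m/d \rceil$ so that $kd \geq m$ and consider the restriction of $F$ to $k$-element subsets of $V$ (i.e., multisets with $k$ pairwise distinct members). For any two such sets $S \neq T$, neither can contain the other, since they share the same cardinality but are distinct. Separability then forces $F(S)$ and $F(T)$ to be incomparable in the componentwise order on $\R^m$, so the restricted image is an antichain.

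Next, I would parameterize this restriction by an open subset of $\R^{kd}$. Pick a closed ball $B \subseteq V$ (possible since $V$ is open in $\R^d$) and define
\begin{equation*}
W := \{(x_1, \ldots, x_k) \in B^k : x_i \neq x_j \text{ and } x_1 <_{\text{lex}} \cdots <_{\text{lex}} x_k\},
\end{equation*}
which is a nonempty open subset of $\R^{kd}$. The map $\psi(x_1, \ldots, x_k) = \{x_1, \ldots, x_k\}$ is continuous and injective from $W$ into the space of $k$-element subsets, so $\Phi := F \circ \psi : W \to \R^m$ is continuous, injective, and has antichain image. If $kd = m$, Brouwer's invariance of domain gives that $\Phi(W)$ is open in $\R^m$; but any nonempty open set contains two comparable points $u$ and $u + \varepsilon \mathbf{1}$, contradicting the antichain property. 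If $kd > m$, I would compose $\Phi$ with the inclusion $\iota : \R^m \hookrightarrow \R^{kd}$ (appending zero coordinates); then $\iota \circ \Phi : W \to \R^{kd}$ is continuous injective with domain open in $\R^{kd}$, so by invariance of domain its image is open in $\R^{kd}$, yet it lies inside the proper linear subspace $\iota(\R^m)$, which has empty interior---again a contradiction.

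The main technical obstacle is verifying that the multiset space, restricted to $k$-element subsets, genuinely inherits the expected $kd$-dimensional topology from $\R^{kd}$; the lexicographic-ordering trick in the definition of $W$ is designed precisely to produce an explicit open parameterization that makes $\psi$ continuous and injective. Beyond this bookkeeping, the argument reduces to a direct application of invariance of domain and the elementary fact that antichains in $\R^m$ have empty interior.
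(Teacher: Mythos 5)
Your proposal follows essentially the same route as the paper's proof: restrict attention to sets of one fixed cardinality (so that distinct sets are automatically incomparable under inclusion and separability forces their images to be incomparable in $\R^m$), parameterize these sets by an open subset of a Euclidean space, and apply invariance of domain to conclude that the image of the embedding is open, which is incompatible with the antichain property. If anything, you handle the general case more explicitly than the paper: you treat arbitrary $d$ and the dimension mismatch $kd>m$ (by composing with the inclusion $\R^m\hookrightarrow\R^{kd}$), and you avoid the detour through the DeepSets decomposition, which the paper invokes but does not actually need.

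One step is false as stated, though easily repaired: for $d\geq 2$ the set $W$ cut out by the lexicographic condition $x_1<_{\mathrm{lex}}\cdots<_{\mathrm{lex}}x_k$ is \emph{not} open in $\R^{kd}$, because strict lexicographic inequality is not an open condition --- for instance $(0,0)<_{\mathrm{lex}}(0,1)$, yet $(\varepsilon,0)>_{\mathrm{lex}}(0,1)$ for every $\varepsilon>0$, so points of $W$ can be approached by points outside $W$. (Taking $B$ to be a \emph{closed} ball also spoils openness of $B^k$.) The fix is immediate: take an open ball $U\subseteq V$ and set $W:=\{(x_1,\dots,x_k)\in U^k:\ x_1[1]<x_2[1]<\cdots<x_k[1]\}$, ordering by strictly increasing first coordinates. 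This $W$ is nonempty, open in $\R^{kd}$, its points are automatically pairwise distinct, and $\psi$ remains injective on it; equivalently, you may simply replace your $W$ by its interior, which contains this set. With that correction, both of your cases ($kd=m$ via the antichain argument and $kd>m$ via invariance of domain in $\R^{kd}$) go through, and the argument matches the paper's.
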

\begin{proof}
    \label{Impossibility_separating}
    We restrict ourselves to $\mathcal{P}_{=n}(V)$, i.e, only subsets of size $n$. Since $F$ is continuous and permutation-invariant, according to Theorem-7 of \cite{zaheer2018deepsets}, $f(\{x_1, \cdots, x_n\}) = \rho(\sum_{i=1}^n \phi(x_i))$, where $\rho, \phi$ are continuous and bijective(given in condition). Now, consider the space $\St \subset [0,1]^n, \St = \{(x_1, \cdots, x_n): x_1 < x_2 < x_3 < \cdots < x_n;  0 < x_i < 1, \forall i \in [n]\}$. Note that $\St$ is open in $[0,1]^n$ (to see this, let $\epsilon = \min_{i \in [n-1]} (x_{i+1} - x_i)$. The ball of radius $\frac{\epsilon}{2}$ is contained in the domain). Since $ F: \St \to \R^n$ is an injective, continuous map. Thus, by Invariance of Domain Theorem(\cite{munkres198479}), $F(\St)$ is open in $\R^n$ as well. Note that each element of $S$ uniquely corresponds to an element of $\mathcal{P}_{=n}(V)$. Thus, $F(\mathcal{P}_{=n}(V))$ is open in $\R^n$. Now, consider any set $A = \{a_1, \cdots, a_n\}$. Since $F$ is an open map, we have that $\exists \delta > 0$ such that $B(F(A), \delta) \subseteq F(\mathcal{P}_{=n}(V))$. Thus, $\exists B \in \mathcal{P}_{=n}(V)$ such that $F(B)_i = F(A)_i + \frac{\delta}{2n}, \forall i \in [n]$. But then, $F(B) > F(A)$ implies $A \subset B$ but $|A| = |B| = n$ by construction. This gives a contradiction.
\end{proof}

\subsection{Proofs of results on on weakly \mas functions}

\paragraph {Proposition 6.}
   If the deep set model in \eqref{eq:parametric_setfn} is implemented with a vector-to-vector monotonously increasing function $\mtwo$ and a non-negative activation $\sigma$, then $F(\bullet; (A,b))$ is monotone for every $(A,b)$. If in addition $\sigma$ is monotone (increasing or decreasing), then there exists $S \not \subseteq T$ with $F(S; (A,b))\leq F(T; (A,b)) $ for all $A,b$.   
\begin{proof}
 Let $x<y$ and consider the two sets: 
 \begin{align*}
     S = \{\frac{x+y}{2}\},T = \{x,y\}
 \end{align*}
 And we claim that for all $A,b$, although $S\not\subseteq T$, $F(S,(A,b))\leq F(T,(A,b))$. Note that as $x<\frac{x+y}{2}<y$, then it must have been that 
 \begin{align*}
     Ax+b\leq A\cdot\frac{x+y}{2}+b\leq Ay+b 
 \end{align*}
 Or
  \begin{align*}
     Ay+b\leq A\cdot\frac{x+y}{2}+b\leq Ax+b 
 \end{align*}
 Then, by (weakly) monotonicity of the activation $\sigma$ and of $M_2$, it must be that 
 \begin{align*}
     \sigma(A\cdot\frac{x+y}{2}+b)\leq \sigma(A\cdot x+b)
 \end{align*}
 Or that 
  \begin{align*}
     \sigma(A\cdot\frac{x+y}{2}+b)\leq \sigma(A\cdot y+b)
 \end{align*}
 In any case, it's true that 
 \begin{align*}
     \sigma(A\cdot\frac{x+y}{2})\leq \sigma(A\cdot x + b)+\sigma(A\cdot y+b)
 \end{align*}
 Then, by monotonicity of $\mtwo$, 
 \begin{align*}
          F(S,(A,b))=\mtwo\cdot(\sigma(A\cdot\frac{x+y}{2}+b))\leq \mtwo\cdot(\sigma(A\cdot x+b)+ \sigma(A\cdot y+b))=F(T,(A,b))
 \end{align*}
 Thus, 
 \begin{align*}
     F(S,(A,b))\leq F(T,(A,b))
 \end{align*}
\end{proof}

\STnonmon*
\begin{proof}
We need to cone up with a class of tuple of parameters $(W_Q, W_K, W_V)$ such that the function $S \mapsto F(S; W_Q, W_K, W_V)$ is not monotone. Choose $W_Q = W_K = W$ such that $W$ is full-rank. Choose any non-zero $W_V$. Now, choose some vector $\bx_1$ such that $W_V\bx_1$ is non-zero. By swapping $\bx_1$ with $-\bx_1$ if necessary, we can choose some index $j$ such that for $\bv_1 = W_V\bx_1$, the $j$-th index is negative, \ie $\bv_1[j] < 0$. By assumption we also have that $\bq_1 = \bk_1$ is a non-zero vector, and thus has a positive norm. Now let $S$ be the singleton set $S = \set{\bx_1}$, which is a subset of $T = \set{\bx_1, \mathbf{0}_d}$ where $\mathbf{0}_d$ is the $d-$dimensional all-zero vector. Note that, $F(S; W_Q, W_K, W_V) = \bv_1$. And since $\bq_2, \bk_2, \bv_2$ are all zero, we have: $F(T; W_Q, W_K, W_V) = (\alpha_{1,1} + \alpha_{1,2})\bv_1$. Note that, $\frac{\alpha_{1,1}}{\alpha_{1,1}+\alpha_{1,2}} = \frac{e^{\norm{\bq_1}^2}}{e^{\norm{\bq_1}^2} + 1} + \frac12 > 1$. Hence, it follows that $F(T; W_Q, W_K, W_V)[j] < F(S; W_Q, W_K, W_V)[j]$ and thus, the non-monotonicity follows
\end{proof}

\paragraph{Proposition 8.}
    If $\mtwo$ is strictly monotone increasing, and $\sigma$ is a hat activation function, then $F(S,(a,b))=\mtwo(\sum_{x\in S}\sigma(a\tran x+b)) $ is weakly \mas function. 
\begin{proof}
    It's clear the function is a set function.
    Be $S\subseteq T$, as $\sigma$ is positive, then for any $a,b$, 
    \begin{align*}
        \sum_{x\in S}\sigma(a^Tx+b)\leq \sum_{x\in S}\sigma(a^Tx+b) + \sum_{x\in T\setminus S}\sigma(a^Tx+b)=\sum_{x\in T}\sigma(a^Tx+b)
    \end{align*}
    So 
    \begin{align*}
        F(S,(a,b))\leq F(T,(a,b))
    \end{align*}
    Denote by $[s_1,s_2]$ the support of $\sigma$,and choose some point in it such that $s\in [s_1,s_2]:\sigma(s)>0$.
    Now, be $S\not\subseteq T$, then there exists $z\in V:C_S(z)>C_T(z)$.
    Now, remove $z$ from both sets $C_T(z)$ times. This subtracts from both function values the same amount.
    Now, we have that $z\in S,z\not\in T$. Denote by $\epsilon =\frac{min_{t \in T}|z-t|>0}{2}$. Choose
    \begin{align*}
        a = \frac{s_2-s_1}{\epsilon}, b =-(a\cdot z - s)
    \end{align*}
    We claim that $\forall t\in T, \sigma(a\cdot t+b)=0$. Assume it's not zero, thus $s_1 < a\cdot t + b< s_2$. Thus
    \begin{align*}
         s_1 < \frac{s_2-s_1}{\epsilon}\cdot t - a\cdot z + s  < s_2
    \end{align*}
    Thus,
    \begin{align*}
        s_1-s < \frac{s_2-s_1}{\epsilon}\cdot t -\frac{s_2-s_1}{\epsilon}\cdot z  < s_2-s
    \end{align*}
    Thus, 
    \begin{align*}
        \frac{s_1-s}{s_2-s_1}\cdot \epsilon <  t - z  < \epsilon \cdot \frac{s_2-s}{s_2-s_1}
    \end{align*}
    But, 
    \begin{align*}
        |\frac{s_1-s}{s_2-s_1}|\leq 1 \\
        |\frac{s_2-s}{s_2-s_1}|\leq 1
    \end{align*}
    Thus, 
    \begin{align*}
        -\epsilon\leq t - z \leq \epsilon
    \end{align*}
    A contradiction to the definition of $\epsilon$. Thus, $F(T,(a,b))=\mtwo(0)$.
    But 
    \begin{align*}
         \sigma(a\cdot z + b) = \sigma(a\cdot z-a\cdot z +s_1) = \sigma(s_1)>0
    \end{align*}
    Thus 
    \begin{align*}
        F(S,(a,b)) = \mtwo(\sum_{x\in S}\sigma(a\tran x+b))\geq \mtwo(\sigma(s_1))>\mtwo(0)=F(T,(a,b))
    \end{align*}
    And we are done.
\end{proof}

\proprelumas*
\begin{proof}
    Consider the function $\tri$ as in the third plot of Fig.~\ref{fig:activations}. We know that $\tri$ belongs to the class of \hhat functions. From \ref{prop:weakmas_hat} we know that, for any $S \not \subseteq T \in \Pvinf$, $\exists \ba \in \R^d$ and $b \in \R$ such that $\sum_{x \in S} \tri(\ba\tran x + b) > \sum_{y \in T} \tri(\ba\tran y + b)$. Now, consider the following parameters:
    \begin{equation}
        \label{eq:relu_mas_params}
        \begin{split}
            & \boldA = \sbrac{\ba, \ba, \ba}\tran \in \R^{3 \times d}, \boldb = [b+1, b-1, b]\tran \in \R^3\\
            & \ba_1 = [1,1,-2]\tran \in \R^3, \text{ and }b_1 = 0
        \end{split}
    \end{equation}
   Then, for the above values of parameters, we have: 
   \begin{align*}
       \relu \circ \mathcal{A}_2(x) = \sbrac{\relu\round{(\ba\tran x + b) + 1}, \relu\round{(\ba\tran x + b) - 1}, \relu\round{\ba\tran x + b}}
   \end{align*}
   Thus, we have: 
   \begin{align*}
       &\ba_1\tran\round{\relu \circ \mathcal{A}_2(x)} + b_1 \\
       & =  \relu\round{(\ba\tran x + b) + 1} +  \relu\round{(\ba\tran x + b) - 1}- 2 \relu\round{\ba\tran x + b} \\
       &= \tri\round{\ba\tran x + b}
   \end{align*}
   Thus we have, $\relu \circ \mathcal{A}_1 \circ \relu \circ \mathcal{A}_2(x) = \relu \circ \tri\round{\ba\tran x + b} = \tri\round{\ba\tran x + b}$. Hence, $F(S) = \sum_{x \in S} \tri\round{\ba\tran x + b} > \sum_{y \in S} \tri\round{\ba\tran y + b} = F(T)$ and weak separability is proved. Since $\mtwo$ is monotone increasing and $\relu$ is non-negative, it follows that $F$ is monotone as well, hence weakly \mas.
\end{proof}
\subsection{H\"older separability of \mas functions}
\flowerholder*

\begin{proof}
    We consider $V \subset \R^d$ to be a compact set with maximum norm of $1$. Also, we note that, if $\sigma(x)$ is a \hhat function with support in $[\gamma_1, \gamma_2]$, then $\sigma(\frac{x - \gamma_1}{\gamma_2 - \gamma_1})$ is a \hhat function with support in $(0,1)$. For this proof, we thus consider $\sigma$ to have support in $[0,1]$. For support in general $[\gamma_1, \gamma_2]$ and a general norm bound of $V$, the distributions on $b, c$ needs to be scaled and shifted to get same Lower H\"older results with the H\"older constants scaled appropriately. \footnote{If $\sup_{x \in V}\norm{x} \leq B$, and $\supp(\sigma)\subseteq[\gamma_1, \gamma_2]$ then distributions of $b, c$ should be shifted by $\gamma_1$ and linearly scaled by $\delta: = \frac{B}{\gamma_2 - \gamma_1}$ and to be: $b \sim \Unif(-\gamma_1-\delta, -\gamma_1+\delta)$ and $c \sim \Unif(0, 2\delta)$ respectively}. We recall the definition of $\das(S, T)$ from equation~\ref{eq:asymmetric_dist}. Let $S = \set{x_1, \cdots, x_\sizes}$ and let $T = \set{y_1, \cdots, y_\sizet}$, where $S, T \subset V$. Clearly by the definition of $\das(S, T)$, it is sufficient to consider $|S| \leq |T|$, i.e $\sizes \leq \sizet \leq k$.
    
    Using the definition of EMD gives us the following equivalent definition of $\das(S, T)$: If $\Omega_{\sizes, \sizet} := \set{ \tau : [\sizes] \to [\sizet], \tau \text{ is injective}}$ , then $\das(S, T) = \min_{\tau \in \Omega_{\sizes, \sizet}}\sum_{i=1}^\sizes\norm{x_i - y_{\tau(i)}}_2$. Let $\tau^*$ be the optimal coupling between $[\sizes] \text{ and } [\sizet]$, i.e $\tau^* = \arg\min_{\tau \in \Omega_{\sizes, \sizet}}\sum_{i=1}^\sizes\norm{x_i - y_{\tau(i)}}_2$, which means $\das(S, T):=\sum_{i=1}^\sizes\norm{x_i - y_{\tau^*(i)}}_2$. Now, $\forall i \in [\sizes]$, we define $\Delta_i := \norm{x_i - y_{\tau^*(i)}}_2$. For any $x \in \R^d, r > 0$, let $\B(x, r)$ denote the Euclidean ball centered at $x$ with radius $r$. We define $\Omega_i := \set{\ell \in [\sizes]: y_{\tau^*(\ell)} \in \B(x_i, \Delta_i)}, \forall i \in [M]$. With these notations, we make the following observation:

    \paragraph{Claim-1:}
        Consider any $ r \in (0, k)$. Then, $\exists i_0 \in [\sizes]$ such that $\sum_{\ell \in \Omega_{i_0}}\Delta_\ell \leq \Delta_{i_0}r$ and $\Delta_{i_0} \geq \frac{r^k}{k^{k+1}}\das(S, T)$ 
    
    \begin{proof}
        For the sake of contradiction, we assume otherwise, i.e $\forall i \in [\sizes]$, we have: either $\Delta_i \leq \frac{r^k}{k^{k+1}}\das(S, T)$ or $\sum_{\ell \in \Omega_i}\Delta_\ell \geq \Delta_i r$. WLOG, assume the following order on $\Delta_i$'s: $\Delta_1 \geq \Delta_2 \geq \cdots \geq \Delta_M$. We build the following directed graph $\Gcal$ on $[\sizes]$. Start with the node $v_0 = 1$ and add we add one node to $\Gcal$ at a time. At step $t$, we select $v_t := \arg\max_{u \in \Omega_{v_{t-1}}} \Delta_u$ and add the edge $(v_{t-1}, v_t)$ to $\Gcal$ along with the new node $v_t$. 
        
        Note that, by definition of $\das(S, T)$ we have that: $\das(S, T) = \sum_{i=1}^{\sizes}\Delta_i$. This, implies that: $\Delta_{v_0} \geq \frac{\das(S, T)}M$, as we take argmax over entire $[\sizes]$ .Thus, $\Delta_1 \geq \das(S,T)/M \geq \das(S,T)/k$. Thus, by the contradictory assumption, we must have: $\sum_{u \in \Omega_{v_0}}\Delta_u \geq \Delta_1 r$. Since $v_1 = \arg\max_{u \in \Omega_{v_0}}\Delta_u$, it thus follows that: $\Delta_{v_1}\geq \frac rk\Delta_1 \geq \frac {r}{k^2}\das(S,T)$. Following the similar argument, we thus have, $\Delta_{v_2} \geq \frac rk)\Delta_{v_1}$ and so on. Thus, $\forall t \in \NN$ we must have: $\sum_{u \in \Omega_{v_t}}\Delta_u \geq \Delta_{v_t}r$. Thus, for $v_{t+1}$, defined by: $\arg\max_{u \in \Omega_{v_t}}\Delta_u$, we must have: $\Delta_{v_{t+1}} \geq \frac rk \Delta_{v_t}$. Since this holds for every $t \in \NN$ and since $r > 0$ by hypothesis, we must have: $\Delta_{v_t} \geq \frac rk \Delta_{v_{t-1}} \geq \round{\frac rk}^2 \Delta_{v_{t-1}} \geq \cdots \geq \round{\frac rk}^t\Delta_{v_0}$. 
        
        As we work with finite $\sizes, \sizet$ either $\Gcal$ is a DAG and this process terminates; or $\Gcal$ has a cycle. In former case, at termination step $\bm{t_t}$ we cannot add any more nodes, thus $\Omega_{\bm{t_t}} = \emptyset$. Since all nodes are unique in this case, we must have $\bm{t_t} \leq k$. So, $\Delta_{\bm{t_t}} \geq \round{\frac rk}^{\bm{t_t}}\Delta_{v_0} \geq \frac1k \round{\frac rk}^{\bm{t_t}}\das(S, T)$. Since $\frac rk \in (0,1)$ by hypothesis and since $\bm{t_t} \leq k$, we thus have: $\Delta_{\bm{t_t}} \geq \frac 1k \round{\frac rk}^k\das(S, T)$, and hence done with this case.
        
        For the second case, consider the smallest cycle in $\Gcal$. Let this cycle of length $p$ be $\mathcal{C} = \set{v_\beta \to v_{\beta+1} \to \cdots \to v_{\beta + {p-1}} \to v_\beta}$ for some $\beta \geq 0$. Then, we have: $\beta + i + 1 \in \Omega_{\beta+i}, \forall i \in \set{0, \cdots p-1}$, where addition of indices is modulo $p$. Thus we have: $\norm{x_{v_\beta + i} - y_{\tau^*(v_\beta + i +1)}}_2 < \norm{x_{v_\beta + i} - y_{\tau^*(v_\beta + i)}}_2, \forall i \in \set{0, \cdots, p-1}$. Taking sum, we have the following: $\norm{x_{v_\beta} - y_{\tau^*(v_\beta+1)}}_2 + \norm{x_{v_\beta+1} - y_{\tau^*(v_\beta+2)}}_2 + \cdots + \norm{x_{v_\beta + p-1} - y_{\tau^*(v_\beta)}}_2 < \sum_{j=0}^{p-1}\norm{x_{v_\beta+j} - y_{\tau^*(v_\beta+j)}}_2$. Hence, switching from the coupling $i \mapsto \tau^*(i)$ to $i \mapsto \tau^*(i+1)$ for $i \in \set{v_\beta, \cdots, v_{\beta + p-2}}$ and from $v_\beta + p - 1 \mapsto \tau^*(v_\beta+ p - 1)$ to $v_\beta + p - 1 \mapsto \tau^*(v_\beta)$ strictly reduces the value of the sum $\sum_{i=1}^\sizes\norm{x_i - y_{\tau^*(i)}}_2$, but we started with the optimal coupling $\tau^*$. This gives the reqd. contradiction. 
    \end{proof}
    We also make the following observation regarding this "optimal coupling" $\tau^*$:
    
    \paragraph{Observation-2:} For any given $i \in [\sizes]$, if $y_j \in T \cap \Omega_i$ for some $j \in [\sizet]$, then $\exists \ell \in [\sizes]$ such that $j = \tau^*(\ell)$
    \begin{proof}
        The proof follows from the following observation: if $y_j \in \Omega_i$ for any $i \in [\sizes]$, then we must have: $\norm{x_i - y_j}_2 \leq \Delta_i = \norm{x_i - y_{\tau(i)}}_2$. Thus, if $y_j$ was "free", i.e if $\not \exists \ell \in [\sizes]: j = \tau^*(\ell)$, then we can switch from $i \mapsto \tau^*(i)$ to $i \mapsto j$, and this would reduce the sum $\sum_{i=1}^\sizes\norm{x_i - y_{\tau^*(i)}}_2$, but $\tau^*$ was the optimal coupling. This gives a contradiction
    \end{proof}

    Now, as per the starting discussion, we consider $\supp(\sigma) = [0,1]$, and the proof for general support of $[\gamma_1, \gamma_2]$ requires the scaling and shifting as mentioned before.
    
    By assumption we have, $\lim_{ t\to 0^+}\sigma\dash(t) > 0$. We define $\kappa_1 := \frac{\lim_{ t\to 0^+}\sigma\dash(t)}2$. Since $\sigma$ is piecewise continuously differentiable as per assumption, we must have $\sigma\dash(t) \geq \kappa_1, \forall t \in (0, \omega)$ for some $\omega \in (0,1)$ by the continuity of $\sigma\dash$. Note that, by Lagrange's Mean Value Theorem, we have that:
    \begingroup
    \allowdisplaybreaks
    \begin{align*}
        &\forall x, y \in (0, \omega), \frac{\abs{\sigma(x) - \sigma(y)}}{\abs{x-y}} = \sigma\dash(z) , \text{ where } z \in (x,y)\\
        &\implies \forall x, y \in (0, \omega), \abs{\sigma(x) - \sigma(y)} \geq \kappa_1\abs{x-y}, \text{ since } z \in (0, \omega) \\
        &\implies \lim_{y \to 0^+} \abs{\sigma(x) - \sigma(y)} \geq \kappa_1\abs{x}, \text{ as }\sigma\text{ is cts.}\\
        &\implies \sigma(x) \geq \kappa_1x, \forall x \in (0, \omega), \text{ as }\sigma \geq 0
    \end{align*}
    \endgroup
    
     Also, $\sigma$ is upper Lipschitz with constant $\kappa_2 > 0$ per definition of Hat fn.  Now, consider the $i_0$ coming from the Claim as proved above. We define $\omega':= \min(\frac{\omega}{1-\omega}, 1)$. Let $P \in \NN$ be such that $\frac {\kappa_1}{P} < \frac 12$ and $\frac{\kappa_1}{\kappa_2 P} < k$ and $\omega' > \frac 2P$. Note that, such a $P \in \NN$ exists, since taking $P \to \infty$ satisfies all the 3 conditions. 
     
     Now, for a given $a \in \S^{d-1}$, consider the sets $S_a = \set{a\tran x: x \in S}, T_a = \set{a\tran y: y \in T}$.  Note that, since we have $\norm{x_i}, \norm{y_j} \leq 1, \forall i \in [M], j \in [N]$ as per earlier assumption, we get $S_a, T_a \subseteq (-1,1)$. For given $a \in \S^{d-1}$, we define the "optimal coupling" between the real-valued sets $S_a, T_a$ as: $\tau^*_a:= \arg\min_{\tau \in \Omega_{M, N}} \sum_{i=1}^M\abs{a\tran x_i - a\tran y_{\tau(i)}}$. Based on this, we define $\Delta^a_i := \abs{a\tran x_i - a\tran y_{\tau^*_a(i)}}$ and $\Omega^a_i = \set{\ell \in [M]: y_{\tau^*_a(\ell)} \in \B(a\tran x_i, \Delta^a_i)}$. We have: $\das(S_a, T_a) := \sum_{i=1}^M \Delta^a_i$. Let $i_0 \in [M]$ be the index that comes from applying the claim to the sets $S_a, T_a$. Now, we define the regions $\bm{B}_a := (a\tran x_{i_0} - \omega'\Delta^a_{i_0}, a\tran x_{i_0} - \frac{\Delta^a_{i_0}}P)$ and for a given pair $(a \in \S^{d-1}, b \in \R)$, we consider $\bm{C}_{a,b} := (\frac{a\tran x_{i_0}-b}\omega, a\tran x_{i_0}+\Delta^a_{i_0}-b)$. Thus, we have:
     \begin{align*}
         &\round{a\tran x_{i_0}+\Delta^a_{i_0}-b} - \round{\frac{a\tran x_{i_0}-b}\omega} \\
         = & \frac{(1-\omega)(b - a\tran x_{i_0}) + \omega\Delta^a_{i_0}}{\omega}\\
         = & \frac{1-\omega}\omega\round{(b- a\tran x_{i_0}) + \frac{\omega}{1-\omega}\Delta^a_{i_0}} \\
         \geq & \frac{1-\omega}\omega\round{b- (a\tran x_{i_0} - \omega'\Delta^a_{i_0})} > 0, \forall b > a\tran x_{i_0} - \omega'\Delta^a_{i_0}
     \end{align*}
    The above analysis shows that $\bm{C}_{a, b} \neq \emptyset, \forall b \in \bm{B}_a$, so the intervals are well defined. Now, if $b \in \bm{B}_a$ and $y < a\tran x_{i_0} - \Delta^a_{i_0}$ then we have: $y - b < a\tran x_{i_0} - \Delta^a_{i_0} - a\tran x_{i_0} + \omega'\Delta^a_{i_0} = (\omega' - 1)\Delta^a_{i_0} \leq 0$ by definition of $\omega'$. On the other hand, if $b \in \bm{B}_a$ and $ c \in \bm{C}_{a,b}$, for given any $ y > a\tran x_{i_0} + \Delta^a_{i_0}$ we have: $\frac{y - b}c \geq \frac{a\tran x_{i_0} - b + \Delta^a_{i_0}}{a\tran x_{i_0} - b + \Delta^a_{i_0}} =1$. Combining these, we get that if $\abs{y - a\tran x_{i_0}} \geq \Delta^a_{i_0}$, then $\frac{y - b}c \not \in (0,1), \forall b \in \bm{B}_a, c \in \bm{C}_{a,b}$. Thus, conditioning on a given $a \in \S^{d-1}$, we can write the conditional expectation taken over $b, c$ as follows:
    \begingroup 
    \allowdisplaybreaks
    \begin{align*}
        & \Exp_{b, c}\sbrac{\sum_{x \in S}\sigma\round{\frac{a\tran x-b}c}- \sum_{y \in T}\sigma\round{\frac{a\tran y-b}c}}_+\\
        \geq & \Exp_{b, c}\sbrac{\round{\sum_{x \in S}\sigma\round{\frac{a\tran x-b}c}- \sum_{y \in T}\sigma\round{\frac{a\tran y-b}c}}\indic{b \in \bm{B}_a, c \in \bm{C}_{a,b}}}_+, \text{using non-negative RV}\\
        \geq & \frac 14 \int_{b=a\tran x_{i_0} - \omega'\Delta^a_{i_0}}^{a\tran x_{i_0}}\int_{c=\frac{a\tran x_{i_0} - b}\omega}^{a\tran x_{i_0} + \Delta^a_{i_0} - b} \bigg[\sigma\big(\frac{a\tran x_{i_0}-b}c\big) - \\
        &\sum_{\ell \in \Omega^a_{i_0}} \bigg|\sigma\big(\frac{a\tran y_{\tau^*_a(\ell)}-b}c\big) \ - \sigma\big(\frac{a\tran x_\ell-b}c\big)\bigg|\bigg]_+\dif c\dif b\\
        \geq &\frac 14 \int_{b=a\tran x_{i_0} - \omega'\Delta^a_{i_0}}^{a\tran x_{i_0}}\int_{c=\frac{a\tran x_{i_0} - b}\omega}^{a\tran x_{i_0} + \Delta^a_{i_0} - b} \sbrac{\kappa_1 \frac{a\tran x_{i_0} - b}{c} - \kappa_2 \frac{\kappa_1}{P\kappa_2}\frac{\Delta^a_{i_0}}c }_+\dif c\dif b, \text{from Lipschitz cts. }\sigma\\
        =&\frac{\kappa_1}4\int_{b=a\tran x_{i_0} - \omega'\Delta^a_{i_0}}^{a\tran x_{i_0}}\sbrac{a\tran x_{i_0} - b - \frac{\Delta^a_{i_0}}P}_+\int_{c=\frac{a\tran x_{i_0} - b}\omega}^{a\tran x_{i_0} + \Delta^a_{i_0} - b}\frac{\dif c}{c} \dif b, \text{as }\relu(rx)=r\relu(x)\\
        =&\frac{\kappa_1}4\int_{b=a\tran x_{i_0} - \omega'\Delta^a_{i_0}}^{a\tran x_{i_0}-\frac{\Delta^a_{i_0}}P}\round{a\tran x_{i_0} - b - \frac{\Delta^a_{i_0}}P}\ln\round{\frac{\omega(a\tran x_{i_0}+\Delta^a_{i_0}-b)}{a\tran x_{i_0}-b}} \dif b\\
        =&\frac{\kappa_1}4\int_{z=\frac{\Delta^a_{i_0}}P}^{\omega'\Delta^a_{i_0}}\round{z - \frac{\Delta^a_{i_0}}P}\ln\round{\frac{\omega(z+\Delta^a_{i_0})}z}\dif z, \text{putting }z:=a\tran x_{i_0}-b\\
        \geq & \frac{\kappa_1}4\int_{z=\frac{\Delta^a_{i_0}}P}^{\frac{\omega'\Delta^a_{i_0}}2}\round{z - \frac{\Delta^a_{i_0}}P}\ln\round{\omega\round{1+\frac{\Delta^a_{i_0}}z}} \dif z. \text{using non-negativity of integrand}\\
        \geq & \frac{\kappa_1}4\ln\round{\omega\round{1 + \frac{2}{\omega'}}}\int_{z=\frac{\Delta^a_{i_0}}P}^{\frac{\omega'\Delta^a_{i_0}}2}\round{z - \frac{\Delta^a_{i_0}}P}\dif z\\
        \geq & \frac{\kappa_1}8\ln\round{2-\omega}\round{\frac{\omega'}2-\frac 1P}^2(\Delta^a_{i_0})^2, \text{ using } \omega' \leq \frac{\omega}{1-\omega}\\
        \geq& C \cdot \frac 1k \round{\frac{\kappa_1}{\kappa_2 P}}^k (\das^a(S, T))^2, \text{ where }C> 0\text{ as }\omega \in (0,1) \text{ and } P \geq \frac{2}{\omega'}\text{ by earlier choice}
    \end{align*}
    \endgroup
   Thus, we have shown that: $\Exp_{b, c}\bigg[\big[F(S)- F(T)\big]_+\big| a \bigg] \geq C'\cdot(\das^a(S,T))^2$. By law of total expectation, we have:
   \begingroup
   \allowdisplaybreaks
   \begin{align*}
       &\Exp_{a, b, c}\bigg[F(S) - F(T)\bigg]_+ = \Exp_a \Exp_{b, c}\bigg[\big[F(S)- F(T)\big]_+\big| a \bigg]\\
       \geq & C'\Exp_a \big(\das^a(S, T)\big)^2 \geq C'\round{\Exp_a \das^a(S,T)}^2, \text{by Jensen's ineq.}
   \end{align*}
   \endgroup
   Thus, to show Lower H\"older separability with exponent $\lambda =2$, it remains to show that, $\Exp_{a \sim \Unif(\S^{d-1})} \das^a(S,T) \geq c_1 \cdot \das(S, T)$, where $c_1 > 0$ is a constant. Now, we make the following observation:

   \paragraph{Observation-3:}Suppose we're given given $\ell \in \NN$ and $\ell$ many non-zero vectors $u_1, u_2, \cdots, u_\ell \in \R^d$. If $ a \sim \S^{d-1}$, then $\exists \delta(\ell) > 0$ such that: $\Prob\curly{\abs{a\tran u_i} \geq \delta(\ell) \norm{u_i}, \forall i \in [\ell]} \geq \frac 12$
   \begin{proof}
       Firstly, we note that, for a fixed $i \in [\ell]$, $\abs{a\tran u_i}$ is a continuous non-negative real-valued random variable. Thus, $\exists t_i \in \R_+$ such that $\Prob\curly{\abs{a\tran u_i} \geq t_i} \geq \frac 12$. Now, Choosing $\delta_i := \frac{t_i}{\norm{u_i}} > 0$ gives us: $\Prob\curly{\abs{a\tran u_i} \geq \delta_i\norm{u_i}} \geq \frac 12, \forall i \in [\ell]$. Since $\delta_i > 0, \forall i \in [\ell]$ and $\ell$ is finite, we thus have: $\delta:= \min_{i \in \ell}\delta_i > 0$. For this particular choice of $\delta$, we have that $\Prob\curly{\abs{a\tran u_i} \geq \delta\norm{u_i}} \geq \frac 12, \forall i \in [\ell]$, and the observation is proved.
   \end{proof}.
   Now, consider all $\bm{\ell_0} := \binom{N}{M}M!$ vectors $(x_i - y_{\tau(i)}) \in \R^d$, where $\tau$ runs over all injective functions from $[M] \to [N]$. For this particular choice of $\bm{\ell_0}$, we get from the last observation that, $\exists \delta(N, M) > 0$ such that $\Prob\curly{\abs{a\tran\round{x_i - y_{\tau(i)}}} \geq \delta(N, M)\norm{x_i - y_{\tau(i)}}} \geq \frac 12$, for all possible injective $\tau: [M]  \to [N]$, we call this set of $a \in \S^{d-1}$ to be $A \subseteq \S^{d-1}$. Now, given a fixed $a \in \S^{d-1}$, we have a particular $\tau_a: [M] \to [N]$ that gives $\das^a(S, T) = \sum_{i=2}^M\abs{a\tran\round{x_i - y_{\tau_a(i)}}}$. Also, let $\tau^* : [M] \to [N]$ be the "optimal coupling" such that $\das(S, T) = \sum_{i=1}^M\norm{x_i - y_{\tau^*(i)}}$. Now, we can write:
   \begin{align*}
       & \Exp_{a \sim \Unif(\S^{d-1})} \das^a(S,T)\\
       = & \Exp_{a \sim \Unif(\S^{d-1})}\sbrac{\sum_{i=1}^M\abs{a\tran(x_i - y_{\tau_a(i)})}}\\
       \geq & \Exp_{a \sim \Unif(\S^{d-1})}\sbrac{\sum_{i=1}^M\abs{a\tran(x_i - y_{\tau_a(i)})}\bigg|a \in A}\Prob_a\curly{a \in A} \\
       \geq & \frac{\delta(N, M)}2 \Exp_a\sbrac{\sum_{i=1}^M \norm{x_i - y_{\tau_a(i)}}_2} \\
       \geq & \frac{\delta(N, M)}2 \Exp_a\sbrac{\sum_{i=1}^M \norm{x_i - y_{\tau^*(i)}}_2}, \text{as }\tau^*\text{ is optimal coupling}\\
       =& \frac{\delta(N, M)}2\round{\sum_{i=1}^M \norm{x_i - y_{\tau^*(i)}}_2} = \frac{\delta(N, M)}2\das(S, T)
   \end{align*}
   Hence, $F$ is H\"older separable with constant $\frac{C'\delta(N,M)}2$ and exponent $\lambda = 2$.
\end{proof}
\probabilityseparation*
\begin{proof}
    Denote by $f$ the scalar-valued set function for a co-ordinate of $F$; we proved it's a lower Holder function.
    Let $S\not\subseteq T$, then by lower continuity we know that,
    \begin{equation}
        \Exp_{w \sim \mu(\cdot)}\norm{\sbrac{f(S; w) - f(T; w)}_+}_1 \geq C \cdot \das(S, T)^2, \\ \ \text{for all } S, T \in \Pvk
    \end{equation}
    We know that 
    \begin{equation*}
    \begin{split}
        &c \cdot \das(S, T)^2\leq \Exp_{w \sim \mu(\cdot)}\norm{\sbrac{f(S; w) - f(T; w)}_+}_1 \\
        \leq&\Prob(f(S;w)\geq f(T;w))\cdot \sup_{w\in \W} |f(S;w)\geq f(T;w)|
    \end{split}
    \end{equation*}
    Now, note that as $f$ is a bounded function, denote by $M$ the supremum of $f$ over all sets $S$ and weights $w\in \W$. 
    Thus by the triangle equality, 
    \begin{equation*}
            c \cdot \das(S, T)^2 \leq 2M\cdot \Prob(f(S;w)\geq f(T;w))
    \end{equation*}
    Thus, we obtain that
    \begin{align*}
        \frac{c \cdot \das(S, T)^2}{2\cdot M} \leq \Prob(f(S;w)\geq f(T;w))
    \end{align*}
    Thus, the complement satisfies:
    \begin{align*}
       \Prob(f(S;w) < f(T;w))\leq 1 - \frac{c \cdot \das(S, T)^2}{2\cdot M}
    \end{align*}
    Now, taking $m$ independeing copies of $f$, namely $F$, we have that 
    \begin{align*}
          \Prob(\forall i\in [m],F(S;w)_i < F(T;w)_i)\leq (1 - \frac{c \cdot \das(S, T)^2}{2\cdot M})^{m}
    \end{align*}
    Denoting by $C:=\frac{c}{2M}$ yield the desired:
    \begin{align*}
          \Prob(F(S;w) < F(T;w))\leq (1 - C \cdot \das(S, T)^2)^{m}
    \end{align*}
\end{proof}

\paragraph{Lower H\"older separability of $\relu$ networks} So far, we have only given results about Lower H\"older separability of \mas functions with Hat activations. However, as seen in Proposition \ref{prop:relu_mas}, we have seen that a two layer linear network with $\relu$ activation is weakly \mas, and as observed from the proof technique, such a 2 layer $\relu$ net can "simulate" a hat activation. In our arguments, we have shown Lower H\"older separability of Hat Activation based parametric set functions by finding a separating parameter point $w \in \Wcal$ for a given pair of sets $(S, T)$, and constructing an open set of non-zero measure, $\Wcal' \subseteq \Wcal$ around $w$. We lower bounded the expectation of the non-negative Random variable in the definition of Lower H\"older separability(in ~\cref{eq:lower_holder}) by restricting the expectation only to $\Wcal'$ and showing that it is proportional to $\das(S, T)^\lambda$. One can extend this exact same idea to construct the open set of parameters for the two later $\relu$ networks from the separable parameter point that we get from proof of Proposition \ref{prop:relu_mas}. 

Also, as obtained from Proposition \ref{prop:shallow_relu_not_sep}, we see that one layer $\relu$ networks are not even weakly \mas. But a key key step  in that proof uses that one can find 3 collinear elements in the ground set $V$ and construct sets with them to obtain the necessary counter-example. Thus, a natural question is to ask: whether shallow 1-layer networks with $\relu$ activation are also weakly \mas with some additional assumptions on the domain. We show in the following result that, with the additional assumption of $V$ being a hypersphere, we may guarantee Lower H\"older separability for even shallow $\relu$ networks.

\begin{theorem}
    \label{thm:shallow_relu_lower_holder}
    Given a ground set $V \subseteq \S^{d-1}$. We define $F: \Pvk \to \R$ as $F(S) = \sum_{x \in S}\relu\round{a\tran x + b}$, where $a \sim \Unif(\S^{d-1})$ and $b \sim \Unif(-1,1)$. Then, for any $S, T \in \Pvinf$, we have: $\Exp_{a, b}\sbrac{F(S) - F(T)}_+ \geq C(k) \das(S, T)^{(d+3)2^k}$ for some constant $C(k) > 0$ depending only on $k$.
\end{theorem}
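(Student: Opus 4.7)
The plan is to adapt the strategy used for Theorem~\ref{thm:F_lower_holder} (hat activation) to the $\relu$ activation by exploiting the sphere assumption $V\subseteq \S^{d-1}$. The essential obstacle is that $\relu$ has unbounded support, so we cannot ``turn off'' unwanted terms by shrinking a scale parameter as we did for hat activations. The sphere hypothesis will compensate: on $\S^{d-1}$, any point $s$ is an extreme point of the convex hull of any finite $T\subseteq \S^{d-1}\setminus\{s\}$, so $s$ can be strictly separated from $T$ by an affine hyperplane $a^\top x+b=0$, and we can quantify the measure of such hyperplanes.

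\textbf{Step 1 (optimal coupling).} Apply the machinery from the proof of Theorem~\ref{thm:F_lower_holder}: using Claim~1 and Observation~2, pick an index $i_0$ so that $\Delta_{i_0}=\|x_{i_0}-y_{\tau^*(i_0)}\|\gtrsim \das(S,T)/k^{O(k)}$, while all elements $y_{\tau^*(\ell)}$ in the ball $\B(x_{i_0},\Delta_{i_0})$ are matched to nearby $x_\ell$ via $\tau^*$ with $\sum_{\ell\in \Omega_{i_0}}\Delta_\ell\leq r\Delta_{i_0}$ for a suitable small $r$.

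\textbf{Step 2 (randomization of direction).} Using an Observation-3 style argument (applied to the $O(k!)$ difference vectors $x_i-y_j$ that can arise in the coupling analysis), show that with probability at least $1/2$ over $a\sim \Unif(\S^{d-1})$, every relevant projected gap $|a^\top(x_i-y_j)|$ is at least $\delta(d,k)\|x_i-y_j\|$ for some explicit $\delta(d,k)>0$ with a power-of-$k$ loss at each coupling step. In particular the projected asymmetric distance $\das(S_a,T_a)$ is at least $\delta(d,k)\,\das(S,T)$, and the projected points $\{a^\top x\}\cup\{a^\top y\}$ are generic (all distinct and appropriately spread).

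\textbf{Step 3 (choice of $b$ and sign control).} Condition on a ``good'' $a$. Define the interval
\[
\mathcal B_a=\bigl(-a^\top x_{i_0},\; -a^\top x_{i_0}+c\,\Delta^a_{i_0}\bigr)
\]
for a small constant $c$. For $b\in \mathcal B_a$, the term $\relu(a^\top x_{i_0}+b)$ is positive of order $\Delta^a_{i_0}$. By shrinking $c$, we can ensure $\relu(a^\top y+b)=0$ for every $y\in T$ whose projection is sufficiently far below $a^\top x_{i_0}$, in particular for all $y_{\tau^*(\ell)}\notin \B(x_{i_0},\Delta_{i_0})$ and for all unmatched $y_j$ (these are far by Observation~2 combined with the genericity from Step~2). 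The remaining ``close'' pairs $(x_\ell,y_{\tau^*(\ell)})$ with $\ell\in \Omega_{i_0}$ are controlled by the $1$-Lipschitz property of $\relu$: their total net contribution is at most $\sum_{\ell\in\Omega_{i_0}}\Delta^a_\ell\leq r\Delta^a_{i_0}$, which, for small enough $r$, is dominated by $\relu(a^\top x_{i_0}+b)$.

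\textbf{Step 4 (integration).} Integrating over $b\in \mathcal B_a$ of length $c\Delta^a_{i_0}$ gives a conditional lower bound on $\E_b[F(S)-F(T)]_+$ of order $(\Delta^a_{i_0})^2$. Combining with Step~2 via the law of total expectation and Jensen's inequality yields
\[
\E_{a,b}[F(S)-F(T)]_+\;\gtrsim\;\delta(d,k)^{2}\,\das(S,T)^{2}\cdot \text{(losses from Claim~1 iterations)}.
\]

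\textbf{Accounting for the exponent $(d+3)2^k$.} The factor $2^k$ arises because Claim~1, when instantiated with the $\relu$ (no scale parameter $c$), must be applied iteratively: each time we isolate a ``dominant'' element, the remaining ambient scale grows, so a square-root style loss compounds $k$ times, much as the Erd\H os--Szekeres iteration in the proof of Theorem~\ref{thm:lower_bound_k} produced an $n^{1/2^m}$ subsequence. The factor $d+3$ tracks the dimensional losses in Step~2: a random $a\in\S^{d-1}$ has a $\Theta(t)^{d-1}$ probability of aligning a given difference vector within tolerance $t$, and these losses combine with the two-dimensional $(b\text{-measure})\times(\text{height})$ contribution from Step~4.

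\textbf{Expected main difficulty.} The hardest part is Step~3: unlike the hat-activation proof, which localizes contributions via the scale parameter $c$, here we have no built-in cutoff, and we must rely on a hyperplane cut to kill the ``far'' elements while simultaneously controlling the ``close'' ones by Lipschitz cancellation against their matched partners. Verifying that the sphere assumption really does guarantee a positive-measure halfspace separation, and carefully quantifying the dimensional constants so they cleanly combine to give $(d+3)2^k$, is where the bulk of the technical work lies.
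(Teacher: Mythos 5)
There is a genuine gap, and it sits exactly where you predicted the difficulty would be (Step 3), but the mechanism you propose does not close it. Because $\relu$ has no upper cutoff, choosing $b$ just above $-a^\top x_{i_0}$ only kills elements of $T$ whose projections lie \emph{below} $a^\top x_{i_0}$; any $y\in T$ with $a^\top y > a^\top x_{i_0}$ (unmatched, or matched to a far-away $x_\ell$) contributes positively to $F(T)$ and can completely swamp your signal $\relu(a^\top x_{i_0}+b)\sim c\,\Delta^a_{i_0}$. With the generic direction $a$ of your Step 2 — chosen only so that projected gaps are comparable to true gaps, an event of probability $\approx 1/2$ — there is no reason whatsoever for $x_{i_0}$ to project above the rest of $T$, so the ``hyperplane cut'' never materializes. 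Your opening remark about extreme points of the sphere is the right intuition, but your construction does not implement it: the paper's proof instead draws $a$ from the small cap $\{a\in\mathbb{S}^{d-1}:\ \|a-x_\alpha\|<\Delta_\alpha/4\}$ and $b$ from a window of width $\sim\Delta_\alpha^2$ near $-1$, using the sphere identity $a^\top x + b = 1 + b - \|a-x\|^2/2$ so that the active halfspace is a tiny spherical cap around $x_\alpha$ containing no far or unmatched elements of $T$. This is where the hypothesis $V\subseteq\mathbb{S}^{d-1}$ is actually used, and it is also what produces the exponent: the event has probability $\sim\Delta_\alpha^{d-1}$ (in $a$) times $\sim\Delta_\alpha^{2}$ (in $b$), and the surviving signal is only of order $\Delta_\alpha^{2}$ (the cap height), giving $\Exp[F(S)-F(T)]_+\gtrsim \Delta_\alpha^{d+3}$.

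This quadratic-size signal also forces a different combinatorial lemma than the one you import in Step 1. Since the usable gap is $O(\Delta_\alpha^2)$ rather than $O(\Delta_\alpha)$, the interference from matched pairs inside $\B(x_\alpha,\Delta_\alpha)$ must be bounded by $\Delta_\alpha^2/16$, not by $r\Delta_\alpha$; the paper proves a strengthened claim with the quadratic recursion $\Delta_{v_t}/(16k)\ge (\Delta_{v_{t-1}}/(16k))^2$, which is precisely what yields $\Delta_\alpha\gtrsim \das(S,T)^{2^k}/(16k)^{2^k}$ and hence the final exponent $(d+3)2^k$. Your ``accounting'' paragraph gestures at this ($2^k$ from compounding square-root losses, $d+3$ from dimensional losses) but is inconsistent with your own steps: Step 1 uses the linear claim with only a $k^{O(k)}$ multiplicative loss, Step 2 asserts probability $1/2$ with a fixed $\delta(d,k)$, yet the accounting attributes a $\Theta(t)^{d-1}$ probability loss to Step 2 — and, moreover, an Observation-3-style $\delta$ depends on the particular point configuration, which would not give a constant $C(k)$ depending only on $k$ as the statement requires. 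To repair the proof you would need both the cap-based choice of $(a,b)$ and the quadratic version of the claim; as written, Steps 2–4 do not yield the stated bound.
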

\begin{proof}
    Let $S = \set{x_1, \cdots, x_M}$ and $T = \set{y_1, \cdots, y_N}$, where $M \leq N \leq k$. And according to the definition of $\das(S, T)$ from~\cref{eq:asymmetric_dist} we can equivalently write $\das(S, T) = \min_{\pi \in \S_N}\sum_{i=1}^M\norm{x_i - y_{\pi(i)}}_2$. Let $\pi^*$ be the optimal solution of the above problem, then we can define the map $\tau: [M] \to [N]$ as follows: $\tau(i) = \pi^*(i), \forall i \in [M]$. Essentially, $x_i$ and $y_{\tau(i)}$ are coupled in the optimal alignment. Now, for any given $i \in [M]$, let $\Delta_i := \norm{x_i - y_{\tau(i)}}_2$. Then, consider the open Euclidean ball $\B(x_i, \Delta_i)$. Then if $j \in [N]$ such that $y_j \in \B(x_i, \Delta_i)$ then, $j =\tau(\ell)$ for some $\ell \in [M]$. Otherwise switching from $i \mapsto \tau(i)$ to $i \mapsto j$ reduces the cost, which violates optimal alignment. Let $\Omega_i := \set{\ell \in [M]: y_{\tau(\ell)} \in \B(x_i, \Delta_i)}$. Here, we make the following claim:
    \begin{claim}
    \label{claim:stupid_claim}
    $\exists \alpha \in [M]$ such that $\sum_{j \in \Omega_i}\norm{y_j - x_{\tau^{-1}(j)}}_2 \leq \frac{\Delta_\alpha^2}{16}$ and $\Delta_\alpha \geq \frac{16\das^{\twok}(S, T)}{(16k)^{\twok}}$
    \end{claim}
    \begin{proof}
        Suppose not. WLOG, assume the following order $\Delta_1 \geq \Delta_2 \geq \cdots \geq \Delta_M$. We build the following directed graph $G$ on $[M]$. Start with the node $v_0 = 1$ and add one node at a time. At step $t$, we select $v_t := \arg\max_{v \in \Omega_{v_{t-1}}} \Delta_v$ and add the edge $(v_{t-1}, v_t)$. By the assumption, $\frac{\Delta_{v_t}}{16k} \geq (\frac{\Delta_{v_{t-1}}}{16k})^2 \geq \cdots \geq (\frac{\Delta_{v_0}}{(16k)})^{\twok}$. Since we work with finite $M$, either $G$ is a DAG and this process terminates or $G$ has a cycle. In former case at termination step $T$ we cannot add any more nodes, thus $\Omega_T = \emptyset$, and $T \leq k$. So, $\Delta_T \geq \frac{16k}{k}\frac{\das^{2^T}(S, T)}{(16k)^{2^T}} \geq  \frac{16\das^{\twok}(S, T)}{(16k)^{\twok}}$ and we're done. Otherwise, consider the smallest cycle in $G$. Let this cycle of length $p$ be $\mathcal{C} = \set{v_\beta \to v_{\beta+1} \to \cdots \to v_{\beta + {p-1}} \to v_\beta}$. Then, we have $\norm{x_{v_\beta + i} - y_{\tau(v_\beta + i +1)}}_2 < \norm{x_{v_\beta + i} - y_{\tau(v_\beta + i)}}_2, \forall i \in \set{0, \cdots, p-1}$ and addition of sub-indices is modulo $p$. Thus, we have the following: $\norm{x_{v_\beta} - y_{\tau(v_\beta+1)}}_2 + \norm{x_{v_\beta+1} - y_{\tau(v_\beta+2)}}_2 + \cdots + \norm{x_{v_\beta + p-1} - y_{\tau(v_\beta)}}_2 < \sum_{j=0}^{p-1}\norm{x_{v_\beta+i} - y_{\tau(v_\beta+j)}}_2$. Hence, switching from the coupling $i \mapsto \tau(i)$ to $i \mapsto \tau(i+1)$ for $i \in \set{v_\beta, \cdots, v_{\beta + p-2}}$ and to $v_\beta + p - 1 \mapsto v_\beta$ strictly reduces the cost, but we started with the optimal coupling. This gives a contradiction. 
    \end{proof}
    Consider $\alpha$ as in the above claim. Then we define the region: $\mathcal{A} := \set{a \in \S^{d-1} : \norm{a - x_\alpha} < \frac{\Delta_\alpha}4}$. Then, $\forall a \in \mathcal{A}, \norm{a - x_\alpha}_2 < \norm{a - y_{\tau(\alpha)}}_2$. Thus, $a \tran x_\alpha > a \tran y_{\tau(\alpha)}$. Also, consider the region $\mathcal{B}_a := \set{b \in \R : -1 + \frac{\Delta_\alpha^2}{16} + \frac{\Delta_\alpha^2}{64} + \frac{\norm{a-x}^2}2 \leq b \leq -1 + \frac{\Delta_\alpha^2}8}$. For any given $a \in \mathcal{A}$, we have: $\frac{\norm{a-x}^2}2 \leq \frac{\Delta_\alpha^2}{32}$, thus $-1 + \frac{\Delta_\alpha^2}{16}+ \frac{\Delta_\alpha^2}{64} + \frac{\norm{a-x}^2}2 \leq -1 + \frac{7}{64}\Delta_\alpha^2 \leq -1 + \frac{\Delta_\alpha^2}8$, so $\mathcal{B}_a \neq \emptyset$. Also note that, $\forall a \in \mathcal{A}, b \in \mathcal{B}_a$, we have:
    \begingroup
    \allowdisplaybreaks
    \begin{align*}
        &\sbrac{F(S) - F(T)}_+ \geq \sbrac{\relu\round{a\tran x_\alpha + b} - \round{\sum_{\beta \in \Omega_\alpha}\relu\round{a\tran y_{\tau(\beta)}} - \relu\round{a\tran x_{\beta}}}}_+ \\
        & \geq  \sbrac{\relu\round{a\tran x_\alpha + b} - \sum_{\beta \in \Omega_\alpha}\abs{\relu\round{a\tran y_{\tau(\beta)}} - \relu\round{a\tran x_{\beta}}}}_+\\
        & \geq \sbrac{\relu\round{a\tran x_\alpha + b} - \sum_{\beta \in \Omega_\alpha}\abs{a\tran(x-y)}}_+ \\
        & \geq \sbrac{\relu\round{a\tran x_\alpha + b} - \sum_{\beta \in \Omega_\alpha}\norm{x_\beta - y_{\tau(\beta)}}}_+ \\
        & \geq \sbrac{a\tran x_\alpha + b - \frac{\Delta_\alpha^2}{16}}_+ = \sbrac{1 + b - \frac{\norm{a-x}^2}2 - \frac{\Delta_\alpha^2}{16}}_+ \geq \frac{\Delta_\alpha^2}{64}
    \end{align*}
    \endgroup
    Thus, we have: 
    \begingroup
    \allowdisplaybreaks
    \begin{align*}
        &\Exp_{a, b}\sbrac{F(S) - F(T)}_+ \geq \Exp_{a,b}\sbrac{(F(S) - F(T))\indic{a \in \mathcal{A}, b \in \mathcal{B}_a}}_+\\
        &\geq \frac{\Delta_\alpha^2}{64} \Prob\curly{a \in \mathcal{A}, b \in \mathcal{B}_a} \\
        &= \frac{\Delta_\alpha^2}{64S_{d-1}} \int_{a \in \S^{d-1} : a_1 \geq 1 - \frac{\Delta_\alpha^2}{32}} \frac{1}{2}\round{\frac{3}{64}\Delta_\alpha^2 - (a_1 - 1)} da \\
        & \geq \frac{\Delta_\alpha^4}{2^{13}S_{d-1}} \int_{a \in \S^{d-1} : a_1 \geq 1 - \frac{\Delta_\alpha^2}{32}} da \\
        & = \frac{\Delta_\alpha^4}{2^{13}}\int_{1 - \frac{\Delta_\alpha^2}{32}}^1 (1-x^2)^{\frac{d-3}2} dx \geq \frac{\Delta_\alpha^6}{2^{18}}\round{\frac{\Delta_\alpha^2}{32}}^{\frac{d-3}2}\round{2 - \frac{\Delta_\alpha^2}{32}}^{\frac{d-3}2}\\
        &\geq \frac{\Delta_\alpha^{d+3}}{2^{23}}
    \end{align*}
    \endgroup
    Now, using the bounds from~\ref{claim:stupid_claim}, we get that, $\Delta_\alpha \geq C(k) \das^{\twok}(S,T)$. Thus we have that, $\Exp\sbrac{F(S) - F(T)}_+ \geq C_1(k)\das(S,T)^{(d+3)\twok}$, and done
\end{proof}

\paragraph{Upper Lipschitz property of weakly \mas functions} We now show that, the \mas functions obtained by applying one layer neural network with \hhat activation for each element of the set, followed by sum aggregation are also Upper Lipschitz continuous in expectation with respect to the augmented Wasserstein distance, following the framework of~\citet{davidson2025holderstabilitymultisetgraph}. This shows stability of such \mas embeddings. 
We say that a parametric set function $F: \Pvk \times \Wcal \to \R^m$ is Upper Lipschitz in expectation if $\exists C > 0$ such that:
\begin{equation}
    \Exp_{w\in \Wcal}  \norm{F(S;w)-F(T;w)}_1 \leq C \cdot \W^{(k)}(S, T), \ \forall S, T \in \Pvk
\end{equation}Where $\W^{(k)}(., *)$ is the augmented-Wasserstein metric for sets in $\Pvk$, where a padding $z$ is added for $k - |S|$ times to any multiset $S$ of size less than $k$. We work with a compact ground set $V$, and we pick a padding $z$ that has a positive distance from $V$. For the following proofs, we work with $V \subseteq \R^d$ which is norm bounded by $1$, and we choose the padding element $z \in \R^d$ such that $\norm{z} \geq 3$. For a general $V$ whose norm is bounded by $B$, we can scale the padding $z$ by $B$. With this, we show that, real valued functions $F: \Pvk \to \R$ given by $F(S) = \sum_{x \in S}\sigma\round{\frac{a\tran x - b}c}$ are Upper Lipschitz w.r.t $\W_k$. Now, if we take independent parameteric copies across $m$ output dimensions and if $\mtwo$ is a vector-to-vector Lipschitz function, then functions of the form $\mtwo \circ F$, where $F$ has the form in~\cref{eq:hadamard} are Upper Lipschitz as well.
 This we state in the following theorem:
 \begin{restatable}[$F$ is upper Lipschitz]{theorem}{gupperlipshiz}
     Let $\sigma$ belongs to the class of \hhat activations, and $V \subset \R^d$ be a compact ground set such that $\sup_{v \in V}\norm{v} \leq 1$. We consider $F: \Pvk \to \R, F(S) = \sum_{x \in S}\sigma\round{\frac{a\tran x - b}c}$, with the distributions $a \sim \Unif(\S^{d-1}), b \sim \Unif(-1,1), c \sim \Unif(0, 2)$. Then $\exists \text{ a constant }C > 0$, such that for any $S, T \in \Pvk$:
    \begin{align*}
        \Exp_{a, b, c} \abs{F(S) - F(T)} \leq C  \cdot \W^{(k)}(S,T)
    \end{align*}
\end{restatable}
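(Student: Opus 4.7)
The plan is to reduce the inequality on multisets of possibly different cardinality to a coupling argument on padded multisets of equal size $k$, then establish a Lipschitz-in-expectation bound per matched pair via a change-of-variables trick that trades the small-$c$ amplification for an $L^1$ modulus-of-continuity condition on $\sigma$, and finally absorb a cardinality-mismatch correction into $\W^{(k)}$.

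Pad $S$ and $T$ with copies of $z$ up to cardinality $k$, producing $\tilde S$ and $\tilde T$, and let $\pi$ be an optimal coupling so that $\W^{(k)}(S,T)=\sum_{i=1}^k \norm{\tilde s_i-\tilde t_{\pi(i)}}$. Setting $\tilde F(R):=\sum_{x\in R}\sigma((a\tran x-b)/c)$, one has $F(S)=\tilde F(\tilde S)-(k-|S|)\sigma((a\tran z-b)/c)$ and an analogous identity for $T$, so the triangle inequality gives
$$\abs{F(S)-F(T)} \le \sum_{i=1}^k \abs{\sigma((a\tran \tilde s_i-b)/c)-\sigma((a\tran\tilde t_{\pi(i)}-b)/c)} + \bigl||S|-|T|\bigr|\cdot \abs{\sigma((a\tran z-b)/c)}.$$

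The heart of the argument is the per-pair bound $\Exp_{a,b,c}\abs{\sigma((a\tran x-b)/c)-\sigma((a\tran y-b)/c)}\le C_1\norm{x-y}$ for $x,y\in V\cup\{z\}$. Fix $a,c$ and substitute $u=(a\tran x-b)/c$ in the $b$-integral; the Jacobian factor $c$ coming from $|db|=c\,|du|$ cancels the $1/c$ inside $h:=a\tran(y-x)/c$, yielding $\Exp_b[\cdot]=\tfrac{c}{2}\int_{(a\tran x-1)/c}^{(a\tran x+1)/c}\abs{\sigma(u)-\sigma(u+h)}\,du \le \tfrac{V_\sigma}{2}\abs{a\tran(y-x)}$, where $V_\sigma$ denotes the total variation of $\sigma$ and the last step uses the standard $L^1$-modulus bound $\int_\R\abs{\sigma(u+h)-\sigma(u)}\,du\le V_\sigma\abs h$ (derived, e.g., by Fubini against the signed measure $\sigma'$). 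The right-hand side is $c$-independent, so integrating over $c\sim\Unif(0,2)$ and then over $a\sim\Unif(\S^{d-1})$ (with Cauchy--Schwarz $\Exp_a\abs{a\tran(y-x)}\le\norm{y-x}$) proves the per-pair claim; summing over the $k$ coupled pairs bounds the first term in the display by $\tfrac{V_\sigma}{2}\W^{(k)}(S,T)$.

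For the cardinality-correction term, continuity and compact support of $\sigma$ give $\norm{\sigma}_\infty\le M<\infty$. Any optimal coupling for $\W^{(k)}$ must match $\bigl||S|-|T|\bigr|$ originals with copies of $z$, each contributing at least $\norm{z}-\sup_{v\in V}\norm{v}\ge 3-1=2$, so $\W^{(k)}(S,T)\ge 2\bigl||S|-|T|\bigr|$ and the correction is at most $(M/2)\W^{(k)}(S,T)$. Combining gives $\Exp\abs{F(S)-F(T)}\le \tfrac{1}{2}(V_\sigma+M)\W^{(k)}(S,T)$. The main obstacle is that a generic continuous compactly supported $\sigma$ admits no uniform Lipschitz constant for $u\mapsto\sigma(u/c)$ as $c\downarrow 0$; the saving observation is precisely the Jacobian cancellation in the $b$-substitution, which reduces the requirement on $\sigma$ from pointwise Lipschitzness to the $L^1$-modulus bound above --- automatic whenever $\sigma$ has bounded variation, as is the case for every hat activation of practical interest (including the piecewise-linear family $\sigma_{\alpha,\beta,\gamma}$ of \eqref{eq:simpleHat} and $\tri$). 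An implicit bounded-variation hypothesis on $\sigma$ therefore appears to be needed to obtain the stated inequality.
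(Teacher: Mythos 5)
Your proof is correct, but it takes a genuinely different route from the paper's. The paper normalizes $\supp(\sigma)\subseteq[0,1]$, splits the optimal coupling into ``coupled'' pairs $(x_i,y_{\tau^*(i)})$ and ``isolated'' elements of the larger set matched to the padding, and then bounds each conditional expectation over $(b,c)$ by explicit double integrals (producing the logarithmic terms $\int (a\tran y-b)\ln\frac{2}{a\tran y-b}\,db$), using the pointwise Lipschitz constant $\kappa_2$ of $\sigma$ and the trick $\norm{y_j+a}\le\norm{y_j-z}$ to charge isolated elements to their distance from the padding. You instead pad both multisets to size $k$, and the heart of your argument is the change of variables $u=(a\tran x-b)/c$ in the $b$-integral, whose Jacobian $c$ exactly cancels the $1/c$ rescaling, reducing the per-pair estimate to the translation bound $\int_\R\abs{\sigma(u+h)-\sigma(u)}\,du\le V_\sigma\abs{h}$; the cardinality mismatch is then absorbed via $\norm{\sigma}_\infty$ together with $\W^{(k)}(S,T)\ge 2\,\bigl||S|-|T|\bigr|$ (which follows correctly from $\norm{z}\ge 3$ and $\norm{v}\le 1$, since any bijection between the padded multisets must match at least $\bigl||S|-|T|\bigr|$ padding copies to genuine elements). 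This buys you two things: no normalization of the support of $\sigma$ or rescaling of the $(b,c)$ distributions is needed, and the regularity required of $\sigma$ is only bounded variation rather than Lipschitz continuity. Your closing caveat is well taken and is not a gap relative to the paper: Definition~\ref{def:hhat} only asks for continuity, so the bare statement does not supply the regularity either proof needs; the paper's proof silently invokes a Lipschitz constant $\kappa_2$ (``as $\sigma$ is a hat activation, it's Lipschitz''), whereas you make the weaker BV hypothesis explicit --- in this respect your treatment is the more careful of the two, and it covers all the concrete hat activations used in the paper (the piecewise-linear family of \eqref{eq:simpleHat} and $\tri$).
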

\begin{proof}
Consider $S = \{x_1, \cdots, x_M\}, T = \{y_1, \cdots, y_N\}$. Since $\abs{F(S) - F(T)}$ is symmetric in $S, T$, we consider WLOG that $M \leq N$. As in previous sections, we use $\tau$ to denote an injective function from $[M] \to [N]$, and we use $\Omega_{N, M}$ to denote the set of all injective functions from $[M] \to [N]$. Thus, we can write the augmented Wasserstein distance on $\Pvk$ with padding $z$ as:
\begin{align*}
    \W^{(k)}(S, T) &= \arg\min_{\tau \in \Omega_{N, M}}\curly{\sum_{i=1}^M \norm{x_i - y_{\tau(i)}} + \sum_{j \neq \tau(i)} \norm{y_j - z}}
\end{align*} 
Let $\tau^*$ be the argmin of the above expression that gives the $\W_k(S, T)$ Now, we can write:
\begin{align*}
    & \Exp_{a, b,c}\abs{F(S) - F(T)} = \Exp_{a,b,c} \bigg|\sum_{i=1}^M \sigma\big(\frac{a\tran x_i - b}{c}\big) - \sigma\big(\frac{a\tran y_{\tau^*(i)} - b}{c}\big)  - \sum_{j\neq \tau^*(i)} \sigma\big(\frac{a\tran y_j - b}{c}\big)\bigg| \\
    \leq & \sum_{i=1}^M\Exp_{a, b,c}\abs{\sigma\round{\frac{a\tran x_i - b}{c}} - \sigma\round{\frac{a\tran y_{\tau^*(i)} - b}{c}}} + \sum_{j\neq \tau^*(i)}\Exp_{a, b,c} \abs{\sigma\round{\frac{a\tran y_j - b}{c}}}
\end{align*}

We now separately analyze coupled and un-coupled terms: i.e terms of the form $\Exp \bigg|\sigma\bigg(\frac{a\tran y_j - b}{c}\bigg)\bigg|$ and $\Exp\bigg|\sigma(\frac{a\tran x_i - b}{c}) - \sigma(\frac{a\tran y_{\tau^*(i)} - b}{c})\bigg|$ in the following analysis. For the proof that follows, we consider $\supp(\sigma) \subseteq [0,1]$. For support in $[\gamma_1, \gamma_2]$, the proof follows similarly by sifting the distributions of $b, c$ by $\gamma_1$ and scaling by $\gamma_2 - \gamma_1$.

\paragraph{Computing $\mathbf{\Exp \abs{\sigma\round{\frac{a\tran y_j - b}{c}}}}$:}

Note that, $\sigma(\frac{a\tran y_j - b}{c}) = 0, \forall b \geq a\tran y_j$. Also, for a given $b \in (-1, a\tran y_j)$ we have that:
\begingroup
\allowdisplaybreaks
\begin{align*}
    \forall c \in (0, a\tran y_j- b), \frac{a\tran y_j - b}{c} \geq 1 \implies \sigma\bigg(\frac{a\tran y_j - b}{c}\bigg) = 0
\end{align*}
\endgroup
Thus, conditioned on a given $a \in \S^{d-1}$, we can write the conditional expectation over $b, c$ as follows:
\begingroup
\allowdisplaybreaks
\begin{align*}
    &\Exp_{b,c} \curly{\sigma\big(\frac{a\tran y_j - b}{c}\big)\bigg| a}  \\
    = & \frac{1}{4}\int_{b=-1}^{a\tran y_j} \int_{c=a\tran y_j - b}^2 \bigg|\sigma\bigg(\frac{a\tran y_j - b}{c}\bigg) - \sigma(0)\bigg|\dif c \dif b,  \text{ using } \sigma(0) = 0\\
    \leq & \frac{\kappa_2}{4}\int_{b=-1}^{a\tran y_j} \int_{c=a\tran y_j - b}^2 \bigg(\frac{a\tran y_j - b}{c}\bigg) \dif c \dif b, \text{as }\sigma\text{ is a \hhat activation, it's Lipschitz}\\
    = & \frac{\kappa_2}{4}\int_{b=-1}^{a\tran y_j} (a\tran y_j - b)\ln\round{\frac{2}{a\tran y_j - b}} \dif b = \frac{\kappa_2}{4}\int_{a\tran y_j + 1}^0 z\ln\round{\frac 2z} (-\dif z), z:= a\tran y_j - b\\
    = &\frac{\kappa_2}{4}\int_0^{a\tran y_j + 1} z\ln\round{\frac 2z} \dif z = \frac{\kappa_2}{4}\cdot\frac{a\tran y_j + 1}2\round{\frac{a\tran y_j + 1}2 + \round{a\tran y_j + 1}\ln\round{\frac{2}{a\tran y_j + 1}}}\\
    \leq & \frac{\kappa_2}4 \cdot (a\tran y_j + a\tran a) = \frac{\kappa_2}4 \abs{a\tran(y_j + a)} \leq \frac{\kappa_2}4 \norm{y_j + a}\leq \frac{\kappa_2}4\norm{y_j - z}
\end{align*}
\endgroup
Now, taking an expectation over $a \sim \Unif(\S^{d-1})$, the inequality is preserved and we get that, for "isolated " $y_j$'s, $\Exp\sigma\round{\frac{a\tran - b}c} \leq \frac{\kappa_2}4\norm{y_j - z}$

Thus for "isolated" elements from the larger set that get coupled with the padded element $z$ while computing augmented-$\W^{(k)}$, we have shown that they are upper bounded by their diatcne from the padding $z$. Hence, it remains to show a similar Lipschitz upper bound for the elements $x_1, \cdots x_m \in S_1$ which have a corresponding $y_{i(1)}, \cdots, y_{i(m)} \in S_2$ in the optimal coupling\\

\paragraph{Computing $\mathbf{\Exp\bigg|\sigma(\frac{a\tran x_i - b}{c}) - \sigma(\frac{a\tran y_{\tau^*(i)} - b}{c})\bigg|}$: \\}

    Now, we have, for any $i \in [M]$, and a given $a \in \S^{d-1}$ the conditional expectation $\Exp_{b,c}\bigg|\sigma(\frac{a\tran x_i - b}{c}) - \sigma(\frac{a\tran y_{\tau^*(i)} - b}{c})\bigg|$ is symmetric in $x_i$ and $y_{\tau^*(i)}$. Thus, WLOG we can assume that, $a\tran x_i \geq a\tran y_{\tau^*(i)}$. Note that we then have:
    \begingroup
    \allowdisplaybreaks
    \begin{align*}
        &\bigg|\sigma(\frac{a\tran x_i - b}{c}) - \sigma(\frac{a\tran y_{\tau^*(i)} - b}{c})\bigg| \\
        =&\begin{cases}
            &\bigg|\sigma(\frac{a\tran x_i - b}{c})\bigg|, \forall b \in (a\tran y_{\tau^*(i)}, a\tran x_i), \forall c \in (a\tran x_i - b, 2)\\
            & \bigg|\sigma(\frac{a\tran x_i - b}{c}) - \sigma(\frac{a\tran y_{\tau^*(i)} - b}{c})\bigg|,  \forall b \in (-1, a\tran y_{\tau^*(i)}), \forall c \in (a\tran y_{\tau^*(i)} - b, 2)\\
            & 0, \text{ otherwise}
        \end{cases}
    \end{align*}
    \endgroup
Hence, conditioned on a specific $a \in \S^{d-1}$, we can write the conditional expectation over $b, c$ as:
\begingroup
\allowdisplaybreaks
\begin{align*}
    & \Exp_{b,c}\bigg|\sigma(\frac{a\tran x_i - b}{c}) - \sigma(\frac{a\tran y_{\tau^*(i)} - b}{c})\bigg| = \int_{b=a\tran y_{\tau^*(i)}}^{a\tran x_i} \int_{c = a\tran x_i - b}^2\sigma\round{\frac{a\tran x_i - b}c} \dif c \dif b\\
    & + \int_{b=-1}^{a\tran y_{\tau^*(i)}} \int_{c = a\tran y_{\tau^*(i)} - b}^2 \bigg|\sigma(\frac{a\tran x_i - b}{c}) - \sigma(\frac{a\tran y_{\tau^*(i)} - b}{c})\bigg| \dif c \dif b  \\
    \leq & \frac{\kappa_2}4\int_{b=a\tran y_{\tau^*(i)}}^{a\tran x_i} (a\tran x_i - b)\ln\round{\frac{2}{a\tran x_i - b}} \dif b \\
    &+\frac{\kappa_2}4 \abs{a\tran(x_i - y_{\tau^*(i)})}\int_{b=-1}^{a\tran y_{\tau^*(i)}} \ln\round{\frac{2}{a\tran y_{\tau^*(i)} - b}} \dif b  \\
    \leq & \frac{\kappa_2}4\round{\int_0^{a\tran(x_i - y_{\tau^*(i)})}z\ln\round{\frac 2z}\dif z + \abs{a\tran(x_i - y_{\tau^*(i)})}\int_0^{a\tran y_{\tau^*(i)}+1}\ln\round{\frac 2z}\dif z}\\
    \leq & \frac{\kappa_2}4\round{\abs{a\tran(x_i - y_{\tau^*(i)})}+2\abs{a\tran(x_i - y_{\tau^*(i)})}} < \kappa_2 \abs{a\tran(x_i - y_{\tau^*(i)})}\\
    \leq & \kappa_2 \norm{x_i - y_{\tau^*(i)}}, \text{ by Cauchy Schwarz}
\end{align*}
\endgroup
Thus taking one more expectation over $a \sim \S^{d-1}$, and summing over all $i \in [M]$, we get:
\begin{align*}
    \Exp_{a, b, c}\sum_{i=1}^M\bigg|\sigma(\frac{a\tran x_i-b}{c}) - \sigma(\frac{a\tran y_{\tau^*(i)}-b}{c})\bigg| \leq \kappa_2\sum_{i=1}^M \norm{x_i - y_{\tau^*(i)}}
\end{align*}
Thus, combining our computations for "isolated" $y_j$'s and "coupled" $(x_i, y_{\tau^*(i)})$'s we get that:
\begin{align*}
    \Exp_{a, b,c} |F(S) - F(T)| \leq \frac{\kappa_2}4 \sum_{j \neq \tau^*(i)}\norm{y_j - z}  + \kappa_2\sum_{i=1}^M \norm{x_i - y_{\tau^*(i)}} \leq \kappa_2 \W^{(k)}(S, T)\\
\end{align*}
And our proof of $F$ being upper Lipschitz is complete. 
\end{proof}

\subsection{Universal approximators with \mas functions}
\universal*
\begin{proof}
\label{universality}
A \mas function is in particular invertible, and therefore we can write $f(S)=f\circ F^{-1} \circ F(S) $. On the image of $F$ we can define the function $M=f\circ F^{-1}$ which is a vector-to-vector functions, and it is monotone on the image of $F$, because if $v=F(S), u=F(T)$ for some sets $S,T$, and if $v\leq u$, then by separability $S \subseteq T$, and therefore  
$$M(v)=f\circ F^{-1}(v)= f(S)\leq f(T)=f\circ F^{-1}(u)=M(u) $$

Finally, we need to show that $M$ can be extended to a montone function on all of $\RR^m$. we accomplish this by defining 
$$M(v)=\max\{M(u)| \quad u \leq v,  u \in \mathrm{Image}(F) \} .$$
\end{proof}

\section{Details about our model}
\label{app:model}
\subsection{Architecture details:}
In all 4 of \our models, we use an elementwise neural network $NN_\theta: \R^d \to \R^m$, where the ground set $V \subseteq \R^d$ and the output in in $\R^m$. We follow that with a sum aggregation, followed by a monotone vector-to-vector neural netwrok $M_{2, \phi}: \R^m \to \R^m$. For the embedding Neural Network, we use a 1 hidden layer NN with $\relu$ activation in the hidden layers and $\relu$ or \hhat activation in the output, depending on whether we're using \relumas or \hhat activation based \our. To make $M_{2, \phi}$ monotone, we use non-negative weights by taking the absolute value of parameters before applying the linear transformation, and use all monotonic activations like $\relu$ in the intermediate layers. Now, as discussed before, we give details of our model \integralmas.
\subsection{\integralmas}
In this appendix we design universal approximators for the class of Hat functions to learn $\sigma$. We parametrize For that, we derive equivalent conditions on the derivatives of $\sigma$ (per embedding dimension) and use the techniques of learning functions by modelling the derivatives using neural networks and then using a numerical integration as in \citet{wehenkel2021unconstrainedmonotonicneuralnetworks}. 
\begin{lemma}
\label{lemma:hat_derivative}
    If $\sigma: \R \to \R_{\geq 0}$ belongs to the hat activation class , then $\sigma\dash$ satisfies the following conditions for some positive constants $c, C > 0$ and $\alpha \in \R, \beta > 0, \gamma \in (0,1)$:
    \begin{enumerate}
    \item $\supp\round{\sigma\dash} \subseteq [\alpha, \alpha+\beta]$
    \item $\int_\alpha^{\alpha+\beta}\sigma\dash(x) \dif x = 0$
    \item $\sigma\dash(x) \leq C, \forall x \in \R$ 
    \item $\sigma\dash(x) \geq c, \forall x \in (\alpha, \alpha+\gamma \cdot \beta)$ 
    \end{enumerate}
The above 4 conditions, along with $\sigma(\alpha) = 0$ are the necessary-sufficienet conditions that characterizes the hat activation class.
\end{lemma}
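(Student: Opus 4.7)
The plan is to establish the two directions of the equivalence separately. In what follows, set $[\alpha,\alpha+\beta]$ to be the convex hull of $\mathrm{supp}(\sigma)$, so $\alpha$ is the left endpoint and $\alpha+\beta$ the right endpoint of the compact support. I will implicitly assume $\sigma$ is piecewise continuously differentiable (this is the natural regularity needed for $\sigma'$ to make sense and for the integration-by-parts identities below), consistent with the class Theorem~\ref{thm:F_lower_holder} already imposes on hat activations.

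\textbf{Necessity (hat $\Rightarrow$ conditions 1--4).} Each condition will be read off directly. For (1), if $x\notin[\alpha,\alpha+\beta]$, then $\sigma\equiv 0$ in an open neighborhood of $x$, so $\sigma'(x)=0$ at every differentiability point, giving $\mathrm{supp}(\sigma')\subseteq[\alpha,\alpha+\beta]$. For (2), by continuity of $\sigma$ together with compact support we have $\sigma(\alpha)=\sigma(\alpha+\beta)=0$, so by the fundamental theorem of calculus $\int_\alpha^{\alpha+\beta}\sigma'(x)\,dx=\sigma(\alpha+\beta)-\sigma(\alpha)=0$. For (3), piecewise continuous differentiability on the compact interval $[\alpha,\alpha+\beta]$ yields a uniform bound $C=\sup|\sigma'|<\infty$. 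For (4), since $\sigma$ is continuous, non-negative, not identically zero, and vanishes at $\alpha$, continuity implies there is a largest $\gamma\in(0,1)$ such that $\sigma$ is strictly increasing on $(\alpha,\alpha+\gamma\beta)$. Piecewise continuous differentiability on this closed sub-interval gives a positive lower bound $c$ on $\sigma'$ there (after shrinking $\gamma$ slightly if needed to avoid any junction points).

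\textbf{Sufficiency (conditions 1--4 plus $\sigma(\alpha)=0$ $\Rightarrow$ hat).} Define $\sigma(x):=\int_\alpha^x\sigma'(t)\,dt$. Continuity of $\sigma$ follows from Lipschitz continuity via the bound $|\sigma(x)-\sigma(y)|\leq C|x-y|$ from (3). Compact support follows because (1) forces $\sigma(x)=\sigma(\alpha)=0$ for $x\leq\alpha$ and $\sigma(x)=\sigma(\alpha+\beta)$ for $x\geq\alpha+\beta$, while (2) forces $\sigma(\alpha+\beta)=0$; together these give $\mathrm{supp}(\sigma)\subseteq[\alpha,\alpha+\beta]$. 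Non-triviality follows from (4): for $x\in(\alpha,\alpha+\gamma\beta)$, $\sigma(x)\geq c(x-\alpha)>0$, so in particular $\sigma(\alpha+\gamma\beta)\geq c\gamma\beta>0$.

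\textbf{Main obstacle.} The delicate step is establishing non-negativity of $\sigma$ on the decreasing tail $[\alpha+\gamma\beta,\alpha+\beta]$. Conditions (1)--(4) as stated upper-bound $\sigma'$ by $C$ but do \emph{not} lower-bound it on this tail, so in principle $\sigma'$ could take very large negative excursions and drag $\sigma$ below zero before the integral constraint (2) forces it back to $0$ at $\alpha+\beta$. I would handle this either by sharpening (3) to a two-sided bound $|\sigma'|\leq C$ (so that $\sigma(x)\geq\sigma(\alpha+\gamma\beta)-C(x-\alpha-\gamma\beta)$ and choosing $\gamma$ small keeps this positive until hitting the right endpoint), or by adding a matching lower-bound condition of the form ``$\sigma'\geq -C'$ on an interval $(\alpha+\gamma\beta,\alpha+\beta)$ enforcing monotone decrease.'' Under either strengthening, non-negativity on the tail follows from $\sigma(\alpha+\beta)=0$ together with monotone (or Lipschitz-controlled) behavior, completing the characterization. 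I expect the published statement either implicitly intends the two-sided version of (3), or uses a slightly refined condition list so that the implication goes through cleanly.
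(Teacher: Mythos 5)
Your necessity argument for condition (4) has a genuine gap. You try to obtain the uniform lower bound $\sigma'(x)\ge c$ on $(\alpha,\alpha+\gamma\beta)$ from continuity plus a claimed ``strict increase'' of $\sigma$ near $\alpha$, but neither step holds. The hat class of Definition~\ref{def:hhat} (non-negative, compactly supported, continuous, not identically zero) does not force $\sigma$ to be strictly increasing on any initial interval of its support, and even when it is, strict increase does not give a positive uniform lower bound on $\sigma'$: the derivative can tend to $0$ at the left endpoint, e.g.\ $\sigma(x)=(x-\alpha)^2$ near $\alpha$, or globally $\sigma(x)=\sin^2(\pi x)$ on $[0,1]$ and $0$ elsewhere, which is a piecewise-$C^1$ hat function for which no $c>0$ and $\gamma\in(0,1)$ satisfy condition (4). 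So condition (4) is not a consequence of the hat definition plus piecewise continuous differentiability; the paper's proof obtains it by additionally importing the hypothesis $\lim_{t\to\alpha^+}\sigma'(t)>0$ (the same hypothesis appearing in Theorem~\ref{thm:F_lower_holder}), setting $2c:=\lim_{t\to\alpha^+}\sigma'(t)$ and using continuity of $\sigma'$ near $\alpha$ to get $\sigma'\ge c$ on some $(\alpha,\alpha+\gamma\beta)$. You import the piecewise-$C^1$ regularity but not this limit condition, and without it the claim is false, not merely unproved. Your treatment of conditions (1)--(3) coincides with the paper's (vanishing of $\sigma$ at the endpoints, the fundamental theorem of calculus, boundedness of $\sigma'$).

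Your sufficiency direction follows the paper's route (recover $\sigma$ by integrating $\sigma'$ from $\alpha$; compact support from (1), (2) and $\sigma(\alpha)=0$; nontriviality from (4); Lipschitz continuity from the derivative bound), and your ``main obstacle'' is a legitimate observation: conditions (1)--(4) do not by themselves force $\sigma\ge 0$ on the tail $[\alpha+\gamma\beta,\alpha+\beta]$, and the paper's own proof simply never addresses non-negativity (it verifies only compact support, nontriviality and Lipschitzness), so the ``characterization'' as stated is too weak on this point. However, your first proposed repair does not close it: a two-sided bound $|\sigma'|\le C$ only yields $\sigma(x)\ge \sigma(\alpha+\gamma\beta)-C\,(x-\alpha-\gamma\beta)$ with $\sigma(\alpha+\gamma\beta)\ge c\gamma\beta$, which can become negative before $\alpha+\beta$ unless $\gamma\ge C/(C+c)$; $\gamma$ is given by the conditions rather than chosen, and shrinking it makes the bound worse. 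Your second suggestion (imposing $\sigma'\le 0$ on the tail), or simply adding non-negativity of $\sigma$ to the list of conditions, does fix the statement. In summary: the sufficiency half is essentially the paper's argument together with a correctly identified deficiency that the paper shares, but the necessity half for condition (4) cannot be derived the way you attempt and requires the extra limit hypothesis the paper uses.
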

\begin{proof}
    We first prove that, if $\sigma$ belongs to the hat activation class as defined in \ref{def:hhat}, then conditions 1-4 are satisfied. Note that, if $\sigma$ is a \hhat activation, then $\sigma$ has compact support in some $[\alpha, \alpha + \beta]$ by deifnition, and it's piecewise continuously differentiable. Thus, outside $[\alpha, \alpha + \beta]$, we have $\sigma \equiv 0 \implies \sigma\dash \equiv 0$. Hence, $\supp(\sigma\dash)\subseteq [\alpha, \alpha + \beta]$. We also have that, $0 = \sigma(\alpha) = \sigma(\alpha + \beta) \implies \sigma(\alpha+\beta) - \sigma(\alpha) = 0 \implies \int_\alpha^{\alpha+\beta}\sigma\dash(t) \dif t = 0$. Finally, $\sigma$ being Lipschitz is equivalent to $\abs{\sigma\dash(t)}\leq C$ for some $C > 0$ and by continuity of $\sigma\dash$ in $(\alpha, \alpha + \beta)$ we must have that $2c := \lim_{t\to\alpha^+}\sigma\dash(t) > 0 \implies \sigma\dash(t) \geq c, \forall t \in (\alpha , \alpha + \gamma \cdot \beta)$ for some $\gamma \in (0,1)$.  Thus, conditions 1-4 are implied by $\sigma$ belonging to the \hhat activation class. 
    
    Moreover, if conditions 1-4 are satisfied, and $\sigma(\alpha) = 0$, then condition-2 implies $\sigma(\alpha+\beta)=0$. This, along with condition-1 implies that $\sigma(t) = 0, \forall t \not \in [\alpha, \alpha + \beta]$. Condition-4 immediately implies that $\sigma(\alpha + \gamma \cdot \beta) \geq c \cdot \gamma\beta > 0$, thus $\sigma \not \equiv 0$. On the other hand, we get that $\sigma' \leq C$ implies by LMVT that, $\abs{\sigma(x) - \sigma(y)} \leq C \cdot \abs{x-y}$. Hence, $\sigma$ belongs to the class of \hhat activations following the definition from \ref{def:hhat}.
\end{proof}
\paragraph{Neural Parmetrization of \integralmas :}Consider $h_\theta^1(.), h_\phi^2(.)$ to be non-negative + bounded and bounded fully connected Neural Networks respectively of one hidden layer each. Also consider the trainable support parameters to be $\bm{\alpha} \in \R^m, \bm{\beta} \in \R^m_+, \bm{\gamma} \in (0,1)^m$ where $m$ is the output dimension and $\bm{\alpha}, \bm{\beta}, \bm{\gamma}$ are the support parameters, aggregated for all output dimensions.
Now, we define an integral based model of parametric hat functions.
Let $\Theta = (\theta, \phi, \alpha, \beta, m)$ be the parameter space.
\begin{equation}
\label{eq:neural_hatfn}
\begin{split}
    \sigma_\Theta(x) &= \int_\alpha^x h_\theta^1(z)\indic{\alpha \leq z\leq\alpha + \gamma\beta} \dif z - \sbrac{\frac{\int_\alpha^{\alpha+\gamma\beta}h_\theta^1(z)dz}{\int_{\alpha+\gamma\beta}^{\alpha + \beta}h_\phi^2(z)dz}}\round{\int_{\alpha + \gamma\beta}^{x}h_\phi^2(z)\indic{\alpha + \gamma\beta\leq z\leq\alpha + \beta}dz}   
\end{split}
\end{equation}
In the above formulation, in addition to the support parameters we also learn the function itself rather than performing a linear interpolation, this makes the above class a universal approximator of hat functions.
\begin{restatable}[Universal Hat approximator]{lemma}{lemma:universal_hat_approximator}
    The family $\sigma_\Theta$ defined in~\cref{eq:neural_hatfn} is an universal approximator of the class of \hhat functions having support in $[\alpha, \alpha + \beta]$ with $\sigma\dash(t) > 0, \forall t \in (\alpha, \alpha + \gamma \beta)$
\end{restatable}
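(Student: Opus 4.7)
The plan is to reduce universal approximation of the (unimodal) \hhat\ class to standard universal approximation of non-negative bounded functions by shallow neural networks, exploiting the fact that a hat function is determined by its derivative. Given a target $\sigma$ in the claimed subclass (support $[\alpha,\alpha+\beta]$, peak implicitly at $\alpha+\gamma\beta$, with $\sigma'>0$ on $(\alpha,\alpha+\gamma\beta)$ and correspondingly $\sigma'\le 0$ a.e.\ on $(\alpha+\gamma\beta,\alpha+\beta)$ since $\sigma$ must return to $0$ while remaining non-negative), I would first set
\[
g_1(z) := \sigma'(z)\,\mathbf{1}_{(\alpha,\alpha+\gamma\beta)}(z), \qquad g_2(z) := -\sigma'(z)\,\mathbf{1}_{(\alpha+\gamma\beta,\alpha+\beta)}(z),
\]
both non-negative and bounded by condition 3 of Lemma~\ref{lemma:hat_derivative}, and satisfying $\int g_1 = \int g_2 =: I$ by condition 2. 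The hypothesis $\sigma'>0$ on $(\alpha,\alpha+\gamma\beta)$ gives $I>0$. Then $\sigma(x)=\int_\alpha^x g_1(z)\,dz - \int_{\alpha+\gamma\beta}^x g_2(z)\,dz$.

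Next, I would invoke a standard universal approximation theorem (Hornik/Cybenko) for one-hidden-layer networks, composed with a non-negative bounded output squashing (e.g.\ sigmoid, or $\min(\mathrm{ReLU}(\cdot),M)$ for a sufficiently large $M$), to conclude that for any $\varepsilon>0$ there exist parameters $\theta,\phi$ with $h_\theta^1,h_\phi^2\ge 0$ bounded such that
\[
\|h_\theta^1-g_1\|_{L^1(\alpha,\alpha+\gamma\beta)}<\varepsilon,\qquad \|h_\phi^2-g_2\|_{L^1(\alpha+\gamma\beta,\alpha+\beta)}<\varepsilon.
\]
In particular $|\!\int h_\theta^1 - I|<\varepsilon$ and $|\!\int h_\phi^2 - I|<\varepsilon$, so whenever $\varepsilon<I/2$ the denominator $\int h_\phi^2$ is bounded away from $0$ and the rescaling ratio $R:=\int h_\theta^1/\int h_\phi^2$ in \eqref{eq:neural_hatfn} satisfies $|R-1|=O(\varepsilon/I)$.

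Finally, I would bound $|\sigma_\Theta(x)-\sigma(x)|$ pointwise by the triangle inequality. The first term of $\sigma_\Theta$ differs from $\int_\alpha^x g_1$ by at most $\|h_\theta^1-g_1\|_{L^1}<\varepsilon$ uniformly in $x$. For the second term, I would write
\[
R\!\int_{\alpha+\gamma\beta}^x h_\phi^2 - \int_{\alpha+\gamma\beta}^x g_2 = (R-1)\!\int_{\alpha+\gamma\beta}^x h_\phi^2 + \int_{\alpha+\gamma\beta}^x (h_\phi^2 - g_2),
\]
which is bounded by $|R-1|(I+\varepsilon)+\varepsilon = O(\varepsilon)$ uniformly in $x$. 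Summing yields $\|\sigma_\Theta-\sigma\|_\infty=O(\varepsilon)$, giving uniform universal approximation.

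The main obstacle, and the reason for the rescaling factor in \eqref{eq:neural_hatfn}, is guaranteeing that $\sigma_\Theta$ is itself a valid \hhat\ function (in particular $\sigma_\Theta(\alpha+\beta)=0$) even though generic NN approximants $h_\theta^1,h_\phi^2$ need not satisfy $\int h_\theta^1=\int h_\phi^2$. Handling this cleanly requires the $L^1$-approximation to bring both integrals close to $I$ before the ratio $R$ can be controlled; this is precisely where the strict positivity hypothesis $\sigma'(t)>0$ on $(\alpha,\alpha+\gamma\beta)$ is used, as it forces $I>0$ and prevents degeneracy of the denominator. A secondary subtlety is that $g_1,g_2$ live only in $L^\infty$ (they are bounded and piecewise continuous, not necessarily continuous), so $L^1$-approximation by NNs — rather than uniform approximation — is the natural regime; the reduction above is written precisely so that $L^1$ closeness of the derivatives suffices for $L^\infty$ closeness of the integrated functions.
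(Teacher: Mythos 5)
Your proof is correct and follows essentially the same route as the paper's: approximate the derivative of $\sigma$ separately on $(\alpha,\alpha+\gamma\beta)$ and $(\alpha+\gamma\beta,\alpha+\beta)$ in $L^1$ via a one-hidden-layer universal approximation theorem, use the integral-ratio rescaling in \eqref{eq:neural_hatfn} to force $\sigma_\Theta$ to vanish past $\alpha+\beta$, and convert $L^1$ closeness of derivatives into uniform closeness of the integrated functions by the triangle inequality. If anything, your explicit control of the rescaling ratio $R$ (bounding $|R-1|$ via $I>0$ and $\varepsilon<I/2$) is cleaner than the paper's step, which states the second $L^1$ approximation ``for any constant $c$'' and only later instantiates $c$ as the network-dependent ratio.
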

\begin{proof}
    We show that the given parametric model is an universal approximator of \hhat functions using their equivalent formulation in Lemma~\ref{lemma:hat_derivative}. Let $\sigma_1$ be $\sigma$ restricted to the interval $[\alpha, \alpha + \gamma\beta]$ and $\sigma_2$ be $\sigma$ bestricted to the interval $[\alpha + \gamma\beta, \alpha + \beta]$. Using Theorem-1 from~\citet{lu2017expressivepowerneuralnetworks}, we can get $\relu$ neural network $h_\theta^1: \R \to \R$ and $h_\phi^2: \R \to \R$ of width 5 each such that approximates the derivatives of the restricted functions $\sigma_1\dash$ and $\sigma_2\dash$. Thus, the following hold for any $\epsilon > 0$ and any constant $c \in \R$:
    \begin{equation}
    \label{eq:univ_approx}
        \int_\alpha^{\alpha + \gamma\beta}\abs{\sigma_1\dash(z) - h_\theta^1(z)}\dif z \leq \epsilon \text{ and } \int_{\alpha+ \gamma\beta}^{\alpha + \beta}\abs{\sigma_2\dash(t) + c \cdot h_\phi^2(t)}\dif t \leq \epsilon
    \end{equation}
    We now observe that, if $x \in (-\infty, \alpha)$, both the indicator functions: $\indic{\alpha \leq z\leq\alpha + \gamma\beta}$ and $\indic{\alpha + \gamma\beta\leq z\leq\alpha + \beta}$ as in the integrands of ~\cref{eq:neural_hatfn} evaluate to 0, thus $\sigma_\Theta$ exactly coincides with $\sigma$. On the other hand, if $x \in (\alpha + \beta, \infty)$ then we would have the first integral of \ref{eq:neural_hatfn} evaluate to $\int_\alpha^{\alpha +\gamma\beta} h_\theta^1(z) \dif z$ (due to presence of the indicator $\indic{\alpha \leq z \leq \alpha +\gamma\beta}$ and the second expression evaluates to $- \sbrac{\frac{\int_\alpha^{\alpha+\gamma\beta}h_\theta^1(z)dz}{\int_{\alpha+\gamma\beta}^{\alpha + \beta}h_\phi^2(z)dz}}\round{\int_{\alpha + \gamma\beta}^{\alpha + \beta}h_\phi^2(z)dz} = -\int_\alpha^{\alpha+\gamma\beta}h_\theta^1(z)dz$. Thus, both the expressions sum to $0$ and $\sigma_\Theta$ coincides with $\sigma$ for $x \in (\alpha +\beta, \infty)$ as well. 
    
    Now we shall show that, for any $x \in (\alpha , \alpha + \beta)$ we must also have $|\sigma_\Theta(x) - \sigma(x)| \leq \epsilon$, implying convergence in $\sup$ norm. We consider the following cases:
    \begin{itemize}
        \item \textbf{Case-I:} When $x \in (\alpha, \alpha +\gamma\beta)$, we have that:
        \begingroup
        \allowdisplaybreaks
        \begin{align*}
            &\abs{\sigma(x) - \sigma_\Theta(x)} \\
            = & \abs{\int_\alpha^x \sigma_1\dash(z) \dif z - \int_\alpha^x h_\theta^1(z) \dif z}\\
            \leq & \int_\alpha^x \abs{\sigma_1\dash(z) -  h_\theta^1(z)} \dif z, \text{by traingle ineq.}\\
            \leq & \int_\alpha^{\alpha + \beta\gamma} \abs{\sigma_1\dash(z) -  h_\theta^1(z)} \dif z, \text{by non-negativity of integrand}\\
            \leq & \epsilon, \text{by ~\cref{eq:univ_approx}}
        \end{align*}
        \endgroup
        \item \textbf{Case-II:} When $x \in (\alpha + \gamma\beta, \alpha +\beta)$ we have:
        \begingroup
        \allowdisplaybreaks
        \begin{align*}
            &\abs{\sigma(x) - \sigma_\Theta(x)} \\
            = & \abs{\int_\alpha^x \sigma\dash(z) \dif z - \int_\alpha^{\alpha +\gamma\beta} h_\theta^1(z) \dif z - \int_{\alpha+\gamma\beta}^x h_\phi^2(z) \dif z}\\
            = & \abs{\int_\alpha^{\alpha+\gamma\beta}\round{\sigma_1\dash(z) - h_\theta^1(z)}\dif z +\int_{\alpha +\gamma\beta}^x\round{\sigma_2\dash(z) + \sbrac{\frac{\int_\alpha^{\alpha+\gamma\beta}h_\theta^1(z)dz}{\int_{\alpha+\gamma\beta}^{\alpha + \beta}h_\phi^2(z)dz}} h_\phi^2(z)}\dif z}\\
            \leq & \int_\alpha^{\alpha+\gamma\beta}\abs{\sigma_1\dash(z) - h_\theta^1(z)} \dif z + \int_{\alpha+\gamma\beta}^x \abs{\sigma_2\dash(z) + \sbrac{\frac{\int_\alpha^{\alpha+\gamma\beta}h_\theta^1(z)dz}{\int_{\alpha+\gamma\beta}^{\alpha + \beta}h_\phi^2(z)\dif z}} h_\phi^2(z)}\dif z, \text{tri. ineq.}\\
            \leq & \epsilon + \int_{\alpha+\gamma\beta}^{\alpha + \beta} \abs{\sigma_2\dash(z) + \sbrac{\frac{\int_\alpha^{\alpha+\gamma\beta}h_\theta^1(z)dz}{\int_{\alpha+\gamma\beta}^{\alpha + \beta}h_\phi^2(z)\dif z}} h_\phi^2(z)}\dif z, \text{by~\ref{eq:univ_approx} on 1st term \& tri. ineq. on 2nd}\\
            \leq & 2\epsilon
        \end{align*}
        \endgroup
        where last inequality is obtained  $c= \frac{\int_\alpha^{\alpha+\gamma\beta}h_\theta^1(z)dz}{\int_{\alpha+\gamma\beta}^{\alpha + \beta}h_\phi^2(z)\dif z}$ in the universal approximator for $\sigma_2\dash$ in ~\cref{eq:univ_approx}. Thus, just we see that, for any $\epsilon > 0$ and $\alpha \in \R, \beta > 0, \gamma \in (0, 1)$ we can find a universal approximator for the \hhat function class
    \end{itemize}
\end{proof}
\paragraph{Parametrizing $\alpha, \beta, \gamma$} We have provided an universal approximator for the \hhat function class given $\alpha, \beta, \gamma$ but we'd also want to have $\alpha, \beta, \gamma$ as learnable parameters. This is valid for both \hatmas and \integralmas, as in both cases we seek to learn the support parameters. Now, there is no constraint on $\alpha$, so we can directly initialize $\bm{\alpha} \in \R^{3m}$ and learn it through gradient descent. However, we have a constraint on $\beta$ that $\beta > 0$. For this, we first initialize $\bm{\beta_0}\in \R^m$ and we obtain $\bm{\beta}\in \R^m$ by appling co-ordinatewise positive transformation on $\bm{\beta_0}$. This we do in two ways, depending on which works better on a given task: \textbf{(I)} by simply taking $\bm{\beta_0}\mapsto \abs{\bm{\beta_0}}$, where the absolute value is taken pointwise. \textbf{(II)}by taking the pointwise transformation $\bm{\beta_0}\mapsto \text{ELU}\round{{\bm{\beta_0}}; \upsilon} +\upsilon$, where $\text{ELU}\round{\cdot; \upsilon}$ is the Exponential Linear Unit with hyperparameter $\upsilon > 0$. Note that, the ELU function is given by: $\text{ELU}\round{x; \upsilon} = \begin{cases}
    x , \text{if } x > 0\\
    \upsilon\round{e^x - 1}, \text{if }x \leq 0
\end{cases}$. For parametrizing $\gamma \in (0,1)$, we similarly initialize $\bm{\gamma_0}\in\R^m$ and then apply a pointwise transfprmation that takes each co-ordinate to $(0,1)$. We use tempered sigmoid function: $\bm{\gamma_0} \mapsto \text{Sigmoid}\round{\tau \cdot \bm{\gamma_0}}$ where $\tau > 0$ is a hyperparameter

\paragraph{Designing soft indicator functions:} In the integrands for \integralmas, we have some indicator functions of the form $\indic{a \leq \cdot \leq b}(x) = \begin{cases}
    1, \text{if } x \in [a, b]\\
    0, \text{o.w}
\end{cases}$. We want these indicator functions to have non zero gradients as we intend to learn the "support parameters" $a, b$ of such indicator function. If we chose binary indicators, then that is not differentiable everywhere and have gradient $0$ in most places, making it difficult to learn $a, b$. Thus, we design a "soft" indicator function with the help of tempered sigmoid with hyperparameter $\tau_S$ as follows: 
\begin{align*}
    \indic{a \leq \cdot \leq b}(x) = \text{Sigmoid}\round{\tau_S \cdot (x - a)}\cdot \text{Sigmoid}\round{\tau_S \cdot (b-x)}
\end{align*}

\subsection{Making \our Mini Batch Consistent}
There has been a chain of works to make set-based neural models to be mini batch consistent in~\cite{MBC1_slotset, MBC2} which have applications in large scale machine learning. The goal of these Mini Batch Consistent (MBC) works is to design a set function $F$, which, if applied on any subsets from partitions of a set $S$ and pooled through an activation $g$, should return $F(S)$. Thus, at a high level, MBC functions analyse and design functions for the following batch consistency condition: $g(F(S_1), \cdots, F(S_n)) = F(S)$. MBC methods achieve this by imposing certain restrictions on attention-based set architectures. Similar to this, our work also puts restrictions on the inner embedding transformation and the outer vector-to-vector transformations. It is thus a natural question to ask, if \our can be made minibatch consistent. Since set containment is the major theme of our work, with potential applications in retrieval and recommendation systems, processing huge sets at once can be a key bottleneck, and it would be nice to make \our process over mini-batches and then design some aggregation mechanism to make the aggregated embedding equal to the embedding of the entire set.

If we consider a partition of a given set $S = \bigcup_{i=1}^k S_i$, where $S_i$'s are all disjoint, then $\our (S_i) \leq \our(S)$, by monotonicity. But, we can try to find an appropriate pooling mechanism over the subset embeddings to make the aggregated embedding same as that of the entire set. For example, if the outer transformation $M_{\theta_2}$, as in~\cref{eq:masnet} was invertible, and then we may define a particular permutation-invariant pooling function $g$ such that, $g(X_1, \cdots, X_k) = M_{\theta_2}\round{\sum_{i=1}^kM_{\theta_2}^{-1}\round{X_i}}$, then we'd have: $g\round{\our(S_1), \cdots, \our(S_k)} = M_{\theta_2}\round{M_{\theta_2}^{-1}\round{\our(S_i)}} = M_{\theta_2}\round{\sum_{i=1}^k\sum_{x \in S_i}\sigma \circ M_{\theta_1}(x)} = M_{\theta_2}\round{\sum_{x \in S} \sigma \circ M_{\theta_1}(x)} = \our (S)$. his is irrespective of how we partition $S$ into $S_1, \cdots S_k$. And this would enable us to process the subsets independently and aggregate them accordingly. 

For our purposes, we have needed the outer transformation $M_{\theta_2}$ to be a vector-to-vector monotone function in~\cref{eq:masnet}. But that doesn't guarantee injectivity or invertibility. Thus, we need to restrict the choice of $M_{\theta_2}$ to be monotone and invertible for \our to be MBC consistent. One possible choice is to use ideas from relevant works like~\cite{InvertibleNN, wehenkel2021unconstrainedmonotonicneuralnetworks} to design monotone and invertible neural networks, and analyse the expressibility properties of the resulting set model. This forms an interesting direction for future work. 

\section{Additional details about experiments and more experiments}
\label{app:addl-details}
\subsection{Hardware details:} All experiments were performed in a compute server, running the OS of GNU Linux Version 12, equipped with a 16 core Intel(R) Xeon(R) Gold 6130 CPU @ 2.10GHz CPU architecture and equipped with a cluster of 6 NVIDIA RTX A6000 GPUs with a memory of 49GB each.

\subsection{Dataset details and more experiments}
We now give an account of preparation details of each dataset and accounts on more experiments on each.

\paragraph{Synthetic datasets and related experiments: }

We begin with a controlled synthetic setting where we generate pairs $(S, T)$ such that $k_S < k_T$. 

Each pair is labeled either positive or negative. For positive examples, we first sample $k_T$ vectors in $\R^d$ from $\Normal(0, \Id)$, which form the set $T$. A subset of size $k_S$ is then uniformly sampled from $T$ to obtain $S$. For negative examples, we again sample $T$ as above, but then independently draw $k_S$ vectors from $\R^d$ to form $S$ .

To simulate a real world scenerio of information loss, we inject gaussian noise into the vectors of $S$ to get $S'$ but keep the boolean label unchanged.
\begin{figure}[h]
    \centering
    \begin{minipage}{0.48\textwidth}
        \centering
        \includegraphics[width=\textwidth]{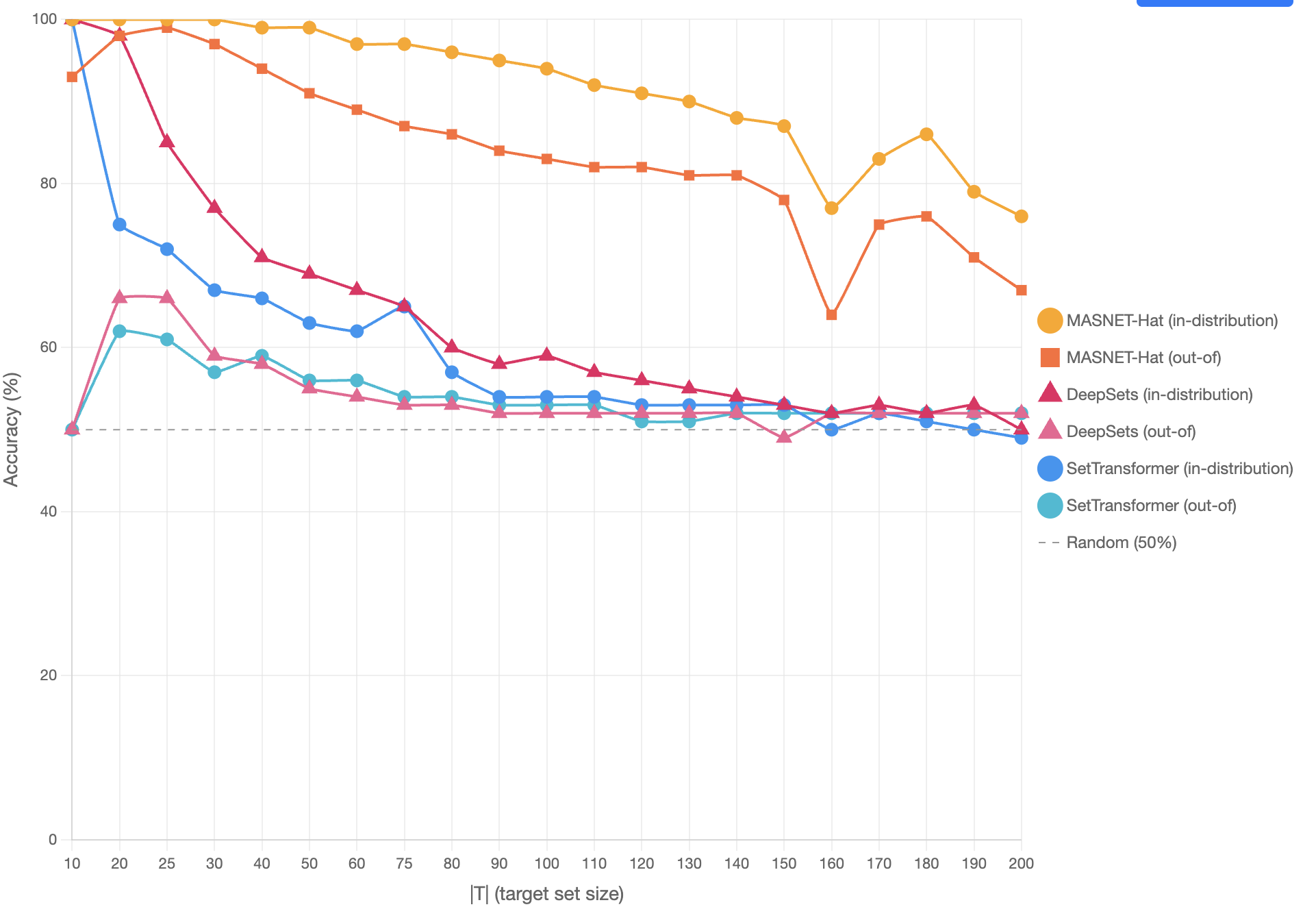}
        \caption{Acc vs $|T|$ for all models $\geq 2$ layer MLP}
        \label{fig:deep1}
    \end{minipage}
    \hfill
    \begin{minipage}{0.48\textwidth}
        \centering
        \includegraphics[width=\textwidth]{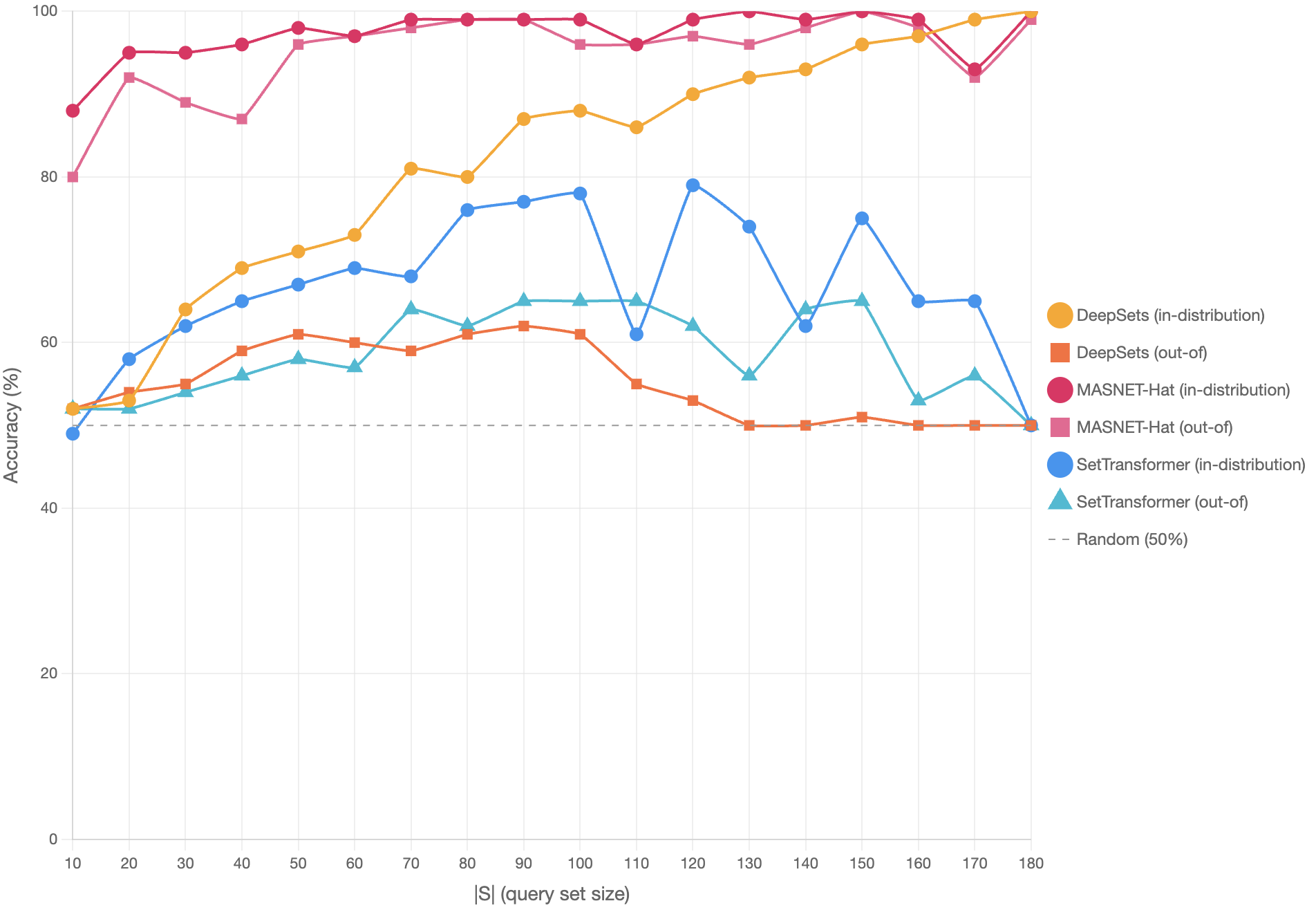}
        \caption{Acc vs $|S|$ for all models $\geq 2$ layer MLP}
        \label{fig:deep2}
    \end{minipage}
\end{figure}

On this, we provide two more plots: Figs. \ref{fig:deep1} and \ref{fig:deep2} that compares the accuracies of $\our$ vs baselines, by varying the gap between target set size $|T|$ and query set size $|S|$. Note that the more this gap is, the more it becomes challenging to separate two non-subsets. In all cases, the embedding MLP $M_{\theta_1}$ from~\cref{eq:masnet} is comprising of $\geq 2$ layers.

\paragraph{Text-based datasets and related experiments:}

We evaluate our models on three text based real-world datasets: MSWEB, MSNBC, and the Amazon Baby Registry datasets, all of which reflect naturally occurring set containment structures coming out of user behavior in websites and recommendation engines.

\paragraph{MSWEB and MSNBC:} These datasets consist of user activity logs from \textit{www.microsoft.com} and \textit{www.msnbc.com}, respectively. Each user session is treated as a bag of page identifiers, which are embedded into 768-dimensional vectors using a pre-trained BERT model. We construct query-target pairs \((S, T)\) by sampling \(S\) from a session \(V\), and letting \(T = V \setminus S\). A pair is labeled \(1\) if \(S \subseteq T\), and \(0\) otherwise. To simulate noisy real-world conditions, Gaussian noise is added to each element in \(S\) while preserving the label. The positive-to-negative ratio was taken to be 0.1, which is the number of labels 1 to labels 0 in the entire dataset.

\paragraph{Amazon Baby Registry:} This dataset contains subsets of products selected by customers, each tagged by category (e.g., “toys” or “feeding”). Product descriptions are embedded using BERT. We filter out sets of size \(<2\) or \(>30\), and for each valid subset \(S\), we generate \(T \supseteq S\) of size 30 by sampling additional items from the same category. For negative samples, \(T\) is sampled randomly and verified to satisfy \(S \not\subseteq T\).

\begin{wraptable}[7]{r}{0.6\textwidth}
\vspace{-15pt}
\centering
\small
\maxsizebox{\hsize}{!}{\tabcolsep 1pt 
\begin{tabular}{l|c|c|c|c}
\hline
\textbf{Model}  & \textbf{Bedding} & \textbf{Feeding} & \textbf{MSWEB} & \textbf{MSNBC} \\
\hline
DeepSets        & \underline{0.91}           & 0.90          & 0.93          & \textbf{0.97}           \\
SetTransformer  & \underline{0.91}           & 0.90          & 0.94          & \underline{0.96}         \\
FlexSubNet      & 0.90           & 0.91          & 0.94          & 0.94              \\
Neural SFE      & 0.91           & 0.91          & 0.91          & 0.93              \\
\hline
\relumas        & \textbf{0.95}  & \textbf{0.93} & \textbf{0.97} & \textbf{0.97}          \\
\hatmas         & \underline{0.91}  & \underline{0.92} & \underline{0.95} & \textbf{0.97}           \\
\hline
\end{tabular}}
\caption{Set containment in text datasets with 90:10 ratio}
\label{tab:noisy_accuracy_9010}
\end{wraptable}

Noise is added to \(S\) in both cases. This models scenarios such as predicting whether a set of displayed products \(T\) includes a customer’s interest \(S\). On these datasets, we provide three sets of additional experiments. In the first table, we perform the same set-containment task as done in the main text, but with a different negative-to-positive ratio of $90:10$. 

Note that, due to the inductive bias of monotonicity, all the positive  examples are correctly classified by design is case of \our. Thus, it only has to learn to identify the to separate the negative examples. Thus, if the test set has a higher proportion of negative examples, then it throws a model a toughter task to learn. This is shown in the following Table~\ref{tab:noisy_accuracy_9010}. Other than the modified class ratio, we provide two additional tables: in the first one, we compare shallow(1 layer) vs deep ($\geq 2$ layers) embedding MLP $M_{\theta_1}$ in \our , which is shown in Table~\ref{tab:shallow_deep}. White noise of std 0.1 added for \inexact set containment.

\begin{wraptable}{r}{0.6\textwidth}
\centering
\scriptsize
\renewcommand{\arraystretch}{1.1}
\setlength{\tabcolsep}{3pt}
\begin{tabular}{l|c|c|c|c|}
    \toprule
    \textbf{Model} & \textbf{bedding} & \textbf{feeding} & \textbf{MSWEB} & \textbf{MSNBC} \\
    \midrule
    Shallow-ReLU & 0.69 & 0.36 & 0.92 & 0.93 \\
    Shallow-\trimas & 0.83 & 0.76 & 0.94 & 0.95 \\
    Shallow-\hatmas & 0.89 & 0.77 & \emph{0.97} & 0.96 \\
    Shallow-\integralmas & 0.90 & 0.91 & 0.92 & 0.95 \\
    \hline
    (Deep)\relumas & \textbf{0.95} & \textbf{0.93} & \emph{0.97} & \textbf{0.97} \\
    Deep-\trimas & 0.92 & \emph{0.93} & \textbf{0.98} & 0.96 \\
    Deep-\hatmas & 0.91 & 0.92 & 0.95 & \emph{0.97} \\
    Deep-\integralmas & \emph{0.93} & 0.92 & 0.96 & \emph{0.97} \\
    \bottomrule
\end{tabular}
\caption{Comparison of shallow vs. deep variants of \our across datasets.}
\label{tab:shallow_deep}
\end{wraptable}

\paragraph{Ablation studies on text datasets:}
From the definition of \our in~\cref{eq:masnet}, we see that, the formulation os quite similar to Deepsets, which is of the form $F(S) = \mtwo\round{\sum_{x \in S}\relu(a\tran\mone(x) + b)}$. As shown earlier, using $\relu$ in the last layer of an elementwise function with a universally approximating $\mone$ is an instance of \our. 

\begin{wraptable}{r}{0.6\textwidth}
\centering
\scriptsize
\renewcommand{\arraystretch}{1.1}
\setlength{\tabcolsep}{3pt}
\begin{tabular}{l|c|c|c|c|}
    \toprule
    \textbf{Model} & \textbf{bedding} & \textbf{feeding} & \textbf{MSWEB} & \textbf{MSNBC} \\
    \midrule
    DeepSets & \textit{0.91} & 0.90 & 0.93 & \textit{0.97} \\
    DeepSets, monotone $\mtwo$ & 0.90 & 0.91 & 0.94 & 0.93 \\
    \relumas & \textbf{0.95} & \textbf{0.93} & \textbf{0.97} & \textbf{0.97} \\
    \hatmas & 0.91 & \textit{0.92} & \textit{0.95} & \textit{0.97} \\
    \hatmas (No division) & 0.87 & 0.92 & 0.92 & 0.94 \\
    \bottomrule
\end{tabular}
\caption{Ablation study }
\label{tab:ablation_study}
\end{wraptable}

But it is different from DeepSets in a key points, namely: $\bullet$ For set containment tasks, we don't have an outer $\mtwo$, which DeepSets have; for universal approximation tasks, we use a monotonially increasing $\mtwo$ that we enforce by taking positive weights and increasing activation functions. Also, for \hhat function based \our models, $\bullet$ we use a re-parametrization in which we perform a division based scaling. We now show the effect of the outer monotonic $\mtwo$ on DeepSets and division based re-parametrization on \hatmas in the  ablation study in table~\ref{tab:ablation_study}. White noise of std 0.1 added for \inexact containment.

\paragraph{Pointcloud datasets and related experiments}

ModelNet40 \cite{modelnetwu20153dshapenetsdeeprepresentation} is a benchmark dataset of 12,311 CAD models across 40 object categories, with each object represented as a 3D point cloud. We frame our task as checking if a given pointcloud $S$ is a segmenet of a target pointcloud $T$.

\begin{wraptable}[8]{r}{0.40\textwidth}
\vspace{-10pt}
\centering
\maxsizebox{\hsize}{!}{\tabcolsep 10pt
\begin{tabular}{l|c|c|c}
\hline
$|S|\rightarrow$ & $ 128$ & $ 256$ & $ 512$ \\
\hline
DeepSets        & \underline{0.90} & 0.90 & 0.91 \\
SetTransformer  & \underline{0.90} & \underline{0.91} & 0.91 \\
FlexSubNet      & 0.84 & 0.88 & 0.92 \\
Neural SFE      & 0.89 & 0.89 & 0.90 \\
\hline
\relumas        & \textbf{0.91}    & \textbf{0.93}    & \textbf{0.98} \\
\hatmas         & 0.87 & \underline{0.91} & \underline{0.94} \\
\hline
\end{tabular}}
\captionsetup{font=small}
\caption{Performance on Point cloud for different values of $|S|$, for 90:10 class ratio}
\label{tab:pointcloud_9010}
\end{wraptable}

Firstly, we choose an object from an object category $C_1$, and randomly sample $1024$ points from that object to get the target point cloud $T$. To obtain a positive sample (i.e true subset) from $T$, we first sample a random center point from $T$, and then extract $S$ using a hybrid approach: selecting the nearest point to the center, a few local neighbors via k-NN, and the rest via importance-weighted sampling (inverse-distance from center with noise). This makes sure that we're selecting a true local region from the point cloud, which is in-line with the actual task of detecting whether a given segment of an object is contained in a target object. For a negative sample(non-subset), we sample $S$ from an object of a different category $C_2$.

\begin{wraptable}[8]{r}{0.40\textwidth}
\vspace{-3pt}
\centering
\maxsizebox{\hsize}{!}{\tabcolsep 10pt
\begin{tabular}{l|c|c|c}
\hline
$|S|\rightarrow$ & $ 128$ & $ 256$ & $ 512$ \\
\hline
DeepSets        & \underline{0.89} & 0.90 & 0.90 \\
SetTransformer  & \underline{0.90} & \underline{0.91} & 0.91 \\
\hline
\relumas        & 0.87    & \textbf{0.91}    & \textbf{0.93} \\
\hatmas         & 0.85 & \underline{0.91} & 0.89 \\
\trimas        & 0.83    & 0.93    & 0.87 \\
\integralmas   & 0.87 & 0.90 & \underline{0.92} \\
\hline
\end{tabular}}
\captionsetup{font=small}
\caption{Point cloud with DGCNN for different values of $|S|$.}
\label{tab:pointcloud_dgcnn}
\end{wraptable}

We first give the accuracy table for PointNet encoder, but with the modified negative-to-positive class ratio of 90:10 to make the task harder for \our, which correctly classifies the actual subsets by the induictive bias of monotonicity. The numbers with the modified class-ratio are given in the following table of Tab~\cref{tab:pointcloud_9010}:

We also give the accuracy numbers for DGCNN encoder, in Table~\ref{tab:pointcloud_dgcnn}

\paragraph{Details on Linear assignment problem}
In this problem, we are given a positive matrix $M\in \R^{n \times m},n\leq m$, where $M_{i,j} $ 
represents the salary that a 'worker' will be paid to do a 'job' $j$. The goal of the task is to maximize the average salary obtained by all workers. This is done by finding the optimal assignment $\pi \in S_{n,m} $ which maps a worker $i\in [n]$ to a 'job' $\pi(i)\in [m]$, where by construction each worker can be mapped to at most one job, so $\pi(i)\in \{0,1\}$. This gives us the following maximization problem:
\begin{align*}
  \textstyle F(M)= \frac{1}{n}\max_{\pi \in S_{n,m}}\sum^{n}_{i=1} M_{i,\pi(i)}.
\end{align*}
Thinking of the matrix $M$ as a set of $m$ columns $M=[M_1,\ldots,M_m] $, we see that the function $F$ is permutation invariant and monotone. Accordingly, our goal will be to evaluate our \our model and baselines.

\end{document}